\useunder{\uline}{\ul}{}
\useunder{\uline}{\ul}{}
\theoremstyle{plain}
\newtheorem{theorem}{Theorem}[section]
\theoremstyle{definition}
\theoremstyle{remark}
\icmltitlerunning{PepCompass: 
Navigating Peptide Embedding Spaces Using Riemannian Geometry}
\begin{document}

\twocolumn[
  \icmltitle{PepCompass: 
Navigating Peptide Embedding Spaces Using Riemannian Geometry}




  \begin{icmlauthorlist}
    \icmlauthor{Marcin Możejko}{uw}
    \icmlauthor{Adam Bielecki}{uw}
    \icmlauthor{Jurand Prądzyński}{uw}
    \icmlauthor{Marcin Traskowski}{uw}
    \icmlauthor{Antoni Janowski}{uw}
    \icmlauthor{Karol Jurasz}{uw}
    \icmlauthor{Michał Kucharczyk}{uw}
    \icmlauthor{Hyun-Su Lee}{upen}
    \icmlauthor{Marcelo Der Torossian Torres}{upen}
    \icmlauthor{Cesar de la Fuente-Nunez}{upen}
    \icmlauthor{Paulina Szymczak}{hm}
    \icmlauthor{Michał Kmicikiewicz}{hm}
    \icmlauthor{Ewa Szczurek}{hm}
  \end{icmlauthorlist}

  \icmlaffiliation{uw}{Faculty of Mathematics, Informatics, and Mechanics, University of Warsaw, Warsaw, Poland}
  \icmlaffiliation{upen}{University of Pennsylvania, Philadelphia United States of America}
  \icmlaffiliation{hm}{Institute of AI for Health, Hemholtz Center Munich, Ingolstädter Landstraße 1, 85764, Neuherberg, Germany}

  \icmlcorrespondingauthor{Ewa Szczurek}{ewa.szczurek@helmholtz-munich.de}

  \icmlkeywords{Machine Learning, ICML}

  \vskip 0.3in
]



\printAffiliationsAndNotice{\icmlEqualContribution}

\begin{abstract}
Antimicrobial peptide discovery is challenged by the astronomical size of peptide space and the relative scarcity   of active peptides.
While generative models provide latent maps of this space, they typically ignore decoder-induced geometry and rely on flat Euclidean metrics, making exploration distorted and inefficient. 
Existing manifold-based approaches  assume fixed intrinsic dimensionality, which fails for real peptide data.
We introduce \textbf{PepCompass}, a geometry-aware framework  based on a \textbf{Union of $\kappa$-Stable Riemannian Manifolds} that captures local decoder geometry while maintaining computational stability. 
PepCompass performs global interpolation via \textbf{Potential-minimizing Geodesic Search (PoGS)} to bias discovery toward promising seeds, and enables local exploration through \textbf{Second-Order Riemannian Brownian Efficient Sampling} and \textbf{Mutation Enumeration in Tangent Space}, which together form \textbf{Local Enumeration Bayesian Optimization (LE-BO)}. 
PepCompass achieves a 100\% \textit{in-vitro} validation rate:  PoGS identifies four novel seeds and LE-BO optimizes them into 25 highly active, broad-spectrum peptides, demonstrating that geometry-informed exploration is a powerful paradigm for antimicrobial peptide design.
\end{abstract}

\section{Introduction}

Efficient exploration of peptide space is notoriously difficult. At the global level, there are more than $3.3 \times 10^{32}$ combinatorially possible amino acid sequences of length at most 25. At the local level, each peptide of length 25 has nearly 1000 neighbors within an edit radius of one. 
Moreover, only a small fraction of amino acid sequences correspond to antimicrobial peptides (AMPs), which have high activity against bacteria~\cite{Szymczak2023AIAMPDiscovery, Szymczak2025AIDrivenAMP}. 
This extreme combinatorial complexity renders AMP discovery by brute-force exploration intractable. To address this challenge, we turn to one of the most prolific inventions of humankind: maps.

Since the dawn of civilization, maps have provided a structured way to support both local and global navigation, driving scientific discovery. In modern machine learning, latent-space generative models such as VAEs, GANs, WAEs, and normalizing flows \citep{BondTaylor2021ComparativeReview} enable  building continuous latent representations---maps---of peptides. Such maps have already facilitated the discovery of promising new AMPs~\citep{Szymczak2023HydrAMP, VanOort2021AMPGANv2,Wang2022LSSAMP, Das2020CLaSS}. The standard workflow assumes that once the model is trained, its latent space together with the decoder properly models a set of valid, synthetizable peptides. The latent space is typically chosen to be $\mathbb{R}^d$ with a flat Euclidean metric, enabling direct application of existing exploration and optimization algorithms. However, such flat representations suffer from a significant flaw: they ignore the differential geometry induced by the decoder, leading to distortions in distances. 

Typical approaches attempting to circumvent this problem assume the \textit{manifold hypothesis}~\citep{Bengio2012RepresentationLearning} and use the pullback metric~\citep{arvanitidis2018latent}. However, recent work \citep{ brown2022uomh, LoaizaGanem2024ManifoldSurvey, wang2024cw} has shown that the manifold hypothesis does not withstand empirical scrutiny for image data, where sets of images are better modeled as unions or CW-complexes of manifolds with varying low dimensionality (we refer to Appendix~\ref{app:related-work} for Related Work). We show that peptide spaces suffer from a similar issue and introduce decoder-derived \textbf{Union of $\kappa$-Stable Riemannian Manifolds} $\mathbb{M}^{\kappa}$, which captures both the complex structure of peptide space and its local geometry, with computational stability controlled by a parameter $\kappa$. Intuitively, we cut the globally distorted map into a set of charts that
enable efficient and distortion-free exploration and optimization.


Building upon the union-of-manifolds structure, we introduce \textbf{PepCompass}, a geometry-informed framework for peptide exploration and optimization at both global and local levels. At the \emph{global level}, we propose Potential-minimizing Geodesic Search (\textbf{PoGS}), which models geodesic curves between known prototype peptides to identify promising seeds for further optimization (Figure~\ref{fig:fig1}A). We represent geodesics as energy-minimizing curves in peptide space and augment them with a potential function encoding 
antimicrobial activity. 
This biases exploration toward regions not only 
similar to the starting prototypes but also exhibiting higher 
activity. 
In doing so, our method extends standard local analogue search around a single prototype into a bi-prototype, controllable 
regime. 

For \emph{local} search on a single manifold from the family $\mathbb{M}^{\kappa}$, we designed two geometry-informed approaches: Second-Order Riemannian Brownian Efficient Sampling (\textbf{SORBES})  and  {Mut}ation {E}numeration in {T}angent Space (\textbf{MUTANG}). {SORBES} is a provably convergent, second-order approximation of the Riemannian Brownian motion \citep{schwarz2022randomwalk,herrmann2023subriemannian}, serving as a Riemannian analogue of local Gaussian search. 
{MUTANG} addresses the discrete nature of peptide space by reinterpreting the local tangent space not as continuous vectors but as discrete mutations, directly corresponding to amino-acid substitutions. This reinterpretation provides both interpretability and efficiency, enabling  enumeration of   a given peptide's neighbours.
We further combine SORBES and MUTANG into an iterative \textbf{Local Enumeration} procedure that 
 {densely populates} the neighbourhood of a given peptide with valid, diverse neighbours. Finally, integrating this enumeration with a Bayesian optimization scheme yields an efficient Local Enumeration Bayesian Optimization procedure (\textbf{LE-BO}; see Figure~\ref{fig:fig1}B,C). 

\textit{In vitro}  microbiological assays demonstrated unprecedented, 100\%  success rate  of PepCompass in AMP optimization. Using PoGS 
we derived four peptide seeds, all of which showed significant antimicrobial activity. Further optimization of these seeds with {LE-BO} yielded 25/25 highly active peptides with  broad-spectrum activity, including activity against multi-resistant bacterial strains.
Code is available at~\url{https://anonymous.4open.science/r/pep-compass-2ABF}.

\begin{figure*}[ht]
\begin{center}
\includegraphics[width=\textwidth]{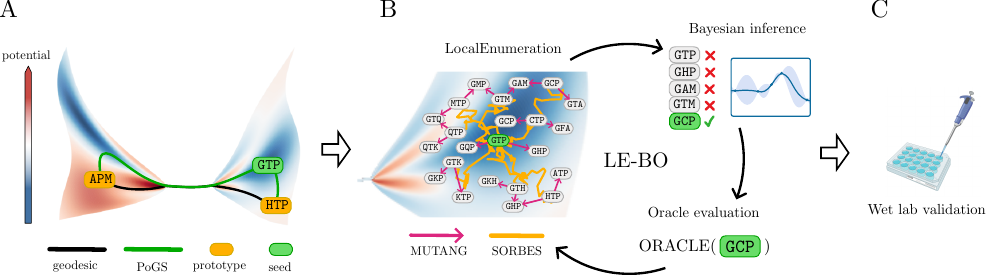}
\end{center}
\caption{\textbf{PepCompass overview.}}
\label{fig:fig1}
\end{figure*}

\section{Methods}
\subsection{Background}
\label{sec:decoder-geometry}

Let $\operatorname{Dec}_{\theta} \in C^\infty:\mathcal{Z}\to\mathcal{X}$ be the deterministic decoder mapping latent vectors $z\in\mathbb{R}^d$ to position-factorized peptide probabilities $\operatorname{Dec}_{\theta}(z)\in\mathbb{R}^{L\times A}$ (with $L$ the maximum peptide length and $A$ the size of the amino acid alphabet $\mathcal{A}$ extended with a padding token $pad$, and $\mathbb{R}^{L \times A}$ - set of matrices of shape $(L, A)$). Define
\[
X = \operatorname{Dec}_{\theta}(\mathcal{Z}) \subset \mathbb{R}^{L\times A},
\]
\[
\mathtt{p}(z) = \operatorname{argmax}(\operatorname{Dec}_{\theta}(z), \operatorname{dim}=1)\in\mathcal{A}^{L},
\]
where $\mathtt{p}(z)$ denotes the decoded peptide sequence. Intuitively, $X$ is a continuous probabilistic approximation of the peptide space (with $\mathcal{Z}$ as its map), from which the concrete peptides are decoded back using the $\mathtt{p}$ operator. 
For clarity, we drop explicit parameter dependence and simply write $\operatorname{Dec}$ instead of $\operatorname{Dec}_{\theta}$. We  additionally introduce $\hat{\operatorname{Dec}}:\mathcal{Z}\to\mathbb{R}^{LA}$ and $\hat{X}=\hat{\operatorname{Dec}}(\mathcal{Z})$, i.e. the flattened versions of $\operatorname{Dec}$ and $X$.

A common approach is to equip $\mathcal{Z}$ with the standard Euclidean inner product $\langle \cdot,\cdot\rangle_{\mathbb{R}^{d}}$, and use the associated Euclidean distance as a base for  exploration. However, this ignores the geometry induced by the decoder, which, under the manifold hypothesis, can be accounted for using the \emph{pullback metric} \citep{Bengio2012RepresentationLearning, arvanitidis2018latent}.

\paragraph{Pullback metric}
Under the manifold hypothesis, $\hat{\operatorname{Dec}}$ is full rank (i.e.\ $\operatorname{rank}J_{\hat{\operatorname{Dec}}}(z)=d$ for all $z\in\mathcal{Z}$,  where $J_{\hat{\operatorname{Dec}}}(z)\in\mathbb{R}^{(LA)\times d}$ is the decoder Jacobian), and $(\mathcal{Z},G_{\hat{\operatorname{Dec}}})$ is a $d$-dimensional Riemannian manifold~\citep{doCarmo_RiemannianGeometry} where $G_{\hat{\operatorname{Dec}}}$ is a natural, decoder-induced pullback metric on $\mathcal{Z}$ given by:
\begin{equation}
  G_{\hat{\operatorname{Dec}}}(z) = J_{\hat{\operatorname{Dec}}}(z)^\top J_{\hat{\operatorname{Dec}}}(z) \in \mathbb{R}^{d \times d},
\end{equation}

for $z \in \mathcal{Z}$. To simplify notation, let us set $G_{\operatorname{Dec}} = G_{\hat{\operatorname{Dec}}}$. Now, for a tangent space $T_{z}\mathcal{Z}$ at a point $z$ and tangent vectors $u,v\in T_{z}\mathcal{Z}$,
\begin{equation}
  \langle u, v\rangle_{z}^{\operatorname{Dec}} = u^\top G_{\operatorname{Dec}}(z)\, v.
\end{equation}
Note that
$\hat{\operatorname{Dec}}:(\mathcal{Z},G_{\operatorname{Dec}})\to (\hat{X},\langle \cdot,\cdot\rangle_{\mathbb{R}^{LA}})$
is an isometric diffeomorphism, meaning that distances measured in the latent space locally match distances in ambient space.

\paragraph{When the manifold hypothesis fails:  union of manifolds}

Previous work demonstrates that the manifold hypothesis often fails for complex data such as images~\citep{brown2022uomh, LoaizaGanem2024ManifoldSurvey, wang2024cw}, and the data is better represented as unions of local manifolds of varying dimension, typically lower than that of the latent space. However, the previous methods defined the submanifolds based on pre-specified datasets and could not generalize to new data.

\subsection{Union of $\kappa$-Stable Riemannian Manifolds}\label{subsection-kappa-stable}

Assuming that the manifold hypothesis is indeed violated and that  a given generative model has learned to faithfully capture the lower-dimensional structure in the data, it should be reflected in the decoder having rank strictly smaller than the latent dimensionality. We verified this phenomenon for antimicrobial peptides data in two state-of-the-art latent generative models~\citep{Das2018PepCVAE, Szymczak2023HydrAMP} (see Appendix~\ref{appendix:rank}). This implies that the $G_{Dec}$ is not of full rank, and consequently the pair $(\mathcal{Z}, G_{\operatorname{Dec}})$ does not constitute a Riemannian manifold.

To address this, we equip each point $z\in\mathcal{Z}$ with a local, potentially lower-dimensional manifold, which we further enrich with a Riemannian structure from the pullback metric. In contrast to previous methods, we adapt a decoder-dependent approach in the submanifold definition, enabling generalization to any  point encoded in the latent space. Namely, we decompose $\mathcal{Z}$ as a union of locally $\kappa$-stable Riemannian submanifolds ($\kappa\geq0)$
\[
\mathbb{M}^{\kappa}=\{M_{z}^{\kappa} : z\in\mathcal{Z}\},\qquad 
M_{z}^{\kappa} = \big(W_{z}^{\kappa}, G_{\operatorname{Dec}}\big),
\]
where each $W_{z}^{\kappa}\ni z$ is an open affine submanifold of $\mathcal{Z}$ of \emph{maximal dimension} (denoted as $k_{z}^{\kappa}$ and refered to as $\kappa-$stable dimension), such that the pullback metric $G_{\operatorname{Dec}}$ restricted to $W_z^\kappa$, denoted $G_{\operatorname{Dec}}|_{W_z^\kappa}$,  has full rank and satisfies the \textit{$\kappa$-stability condition}
\[
\inf_{\substack{v \in T_{z}W_{z}^{\kappa} \\ \langle v, v\rangle_{\mathbb{R}^{d}} = 1}} \langle v, v\rangle_{z}^{\operatorname{Dec}} > \kappa^2.
\]

Intuitively, $W_z^\kappa$ removes degenerated (non-active) directions of the decoder, ensuring that all eigenvalues of $G_{\operatorname{Dec}}|_{W_z^\kappa}$ are bounded below by $\kappa$. This guarantees numerical stability for geometric computations requiring inversion of $G_{\operatorname{Dec}}$. Note, that similarly to the full-rank case, $\hat{\operatorname{Dec}_{|W_{z}^{\kappa}}}:\left(W_{z}^{\kappa}, G_{\operatorname{Dec}}\right)\to (\hat{\operatorname{Dec}}(W_{z}^{\kappa}),\langle \cdot,\cdot\rangle_{\mathbb{R}^{LA}})$ is an isometric diffeomorphism.

The explicit SVD-based construction of $W_z^\kappa$ is deferred to Appendix~\ref{ap:kappa-construction}. For stability near boundaries, we also use contracted domains $W_z^\kappa(\alpha)=\left\{z + \alpha (v - z): v\in W_{z}^{\kappa}\right\}$ and $M_z^\kappa(\alpha)=\left(W_{z}^{\kappa}(\alpha), G_{\operatorname{Dec}}\right)$, with $\alpha\in(0,1)$.

\paragraph{On the usage of Euclidean metric in the ambient space}
Note that in the construction of the $\kappa$-stable manifolds, we endow the decoder output space, that is position-factorized peptide probabilities with the standard Euclidean metric. Although this choice may appear simplistic, it induces a mutationally uniform geometry that naturally penalizes non-local insertions and deletions, which are more likely to induce substantial changes in peptide structure. In Appendix~\ref{appendix:euclidean-rationale} we provide a formal derivation of these properties and discuss potential extensions to biologically informed ambient metrics.

\subsection{SORBES - Second-Order Riemannian Brownian efficient Sampling}
\label{sec:riem-rw}

Having a stable Riemannian approximation $M_{z}^{\kappa}$ of $X$ in the vicinity of a point $z \in \mathcal{Z}$, we now describe how to explore it efficiently within the local neighbourhood of $z$. Our goal is to simulate a Riemannian Brownian motion for a time $T$ starting from $z$ that is a Riemannian equivalent of a local Gaussian perturbation $z + \epsilon$, $\epsilon\sim\mathcal{N}(0, T)$. 
To this end, we introduce the SORBES (Second-Order Riemannian Brownian efficient Sampling) procedure, described in Algorithm \ref{alg:SORBES}. SORBES improves the flat Gaussian noise  by exploring only active local subspace $W_{z}^{\kappa}$, it is isotropic w.r.t. to the decoder geometry, and respects the local curvature of $M^{\kappa}_{z}$.

\begin{algorithm}[ht]
\caption{\textsc{SORBES}}
\label{alg:SORBES}
\begin{algorithmic}[1]
\REQUIRE $z \in \mathcal{Z}$, $\kappa \geq 0$, step size $\epsilon$, diffusion time $T$, $\alpha=0.99$
\STATE Initialize $z^{\epsilon}_{0} \gets z$, \texttt{stopped} $\gets$ \texttt{False}, $\sigma \gets 0$ \newline \hfill{($\sigma$ tracks diffusion time)}
\STATE $W_{z}^{\kappa}, G_{\operatorname{Dec}}\gets M_{z}^{\kappa}$
\FOR{$i=1$ to $\lfloor \frac{T}{\epsilon^{2}} \rfloor$}
  \STATE $\overline{v}\in S_{z}^{\kappa} = \{u\in T_{z}M_{z}^{\kappa}: \langle u,u\rangle^{\operatorname{Dec}}_{z}=1\}$ \newline (Sample a unit tangent direction)
  \STATE Set $v \gets \sqrt{k^{\kappa}_{z}}\;\overline{v}$
  \IF{not \texttt{stopped}}
    \STATE \[
    z^{\epsilon}_{i} = z^{\epsilon}_{i-1} + \underbrace{\epsilon v}_{\substack{\text{first-order} \\ \text{geodesic approximation}}} - \underbrace{\epsilon^{2}\Gamma(z^{\epsilon}_{i-1})[v,v]}_{\substack{\text{second-order} \\ \text{geodesic approximation}}},
    \] \newline (Update - $\Gamma$ denotes the local Christoffel symbol for $M_{z}^{\kappa}$)
    \STATE $\sigma \gets \sigma + \epsilon^2$ \hfill{(diffusion time update)}
    \IF{$z^{\epsilon}_{i}\notin W_{z}^{\kappa}(\alpha)$}
      \STATE \texttt{stopped} $\gets$ \texttt{True}
    \ENDIF
  \ELSE
    \STATE $z^{\epsilon}_{i} \gets z^{\epsilon}_{i-1}$ \hfill{(absorbed state)}
  \ENDIF
\ENDFOR
\\
\textbf{return} $(z^{\epsilon}_{i})_{0\leq i\leq \lfloor \frac{T}{\epsilon^{2}} \rfloor}, \ \sigma$
\end{algorithmic}
\end{algorithm}

Before introducing the key theoretical property of this algorithm, let's recall the crucial notation. For $A\subset M^{\kappa}_{z}$, $A^{c}$ is the complement of $A$ in $M^{\kappa}_{z}$. Let $d_{M_{\kappa}}$ be the geodesic distance on $M^{\kappa}_{z}$ w.r.t. to the pullback metric, and $d_{M_{z}^{\kappa}}(x, A) = \inf_{y\in A}d_{M_{z}^{\kappa}}(x, y)$ for $x\in M_{z}^{\kappa}$ and $A\subset M_{z}^{\kappa}$. Let $\operatorname{Ric}$ be the Ricci curvature. Then the key theoretical property of the Algorithm \ref{alg:SORBES} is summarized by: 

\begin{theorem}
Let $(Z_{i}^{\epsilon})_{i\geq 0}$ be the sequence produced by Algorithm~\ref{alg:SORBES}, for $M_{z}^{\kappa}(\alpha)$ with $\alpha \in (0,1)$ and diffusion horizon $T>0$, and define its continuous-time interpolation
\[
Z^{\epsilon}(t) := Z^{\epsilon}_{\lfloor \epsilon^{-2}t \rfloor}, \qquad t\geq 0.
\]
Let $R^{z}_{\kappa} = d_{M_{z}^{\kappa}}(z, (W_{z}^{\kappa})^{c})$, and suppose $L \geq 1$ satisfies
\[
\sup_{x \in M^\kappa_z(\alpha)} \operatorname{Ric}_{M_{z}^{\kappa}}(x) \geq -L^{2}.
\]
Then for $T < \tfrac{(R_{z}^{\kappa})^{2}}{4k_{z}^{\kappa}L}$, as $\epsilon \to 0$, the process $Z^{\epsilon}$ converges in distribution to Riemannian Brownian motion stopped at the boundary of $M_{z}^{\kappa}(\alpha)$, with respect to the Skorokhod topology, on a set $C^{T}_{\kappa, z} \subset \Omega$ such that
\[
\mathbb{P}\!\left(C^{T}_{\kappa, z}\right) \;\geq\; 1 - \exp\!\left(-\tfrac{(R_{z}^{\kappa})^{2}}{32T}\right).
\]
\label{MainThm}
\end{theorem}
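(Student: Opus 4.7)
The plan is to recognize the discrete scheme $(Z_i^{\epsilon})$ as a triangular-array Markov chain whose one-step statistics match, to leading order in $\epsilon$, those of Riemannian Brownian motion on $M_z^{\kappa}(\alpha)$, and then to invoke a functional central limit theorem for diffusion approximations (Stroock--Varadhan / Ethier--Kurtz style). The stopping at $\partial W_z^{\kappa}(\alpha)$ is handled separately by a radial-process argument that bounds the probability the limiting motion ever leaves the geodesic ball of radius $R_z^{\kappa}$. The $\kappa$-stability condition plays a purely technical role: it guarantees that $G_{\operatorname{Dec}}$ is uniformly invertible on $W_z^{\kappa}(\alpha)$, so Christoffel symbols and their derivatives are bounded, and the Euclidean coordinates of $\mathbb{R}^d$ restricted to the affine subspace $W_z^{\kappa}$ serve as global charts, keeping all tensor computations concrete.

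First I would verify one-step consistency. By rotational invariance of the unit sphere with respect to the $G_{\operatorname{Dec}}$-inner product, a uniformly sampled unit tangent vector satisfies $\mathbb{E}[\bar v\,\bar v^{\top}] = G_{\operatorname{Dec}}(z)^{-1}/k_z^{\kappa}$, so the scaling $v = \sqrt{k_z^{\kappa}}\,\bar v$ gives $\mathbb{E}[v v^{\top}] = G_{\operatorname{Dec}}(z)^{-1}$. In the affine chart this yields, with $\Delta Z = Z_i^{\epsilon} - Z_{i-1}^{\epsilon}$,
\[
\mathbb{E}[\Delta Z \mid Z_{i-1}^{\epsilon}] = -\epsilon^{2}\,\Gamma(Z_{i-1}^{\epsilon})\bigl[G_{\operatorname{Dec}}^{-1}\bigr] + O(\epsilon^{4}),\qquad
\mathbb{E}[\Delta Z\,\Delta Z^{\top} \mid Z_{i-1}^{\epsilon}] = \epsilon^{2}\,G_{\operatorname{Dec}}(Z_{i-1}^{\epsilon})^{-1} + O(\epsilon^{4}),
\]
which, once divided by the time step $\epsilon^{2}$, are exactly the drift--covariance pair of the generator $\tfrac12\Delta_{M_z^{\kappa}}$ in these coordinates (the second-order correction $-\epsilon^{2}\Gamma[v,v]$ contributes precisely the coordinate drift $-G^{ij}\Gamma^k_{ij}$ of Laplace--Beltrami). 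Fourth-order moment bounds, uniform on $W_z^{\kappa}(\alpha)$, then give tightness of $\{Z^{\epsilon}\}$ in the Skorokhod space $D([0,T],\mathbb{R}^d)$, and the associated martingale problem has a unique solution, namely Riemannian Brownian motion on $M_z^{\kappa}(\alpha)$ stopped at $\partial W_z^{\kappa}(\alpha)$. This identifies the limit law of $Z^{\epsilon}$.

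The main obstacle is the explicit exit-time bound, which is what forces the restriction $T < (R_z^{\kappa})^{2}/(4k_z^{\kappa}L)$ and determines the constant $32$. I would apply Laplacian comparison: because $\operatorname{Ric}_{M_z^{\kappa}} \ge -L^{2}$ on $M_z^{\kappa}(\alpha)$, the distance function $\rho(x) = d_{M_z^{\kappa}}(z,x)$ obeys $\Delta_{M_z^{\kappa}}\rho \le (k_z^{\kappa}-1)L\coth(L\rho)$ inside the injectivity radius. Itô's formula on the manifold then represents $\rho(B_t)$ as a one-dimensional Brownian motion plus a drift whose absolute value is controlled by $(k_z^{\kappa}-1)L$, so $\rho(B_t)^{2} - t$ is dominated by a sub-martingale with drift $2(k_z^{\kappa}-1)L\,\rho + 1$. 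The hypothesis $T < (R_z^{\kappa})^{2}/(4k_z^{\kappa}L)$ is precisely what keeps the accumulated drift smaller than $R_z^{\kappa}/2$ over $[0,T]$, after which an exponential martingale (Doob) estimate on the remaining diffusive part gives
\[
\mathbb{P}\!\left(\sup_{0 \le t \le T}\rho(B_t) \ge R_z^{\kappa}\right) \le \exp\!\left(-\tfrac{(R_z^{\kappa})^{2}}{32 T}\right).
\]
Defining $C^{T}_{\kappa,z}$ as the complementary event, on $C^{T}_{\kappa,z}$ the limiting Brownian motion never touches $\partial W_z^{\kappa}(\alpha)$, and by the diffusion approximation the discrete process $Z^{\epsilon}$ also stays inside $W_z^{\kappa}(\alpha)$ for all $\epsilon$ small enough, so the ``stopped at boundary'' qualifier becomes vacuous and the Skorokhod convergence of Step 2 transfers to the stopped processes.

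The delicate point I expect to spend most effort on is getting the constant in the exit bound right: the factor $32$ (versus $8$ in the flat case) must absorb both the curvature-driven drift $(k_z^{\kappa}-1)L\coth(L\rho)$ via the comparison theorem and the jitter from replacing the continuous radial SDE by its discrete counterpart $Z^{\epsilon}$. Making the two bounds compatible --- so that the same event $C^{T}_{\kappa,z}$ simultaneously controls the limit and the pre-limit --- requires a uniform Azencott-type deviation estimate for $Z^{\epsilon}$ relative to its interpolation, which is where the bounded-curvature and uniform-invertibility consequences of $\kappa$-stability are indispensable.
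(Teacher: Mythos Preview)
Your two-step plan---Skorokhod convergence to Riemannian Brownian motion, then a radial exit-time bound via Laplacian comparison---is the same skeleton the paper uses, and your one-step moment computation is correct. The paper's execution differs in two respects, though. First, it does not rebuild the diffusion approximation from moments: it smoothly compactifies $M_z^{\kappa}(\alpha)$ by gluing on a spherical cap (so the extended manifold is compact while the metric still equals $G_{\operatorname{Dec}}$ on $W_z^{\kappa}(\alpha)$), and then invokes \citet{schwarz2022randomwalk} as a black box for Skorokhod convergence of the \emph{unstopped} walk on the compactified space. Second---and this is where you overshoot---the paper never performs any discrete-level exit analysis. It observes that Skorokhod convergence does not in general transfer to stopped processes (it gives a deterministic counterexample), and then sidesteps the issue by defining $C_{\kappa,z}^{T}$ purely in terms of the \emph{limiting} Brownian motion $B$ staying inside $\alpha W_z^{\kappa}$; on that event both stopping times are identically $T$, so the stopping is vacuous and the already-established convergence of the unstopped processes carries over directly. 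The Azencott-type uniform deviation estimate you anticipate in your last paragraph is therefore not needed, and the constant $32$ has nothing to do with discrete jitter: it comes entirely from the continuous exit-time lemma. There the paper uses the semimartingale decomposition of $r(B_t)^2$, Laplacian comparison in the form $\Delta r \le (k_z^{\kappa}-1)(L+1/r)$, a Dambis--Dubins--Schwarz time change, and a Gaussian tail bound to obtain $\exp\!\bigl(-\tfrac{(R_z^{\kappa})^2}{8T}(1-\tfrac{2k_z^{\kappa}LT}{(R_z^{\kappa})^2})^{2}\bigr)$; the hypothesis $T<(R_z^{\kappa})^2/(4k_z^{\kappa}L)$ makes the inner factor at least $\tfrac12$, turning $8$ into $32$.
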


For the proof, see Appendix~\ref{ap:proof-of-main-theorem}. 
Intuitively, in the small--step limit, our algorithm converges to Riemannian Brownian motion on $M_z^\kappa(\alpha)$, stopped at the boundary, with the deviation probability decaying exponentially in the inverse time horizon. 
Theorem~\ref{MainThm} extends the main convergence result of \citet{schwarz2022randomwalk} to possibly non-compact manifolds. 
Importantly, \citet{schwarz2022randomwalk} showed that achieving this convergence requires a \emph{second-order correction} term (capturing the effect of Christoffel symbols), rather than the commonly used naive first-order update. 
This motivates our use of the second-order scheme in Algorithm~\ref{alg:SORBES}. 
In Appendix~\ref{app:sorbs} (Algorithm~\ref{alg:SORBES-SE}), we describe \textsc{SORBES-SE} (\emph{Stable/Efficient}), an implementation of \textsc{SORBES} that approximates the second-order correction using finite differences. 
It employs an adaptive step size $\epsilon$ that adjusts to the local curvature of the space, while still preserving the convergence guarantees.

\subsection{MUTANG - Mutation Enumeration in Tangent Space}
\label{sec:tangent-mutation}

To further exploit the manifold structure of $M^{\kappa}_{z}$, modelling the neighborhood of a point $z \in \mathcal{Z}$, let us observe that 
the ambient tangent space $T_{\hat{\operatorname{Dec}}(z)}\hat{\operatorname{Dec}}(W_{z}^{\kappa})$ identifies directions in peptide space along which the decoder output is the most sensitive. 
We interpret these directions as defining a \emph{mutation space} for the decoded peptide $\mathtt{p}(z)$, providing candidate amino-acid substitutions.  

Formally, let $U^{\kappa}(z)$ (see Equation \ref{eq:svd}) denote an orthonormal basis of $T_{\hat{\operatorname{Dec}}(z)}\hat{\operatorname{Dec}}(W_{z}^{\kappa})$ in the ambient space (Figure~\ref{fig:tangent-mutation}A--B), 
and let $u_j \in \mathbb{R}^{LA}$ be the $j$-th basis vector. 
We reshape $u_j$ into matrix form  
\begin{equation}
\Delta \operatorname{Dec}^{(j)}(z) = \texttt{reshape}(u_j,\,(L,A)) \;\in\; \mathbb{R}^{L \times A}.
\end{equation}

Intuitively, each entry $\Delta \operatorname{Dec}^{(j)}(z)_{\ell,a}$ measures the first-order sensitivity of the probability assigned to amino acid $\mathcal{A}_a$ at position $\ell$, thereby suggesting a possible substitution. 
To extract candidate mutations, we introduce a sensitivity threshold $\theta_{\text{mut}}>0$ and declare that  
\begin{equation}
\label{eq:entry-thresh}
\big|\Delta \operatorname{Dec}^{(j)}(z)_{\ell,a}\big| \;\geq\; \theta_{\text{mut}}
\quad \Rightarrow \quad \text{add mutation } \mathtt{p}(z)_\ell \to a,
\end{equation}
where $\mathtt{p}(z)_\ell$ is the current residue at position $\ell$. 
Applying this rule across all $j = 1, \ldots, k_z^{\kappa}$ yields a \emph{mutation pool}  
\[
\mathcal{P} \;\subseteq\; \{1,\ldots,L\} \times \mathcal{A}.
\]
To enumerate candidate peptides, for each sequence position $\ell$ we define the set of admissible residues (Figure~\ref{fig:tangent-mutation}C) as  
\[
S_\ell = \{\,\mathcal{A}_{a} \;|\; (\ell,a) \in \mathcal{P}\,\} \;\cup\; \{\mathtt{p}(z)_\ell\},
\]  
i.e., all suggested mutations together with the identity residue. 
The complete candidate set is then obtained as the Cartesian product (Figure~\ref{fig:tangent-mutation}D):  
\begin{equation}
\label{eq:cartesian-mutations}
\mathcal{C}(\mathtt{p}(z)) \;=\; \prod_{\ell=1}^{L} S_\ell
\;=\;\{\,y \in \Sigma^{L} \;:\; y_\ell \in S_\ell \;\;\forall \ell\,\}.
\end{equation}

The details of MUTANG are provided in Appendix~\ref{ap:tsme} (Algorithm~\ref{alg:ts-mutation-min}).

\begin{figure*}[ht]
\begin{center}
\includegraphics[scale=0.65]{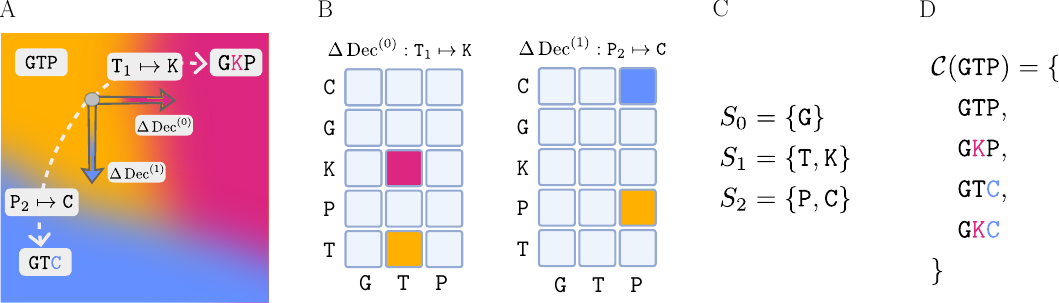}
\end{center}
\caption{\textbf{Tangent space as mutation space and local enumeration.}
  \textbf{A)} For an example peptide \texttt{GTP} we consider two orthogonal peptide-space
  tangent directions $\Delta\operatorname{Dec}^{(j)}$ obtained from the SVD of the decoder Jacobian. Each direction suggests a
  specific substitution: $\texttt{T}_{1}\!\to\!\texttt{K}$ and $\texttt{P}_{2}\!\to\!\texttt{C}$.
  \textbf{B)} Each $\Delta \operatorname{Dec}^{(j)}$ is reshaped into an $L\times A$ map (rows: amino acids; columns: positions).
  \textbf{C)} Thresholded entries define per-position  sets of admissible residues (identity always included).
  \textbf{D)} The candidate set is the Cartesian product
  $\mathcal{C}(\texttt{GTP})=\prod_{\ell} S_\ell$.}
  \label{fig:tangent-mutation}
\end{figure*}

\subsection{Local enumeration}
\label{sec:local-enum}

We aim to {densely populate} the neighbourhood of a single prototype peptide with valid, diverse candidates. 
Starting from a seed peptide $\mathtt{p}$ with latent code $z_0$, we launch multiple {Riemannian random walk} trajectories using the \textsc{SORBES} algorithm (Section~\ref{sec:riem-rw}). 
At the start and after each step, the current latent state is decoded into a peptide and augmented with additional variants generated by \textsc{MUTANG} (Section~\ref{sec:tangent-mutation}). Additionally, to account for local dimension variability, we re-estimate the $\kappa$-stable submanifold $M^{\kappa}_{z}$ at each step of the random walk. The union of all decoded walk steps and tangent-space mutations yields a compact, high-quality \emph{local candidate set} around $\mathtt{p}$. Despite leveraging second-order geometric information, \textsc{LocalEnumeration} remains computationally efficient: execution time of each step scales linearly with the latent dimension, peptide length, and decoder evaluation time. Algorithm~\ref{alg:enumerate-multi} summarizes the procedure, with a detailed time and memory complexity analysis provided in Appendix~\ref{appendix:le-computational-stability}.

\begin{algorithm*}[ht]
\caption{\textsc{LocalEnumeration} 
}
\label{alg:enumerate-multi}
\begin{algorithmic}[1]
\REQUIRE seed peptide $\mathtt{p}$, $\kappa_{\textsc{SORBES}}, \kappa_{\textsc{MUTANG}}\geq 0$, number of trajectories $M$, walk time $T_{\mathrm{walk}}$, step size $\epsilon$, mutation threshold $\theta_{\text{mut}}$
\STATE Encode $\mathtt{p}$ to latent $z_0$;\quad $\mathcal{C}\gets\{\mathtt{p}\}$
\FOR{$m=1$ to $M$}
  \STATE $z\gets z_0$;\ $t\gets 0$; $\mathcal{C}\gets \mathcal{C}\ \cup\ \textsc{MUTANG}(z, \kappa_{\textsc{MUTANG}}, \theta_{\text{mut}})$
  \WHILE{$t < T_{\mathrm{walk}}$}
    \STATE $((\_, z),\sigma)\gets \textsc{SORBES-SE}(z, \kappa_{\textsc{SORBES}}, \epsilon, \texttt{STEP}_{\texttt{max}}{=} 1)$
    \hfill{(single step of \textsc{SORBES-SE})}
    \STATE $t\gets t+\sigma^2$;\quad $\mathcal{C}\gets \mathcal{C}\cup\{\mathtt{p}(z)\}$
    \STATE $\mathcal{C}\gets \mathcal{C}\ \cup\ \textsc{MUTANG}(z, \kappa_{\textsc{MUTANG}}, \theta_{\text{mut}})$
  \ENDWHILE
\ENDFOR
\\ \textbf{return} $\mathcal{C}$ \hfill (local  candidate set)
\end{algorithmic}
\end{algorithm*}
\subsection{LE-BO - Local Enumeration Bayesian Optimization}

Finally, we integrate our \textsc{LocalEnumeration} algorithm into a Bayesian optimization~\citep{Garnett2023BayesianOptimization} framework for peptide design, which we term Local Enumeration Bayesian Optimization (\textsc{LE-BO}). 
Instead of performing costly optimization of the acquisition function in the latent space, which typically relies on Euclidean-distance kernels and ignores both the latent geometry and the discrete nature of peptides, we  use surrogate Gaussian process models~\citep{Rasmussen2006GPML}, defined directly in the peptide space. 
The acquisition function is optimized by locally enumerating peptides in the vicinity of the most promising candidates and then selecting the peptide that maximizes the acquisition value. 
To further encourage exploration and increase the diversity of discovered candidates, we employ the \textsc{ROBOT} scheme~\citep{Maus2023ROBOT}, which promotes searching across a broader set of promising regions. Details of \textsc{LE-BO} are presented in Algorithm~\ref{alg:le-bo}.

\begin{algorithm}[ht]
    \caption{LE-BO
    }\label{alg:le-bo}
    \begin{algorithmic}[1]
        \REQUIRE \textsc{Oracle} function to be optimized; seed  $\mathtt{p}_{\text{seed}}$; maximum budget $B_{\text{max}}$; trust region distance $d_{\text{trust}}$; number of ROBOT evaluations per iteration $k_{\text{ROBOT}}$; diversity threshold $d_{\text{ROBOT}}$; a surrogate Gaussian Process $\operatorname{GP}$ model with $\operatorname{GP.acquistion}$ function.
        
        \STATE $\mathtt{p}_{\text{current}} := \mathtt{p}_{best} := \mathtt{p}_{\text{seed}}$, $\mathcal{D} := \{\mathtt{p}_{\text{seed}}\}$, $\mathcal{E} = \{(\mathtt{p}_{\text{seed}}, \textsc{Oracle}(\mathtt{p}_{\text{seed}})\}$
        
        \FOR{$iter:=1$ to $\lfloor B_{max} / k_{\text{ROBOT}} \rfloor$}
            \STATE $\mathcal{D} := \mathcal{D} \cup \textsc{LocalEnumeration}(\mathtt{p}_{\text{current}})$ \newline \hfill{(Explore the neighborhood of $\mathtt{p}_{\text{current}}$)}
            \STATE $\operatorname{GP} := \operatorname{GP.fit}(\mathcal{E})$ \hfill{(Fit the~surrogate GP model)}
            \STATE $\mathcal{D}_{\text{trust}} := \{\mathtt{p} \in \mathcal{D} \mid \text{Levenshtein}(\mathtt{p}, \mathtt{p}_{\text{best}}) \le d_{\text{trust}} \}$ \hfill{(Define the trust region)}
            \label{step:turbo}
            \FOR{$i := 1$ to $k_{\text{ROBOT}}$}
                \STATE $\mathtt{p}_{\text{ROBOT}}^i := \operatorname{argmax}_{\mathtt{p} \in \mathcal{D}_{\text{trust}}} \operatorname{GP.acquistion}(\mathtt{p})$
                \label{step:robot1}
                \STATE $\mathcal{E}:=\mathcal{E} \cup \{(\mathtt{p}^i_{\text{ROBOT}}, \textsc{Oracle}(p_{\text{ROBOT}}))\}$
                \STATE $\mathcal{D}_{\text{ROBOT}}=\{\mathtt{p} \in \mathcal{D} \mid \text{Levenshtein}(\mathtt{p}, \mathtt{p}^i_{\text{ROBOT}}) \le d_{\text{ROBOT}} \}$
                \STATE $\mathcal{D}_{\text{trust}}: = \mathcal{D}_{\text{trust}} \setminus \mathcal{D}_{\text{ROBOT}}$ \newline \hfill{(ROBOT diversity filtering)}
            \ENDFOR
            
            \STATE $p_{\text{current}} := \arg\min_{1 \leq i \le k_{\text{ROBOT}}} \textsc{Oracle}(\mathtt{p}_{\text{ROBOT}}^i)$
            \IF{$\textsc{Oracle}(\mathtt{p}_{\text{current}}) \geq \textsc{Oracle}(\mathtt{p}_{\text{best}})$}
                \STATE $\mathtt{p}_{\text{best}} := \mathtt{p}_{\text{current}}$
            \ENDIF
        \ENDFOR
        
        \STATE \textbf{return} $\mathtt{p}_{\text{best}}$
    \end{algorithmic}
\end{algorithm}
\subsection{PoGS - Potential-minimizing Geodesic Search}
\label{sec:biseed}
Given two prototype peptides with latent vectors $z_a$ and $z_b$, we aim to generate \emph{seeds}, i.e.,  \emph{analogues that are jointly similar to both vectors and have high predicted activity}. 
To this end, we construct a discrete geodesic-like curve connecting $z_a$ and $z_b$, interpreted physically as a system with \emph{kinetic energy} (geometric term) and an added \emph{potential energy} (property term). 
This provides a natural tradeoff between similarity to both seeds and the desired molecular property.

Because the decoder Jacobian may have varying rank, we avoid intrinsic pullback computations and work in the ambient peptide-probability space $\mathbb{R}^{LA}$. 
For a sequence of latent waypoints $Z=\{z_0{=}z_a, z_1,\dots,z_N{=}z_b\}$, we define their decoded logits $X_k=\log(\hat{\operatorname{Dec}}(z_k))$, and approximate curve length using \emph{chord distances} $\|X_{k+1}-X_k\|_2$. 
This extrinsic metric serves as a first-order surrogate for geodesic energy, bypassing costly Christoffel evaluations and remaining stable under rank variability.

We define the total energy of a discrete path $Z$ as
\begin{equation}
\label{eq:biseed-energy}
\begin{split}
\mathcal{E}_{\lambda, \mu}(Z)\;=\;&
\underbrace{\sum_{k=0}^{N-1}\|X_{k+1}-X_k\|_2^2}_{\text{kinetic term: geometric similarity}}
\;+\;\lambda\,
\underbrace{\sum_{k=0}^{N}\Phi(X_k)}_{\text{potential term: property}} \\
&\;+\;\mu\,\underbrace{\sum_{k=0}^{N-1}\|z_{k+1}-z_k\|_2^2}_{\text{latent regularizer}},
\end{split}
\end{equation}
where $\Phi$ is the property prediction (e.g.\ negative log MIC), $\lambda \geq 0$ balances geometry vs.\ property, and $\mu \geq 0$ regularizes latent jumps to discourage large chords in $\mathcal Z$. 
The first term corresponds to kinetic energy (favoring smooth, short ambient curves), the second to a potential energy that biases toward low $\Phi$, and the third acts as a stabilizer ensuring robustness of the chord points in the latent space.

To perform optimization and search, we initialize $Z$ by straight-line interpolation in latent space, and later optimize $\mathcal{E}_{\lambda, \mu}(Z)$ w.r.t. $Z$ using ADAM solver.
During optimization, only the interior points $z_1,\dots,z_{N-1}$ are updated. Given an optimized path $Z$, we decode each $z_{i}$ to a peptide $\mathtt{p}_{i}$ and remove consecutive duplicates, obtaining a \textit{peptide path} $\left(\mathtt{p}'_{0},\dots, \mathtt{p}'_{N'}\right)$ of length $N'$. 
We call a peptide $\mathtt{p}'_{k}$ a \textit{seed} if its potential satisfies $\Phi(\mathtt{p}'_{k}) \leq \theta_{\text{pot}}$ for a threshold $\theta_{\text{pot}}$. 
A peptide $\mathtt{p}'_{k}$ is called a \textit{well} if it is a seed and also a local minimum of the potential $\Phi$ along the peptide path. The detailed algorithm is presented in Appendix~\ref{app:pogs} (Algoritm~\ref{alg:pogs}).

\section{Results}

\subsection{PoGS evaluation}

To evaluate the \textsc{PoGS} procedure, we applied it for the HydrAMP model~\citep{Szymczak2023HydrAMP}, using  average standardized MIC predictions against 3 \emph{Escherichia coli} strains (\emph{E. coli} ATCC11775, AIG221 and  AIG222)  of a APEX-derived transformer prediction ~\citep{Wan2024DeepLearningEnabled} (Appendix~\ref{app:pogs-property-predictor}) as the potential function. 
300 prototype pairs $(z_{a}, z_{b})$ were drawn from the Veltri dataset~\citep{Veltri2018DeepLearningAMP}, restricted to active peptides (average MIC~$\leq 32\,\mu$g/ml against 3 \textit{E. coli} strains) with edit distance between prototypes $\geq 10$.

For each pair $z_a$ and $z_b$, we compared three paths between $z_a$ and $z_b$: straight Euclidean interpolation, PoGS without potential, and full PoGS ; Section~\ref{sec:biseed}),  measuring chord latent and ambient lengths, decoded peptide path lengths, and property-based counts of {seeds} and {wells}. PoGS hyperparametrs and metrics are described in Appendix~\ref{app:pogs}.
PoGS achieved shorter ambient paths, and substantially more seeds and wells (Table~\ref{tab:biseed-results}), showing that property-aware potentials enrich trajectories for active candidates. In Appendix \ref{appendix:pogs-ablation} we present additional analyses that further validate these findings, demonstrating that the observed gains are robust to the choice of ambient metric, set of prototypes and to the potential function.

\begin{table*}[ht]
\centering
\caption{\textbf{PoGS results.} 
Comparison of straight interpolation, geodesic (no potential), 
and property-extended geodesic. Reported are latent length, ambient length, 
peptide path per length, and counts of {seeds} and {wells}.}
\label{tab:biseed-results}
\resizebox{\textwidth}{!}{%
\begin{tabular}{lccccccc}
\toprule
\textbf{Method} & \textbf{Latent length} & \textbf{Ambient length} & \textbf{Peptide path length} & \textbf{Potential} & \textbf{Seeds} & \textbf{Wells} \\
\midrule
Straight interpolation       &  \textbf{6.43 $\pm$ 0.13} & 1601.56 $\pm$ 57.04  & 71.25 $\pm$ 6.12 & -501.00 $\pm$ 23.92 & 11.70 $\pm$ 5.54 & 2.50 $\pm$ 1.90  \\
{\textsc{PoGS}} w.o. potential ($\lambda=0$) & 7.98 $\pm$ 0.18 & \textbf{1240.03 $\pm$ 40.72}  & \textbf{66.68 $\pm$ 4.23} & -603.34 $\pm$ 15.12 & 18.90 $\pm$ 8.40 & 6.40 $\pm$ 3.20 \\
\textsc{PoGS}  ($\lambda=0.01$)      & 9.02 $\pm$ 0.09 & 1432.55 $\pm$ 19.98 & 77.12 $\pm$ 6.12 & \textbf{-785.45 $\pm$ 34.12} & \textbf{22.70 $\pm$ 10.49} & \textbf{12.60 $\pm$ 4.45} \\
\bottomrule
\end{tabular}}
\end{table*}


\subsection{LE-BO evaluation}

We next evaluated our \textsc{LE-BO} algorithm on a black-box peptide optimization task with a budget of 1400 evaluations. Optimization was initialized from four seed peptides derived using PoGS for $\lambda~= 0.01$ and $\mu = 0.1$ (parameters described in Section~\ref{sec:biseed}): \emph{KY14}, \emph{KF16}, \emph{KK16}, \emph{FL14}, as well as two previously described AMPs (\emph{mammuthusin-3}, \emph{hydrodamin-2})~\citep{Wan2024DeepLearningEnabled}. As the optimized black-box function, we used the APEX MIC regressor~\citep{Wan2024DeepLearningEnabled}, with the objective of minimization. We minimize the average $\log_2$ MIC across three \emph{E. coli} strains, reporting the mean over the best results from 10 repeated optimization runs (Table~\ref{tab:LE-BO-results}). 
LE-BO hyperparameters are described in Appendix \ref{appendix:hyperparameters}. 

We compared \textsc{LE-BO} against a diverse set of state-of-the-art methods, including  generative models: HydrAMP~\citep{Szymczak2023HydrAMP} with different creativity hyperparameter $\tau$ and PepCVAE~\citep{Das2018PepCVAE}; the diffusion-based model LaMBO-2~\citep{Gruver2023LaMBO2}; evolutionary strategies: Latent CMA-ES and Relaxed CMA-ES~\citep{Hansen2016CMAES}, where the former operates in the HydrAMP latent space and the latter optimizes directly on a continuous relaxation of one-hot sequence encodings, AdaLead~\citep{Sinai2020AdaLead}, PEX~\citep{Ren2022PEX}; GFlowNet-based approaches: GFN-AL~\citep{Jain2022GFNAL} and GFN-AL-$\delta$CS~\citep{Kim2025DeltaCS}; the reinforcement learning method DyNA-PPO~\citep{Angermueller2020DyNAPPO}; probabilistic methods: CbAS~\citep{Brookes2019CbAS} and Evolutionary BO~\citep{Sinai2020AdaLead}; the insertion-based Joker method~\citep{Porto2018Joker}; and greedy Random Mutation~\citep{GonzalezDuque2024SurveyHDBO}. 
Comparison to competitor latent BO methods SAASBO~\citep{eriksson2021high}, Hvarfner’s Vanilla BO~\citep{hvarfner2024vanilla}, and LineBO~\citep{kirschner2019adaptive} was limited by computational constraints: while \textsc{LE-BO} consistently completed within 2 hours, runs of these methods performing an equivalent 1400-step exploration in the HydrAMP latent space exceeded a one-week runtime budget (with SAASBO being the fastest among the latent BO baselines in our evaluation).  Appendix \ref{appendix:lebo-time-memory} provides detailed time and memory profiling, including a direct comparison between \textsc{LE-BO} and SAASBO under matched experimental settings.

Additionally, we conducted an ablation study on \textsc{LE-BO} variants to isolate the effects of random walks and mutation enumeration. 
These variants modify the \textsc{EnumerateLocal} sub-procedure in Algorithm~\ref{alg:enumerate-multi}. 
The \emph{Euclidean walk} variant replaces \textsc{SORBES} with a naive Euclidean random walk in the latent space. 
The \emph{mutation-disabled} variant omits \textsc{MUTANG}. 
Finally, the \emph{walk-disabled} variant uses only a single \textsc{MUTANG} without random walks. 


As reported in Table~\ref{tab:LE-BO-results}, \textsc{LE-BO} achieved the best performance for five out of six prototypes, outperforming all baselines except for \emph{FL14}, where it ranked second. 
The Ablated \textsc{LE-BO} variant with Euclidean walk and enabled mutations  
achieved the second-best performance on four out of six prototypes, underscoring the importance of mutation enumeration in the optimization process. 
These ablations further confirmed that both Riemannian random walks and mutations are essential for consistently achieving low MIC values. Additional analyses in  Appendix~\ref{appendix:latent-interpretation} provide a geometric explanation for these trends. By adapting to the local decoder-induced geometry, Riemannian \textsc{LE-BO} identifies substantially more local candidates for Bayesian optimization, particularly in regions with higher $\kappa$-stable dimensions, than Euclidean-based  (Figure \ref{fig:dimension-vs-neighborhoods}), explaining its stronger optimization performance. Notably, these high-dimensional regions are associated with lower charge and aromaticity (Figure~\ref{fig:dimension-vs-properites}), and thus less favorable antimicrobial properties, where richer local enumeration increases the likelihood that such deficiencies can be corrected during optimization.

To further assess the reliability and generality of \textsc{LE-BO}, we conduct an extensive set of ablation and sensitivity analyses, summarized in Appendix~\ref{appendix:lebo-ablation}. These experiments evaluate the impact of key algorithmic components, including the stability parameters $\kappa_{\textsc{SORBES}}$ and $\kappa_{\textsc{MUTANG}}$, the removal of the \textsc{ROBOT} component, variation across different initial seeds, and the use of alternative oracle models. Since \textsc{LE-BO} employs only a single-step \textsc{SORBES-SE} walk, the contraction parameter $\alpha$ does not affect the resulting dynamics and is therefore omitted from the ablation study. Together, these analyses demonstrate that the performance gains of \textsc{LE-BO} are stable across algorithmic choices and experimental settings.

\begin{table*}[ht]
\caption{Minimal $\log_2$ MIC values achieved by each optimization method. 
Reported values are the mean and standard deviation over 10 runs. 
The top row shows the predicted $\log_2$ MIC of seeds before optimization. 
The first block reports baseline results, the second block shows ablations, and the last row presents our method. 
\textbf{Bold:} best overall value for a prototype. 
\underline{Underline:} second-best.}

\resizebox{\textwidth}{!}{
\centering
\small   
\begin{tabular}{@{}llcllllll@{}}
\toprule
\multicolumn{3}{l}{Method}                                          & KY14                     & KF16                     & KK16                     & FL14                     & mammuthusin-3            & hydrodamin-2             \\ \midrule
\multicolumn{3}{l}{$\log_2$ MIC value (seed)}                              & 2.88                     & 3.39                     & 2.71                     & 4.00                     & 4.30                     & 6.96                     \\ \midrule
\multicolumn{3}{l}{GFN-AL}                                          & 2.73 $\pm$ 0.27          & 3.20 $\pm$ 0.19          & 2.63 $\pm$ 0.17          & 3.73 $\pm$ 0.47          & 4.30 $\pm$ 0.00          & 3.85 $\pm$ 0.76          \\
\multicolumn{3}{l}{GFN-AL-$\delta$CS}                               & 1.91 $\pm$ 0.19          & 1.74 $\pm$ 0.20          & 1.79 $\pm$ 0.08          & 2.06 $\pm$ 0.42          & 3.02 $\pm$ 0.19          & 1.77 $\pm$ 0.41          \\
\multicolumn{3}{l}{PEX}                                             & 1.39 $\pm$ 0.07          & 1.48 $\pm$ 0.09          & 1.54 $\pm$ 0.14          & 1.27 $\pm$ 0.27          & 2.65 $\pm$ 0.09          & 1.14 $\pm$ 0.11          \\
\multicolumn{3}{l}{Joker}                                           & 4.66 $\pm$ 2.04          & 3.98 $\pm$ 1.52          & 3.79 $\pm$ 1.61          & 4.22 $\pm$ 1.08          & 6.49 $\pm$ 1.75          & 6.28 $\pm$ 0.33          \\
\multicolumn{3}{l}{Random Mutation}                                 & 1.23 $\pm$ 0.26          & 1.17 $\pm$ 0.13          & 0.97 $\pm$ 0.40          & 1.02 $\pm$ 0.59          & 2.12 $\pm$ 0.80          & 0.69 $\pm$ 0.19          \\
\multicolumn{3}{l}{LaMBO-2}                                         & 1.88 $\pm$ 0.25          & 2.11 $\pm$ 0.20          & 1.72 $\pm$ 0.18          & 1.76 $\pm$ 0.32          & 2.75 $\pm$ 0.15          & 2.17 $\pm$ 0.35          \\
\multicolumn{3}{l}{Relaxed CMA-ES}                                  & 1.92 $\pm$ 0.13          & 1.83 $\pm$ 0.28          & 1.87 $\pm$ 0.30          & 1.93 $\pm$ 0.40          & 2.86 $\pm$ 0.29          & 1.66 $\pm$ 0.34          \\
\multicolumn{3}{l}{Latent CMA-ES}                                   & 1.81 $\pm$ 0.33          & 2.07 $\pm$ 0.56          & 1.72 $\pm$ 0.29          & 1.72 $\pm$ 0.20          & 2.17 $\pm$ 0.60          & 1.87 $\pm$ 0.44          \\
\multicolumn{3}{l}{CbAS}                                            & 2.88 $\pm$ 0.00          & 3.39 $\pm$ 0.00          & 2.71 $\pm$ 0.00          & 4.00 $\pm$ 0.00          & 4.30 $\pm$ 0.00          & 5.50 $\pm$ 0.93          \\
\multicolumn{3}{l}{DyNAPPO}                                         & 1.45 $\pm$ 0.36          & 1.31 $\pm$ 0.22          & 1.34 $\pm$ 0.20          & 0.73 $\pm$ 0.53          & 2.42 $\pm$ 0.54          & 0.82 $\pm$ 0.39          \\
\multicolumn{3}{l}{Evolutionary BO}                                            & 1.65 $\pm$ 0.23          & 1.45 $\pm$ 0.42          & 1.50 $\pm$ 0.12          & 1.70 $\pm$ 0.19          & 2.68 $\pm$ 0.12          & 1.28 $\pm$ 0.33          \\
\multicolumn{3}{l}{AdaLead}                                         & 0.87 $\pm$ 0.49          & 1.01 $\pm$ 0.28          & 0.93 $\pm$ 0.24          & \textbf{0.51 $\pm$ 0.38} & 2.30 $\pm$ 0.28          & {\ul 0.66 $\pm$ 0.21}    \\
\multicolumn{3}{l}{PepCVAE}                      & 3.66 $\pm$ 0.00          & 2.87 $\pm$ 0.01          & 3.14 $\pm$ 0.11          & 2.12 $\pm$ 0.00          & 4.30 $\pm$ 0.00          & 6.74 $\pm$ 0.06          \\
\multicolumn{3}{l}{HydrAMP $\tau=5.0$}                              & 2.35 $\pm$ 0.08          & 2.03 $\pm$ 0.12          & 1.88 $\pm$ 0.10          & 2.19 $\pm$ 0.12          & 3.19 $\pm$ 0.27          & 2.39 $\pm$ 0.38          \\
\multicolumn{3}{l}{HydrAMP $\tau=2.0$}                              & 2.60 $\pm$ 0.03          & 2.27 $\pm$ 0.02          & 2.11 $\pm$ 0.11          & 2.81 $\pm$ 0.30          & 3.99 $\pm$ 0.01          & 5.02 $\pm$ 0.28          \\
\multicolumn{3}{l}{HydrAMP $\tau=1.0$}                              & 2.86 $\pm$ 0.01          & 2.27 $\pm$ 0.00          & 2.35 $\pm$ 0.00          & 3.72 $\pm$ 0.35          & 4.27 $\pm$ 0.10          & 6.09 $\pm$ 0.01          \\ \midrule
                               & Walk       &  Mutation              &                          &                          &                          &                          &                          &                          \\ \midrule
\multirow{4}{*}{Ablated \textsc{LE-BO}} & Euclidean  & --                    & 1.37 $\pm$ 0.27          & 1.33 $\pm$ 0.23          & 1.24 $\pm$ 0.18          & 1.29 $\pm$ 0.17          & 0.91 $\pm$ 0.37          & 1.16 $\pm$ 0.24          \\
                               & \textsc{SORBES-SE} & --                    & 1.37 $\pm$ 0.22          & 1.42 $\pm$ 0.22          & 1.12 $\pm$ 0.35          & 1.18 $\pm$ 0.20          & 1.07 $\pm$ 0.50          & 1.12 $\pm$ 0.13          \\
                               & --         & $\textbf{\checkmark}$ & 1.71 $\pm$ 0.20          & 1.46 $\pm$ 0.40          & 1.82 $\pm$ 0.13          & 1.24 $\pm$ 0.13          & 1.90 $\pm$ 0.42          & 1.12 $\pm$ 0.20          \\
                               & Euclidean  & $\textbf{\checkmark}$ & {\ul 0.65 $\pm$ 0.18}    & {\ul 0.71 $\pm$ 0.18}    & {\ul 0.83 $\pm$ 0.32}    & 0.87 $\pm$ 0.22          & {\ul 0.78 $\pm$ 0.45}    & 0.80 $\pm$ 0.18          \\ \midrule
\textsc{LE-BO}                          & \textsc{SORBES-SE} & $\textbf{\checkmark}$ & \textbf{0.50 $\pm$ 0.24} & \textbf{0.60 $\pm$ 0.29} & \textbf{0.50 $\pm$ 0.14} & {\ul 0.60 $\pm$ 0.22}    & \textbf{0.50 $\pm$ 0.38} & \textbf{0.58 $\pm$ 0.34} \\ \bottomrule
\end{tabular}\label{tab:LE-BO-results}
}
\end{table*}

\subsection{Wet-lab validation}

To validate the computational predictions from PoGS and LE-BO, we conducted comprehensive {\emph{in vitro}} antimicrobial testing of the generated peptides. A total of 29 novel peptides were experimentally evaluated: 4 seed peptides discovered through PoGS bi-prototype geodesics with property-aware potentials, and 25 analogs derived from these 4 seeds through LE-BO optimization for \textit{E. coli} activity. These peptides were tested against a panel of 19 bacterial strains, including 8 multidrug-resistant (MDR) isolates (Table \ref{tab:bacterial_strains}, Appendix~\ref{wet-lab-methods}), to assess both broad-spectrum activity and efficacy against clinically relevant resistant pathogens. We compared the success rate of PepCompass to previous methods that were also validated experimentally (HydrAMP~\citep{Szymczak2023HydrAMP}, AMP-Diffusion~\citep{torres2025ampdiff}, CLaSS~\citep{Das2020CLaSS}, Joker~\citep{Porto2018Joker}). To this end, for each activity threshold, we computed the fraction of tested peptides that were active against at least one bacterial strain with this activity threshold.

As demonstrated  in Figure~\ref{fig:wet-lab}, with unprecedented $100$\% success rate for the standard activity threshold of $32\mu$g/ml and $82$\% rate at much more demanding threshold of $4\mu$g/ml, PepCompass achieved superior performance across various MIC thresholds when compared to previous  methods. Moreover, these high success rates were maintained even when evaluated specifically against MDR bacterial strains (Figure \ref{fig:mic-heatmap}, Table \ref{tab:mic_values}). The experimental results confirmed validity of the potential used in the optimization (Figure \ref{fig:mic-heatmap-antex}) as well as the expected activity increase from prototypes to seeds to analogs for Gram-negative bacteria, directly validating our optimization strategy that targeted \textit{E. coli} activity. Indeed, while the generated peptides showed some activity against Gram-positive bacterial strains (Figure~\ref{fig:gramplus}), the clear enhancement from optimization was primarily observed against Gram-negative pathogens (Figure~\ref{fig:gramminus}).


\begin{figure}[ht]
    \centering
\includegraphics[width=\linewidth]{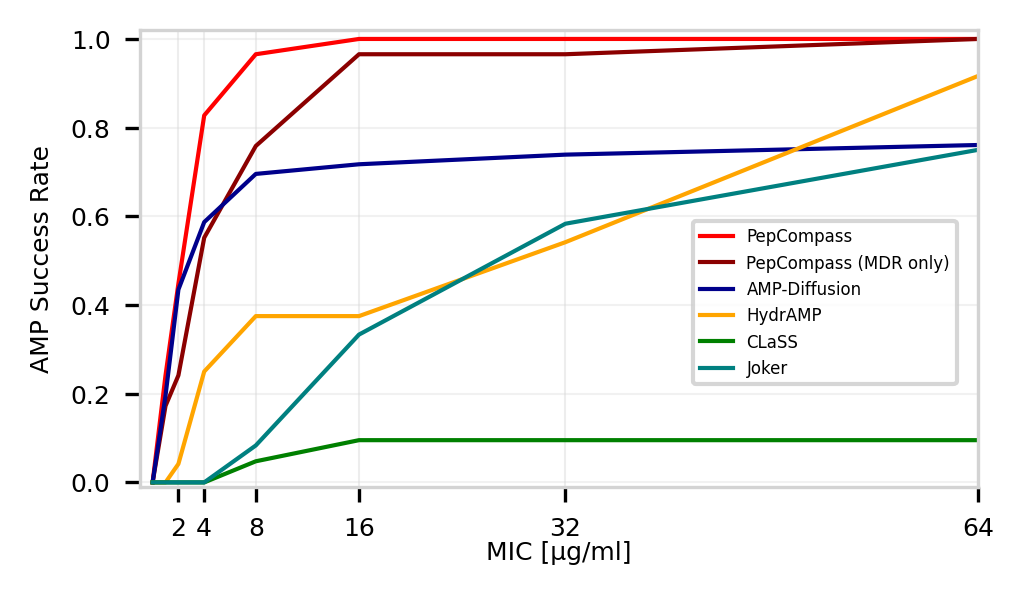}
    \caption{\textbf{Antimicrobial peptide success rates across MIC thresholds}. Success rate is defined as the fraction of generated peptides with MIC below the specified threshold against at least one tested strain.   Results are based on experimental validation against 19 bacterial strains, including 8 MDR isolates.}
    \label{fig:wet-lab}
\end{figure}

\section*{Conclusions}

By leveraging Riemannian latent geometry, interpretable tangent mutations, and potential-augmented geodesics, PepCompass enables efficient navigation and optimization within peptide space across global and local scales. One potential limitation of the current implementation is the use of a Euclidean metric in the decoder output space. While seemingly simplistic, it induces a structurally conservative geometry that is mutationally uniform and conservative to a non-local insertions and deletions; see Appendix~\ref{appendix:euclidean-rationale} for a rationale and potential extensions to more biologically informed metrics, which we plan to explore in future work. Already now, our computational experiments demonstrate superior performance over state-of-the-art baselines, and wet-lab validation confirms unprecedented success rates, with all tested peptides showing $\textit{in vitro}$ activity, including against multidrug-resistant pathogens. These results establish geometry-aware exploration as a powerful paradigm for controlled generative design in vast biological spaces.

\section*{Impact Statement}

This work advances the field of machine learning by developing a principled, geometry-aware framework (PepCompass) for navigating high-dimensional generative latent spaces. While the primary technical contribution concerns methods for effective exploration and optimization of peptide design spaces, the application domain: machine-aided antimicrobial peptide discovery, has clear implications for human health and global societal wellbeing.

Antimicrobial resistance (AMR) is a pressing public health challenge worldwide, contributing to increased morbidity, mortality, and healthcare costs. Machine learning that accelerates de novo design of effective antimicrobial agents can contribute positively by expanding the repertoire of candidate therapeutics and potentially reducing time and cost in early drug discovery. By facilitating systematic exploration of peptide design spaces, this work may help identify novel active compounds that would be difficult or costly to find through traditional methods alone.

At the same time, broader impacts must be considered. Techniques that enhance biological sequence design raise questions around dual-use; for example, the same generative design mechanisms could theoretically be repurposed, intentionally or unintentionally, to optimize sequences with harmful biological activity. While our current work focuses on antimicrobial peptides against pathogenic bacteria, any general framework for biological sequence optimization warrants careful consideration of misuse scenarios. Responsible research practice in this area includes transparency in model limitations, rigorous experimental validation, and engagement with domain experts from biology, biosecurity, and ethics to assess risks associated with misuse.

We note that no human data, personally identifiable information, or sensitive attributes are used or generated in this work. The datasets employed are publicly available peptide sequence collections. Our method does not automate experimental execution; in vitro validation requires separate laboratory resources and domain expertise.



\bibliography{bibliography}
\bibliographystyle{icml2026}

\newpage
\appendix
\onecolumn
\section{Related work}\label{app:related-work}

\paragraph{Riemannian latent geometry.}
Deep generative models can be endowed with Riemannian structure~\citep{doCarmo_RiemannianGeometry} by pulling back the ambient metric through the decoder Jacobian \citep{shao2018rgdgm,arvanitidis2020gels,arvanitidis2018latent}. 
This allows distances and geodesics to reflect data geometry rather than Euclidean latent coordinates. 
However, most approaches assume a single smooth manifold of fixed dimension. 
Evidence from both theory and experiments shows that data often lie on unions of manifolds with varying intrinsic dimension or CW-complex structures \citep{lou2023bmvd,brown2022uomh,wang2024cw}. 
In practice, existing methods either restrict the latent to very low dimensions ($\leq 8$) or inflate the metric with variance terms to ensure full rank \citep{arvanitidis2020gels,detlefsen2022protein}, but these ignore extrinsic geometry. 
Our model instead \emph{decomposes the latent into Riemannian submanifolds of varying dimensions}, enabling principled geometry across heterogeneous regions.

\paragraph{Brownian motion and random walks.}
Latent Brownian motion has been used as a prior for VAEs \citep{kalatzis2020rbm} and for Riemannian score-based modeling \citep{deBortoli2022rsgm}. 
But these approaches rely on first-order updates. 
Convergence results for geodesic random walks show that correct Riemannian and sub-Riemannian Brownian motion requires second-order approximations \citep{schwarz2022randomwalk,herrmann2023subriemannian}. 
Our method explicitly incorporates this requirement, yielding diffusion-consistent walks where previous methods diverge.


\paragraph{Tangent spaces and interpretability.}
Tangent-space analysis has mostly been applied in vision, where interpretable latent directions are discovered in GANs or diffusion models via Jacobian or eigen decompositions \citep{shen2020interpreting,alemi2023counterfactual,park2021lowrank,wang2024interpretable}. 
Frames induced by augmentations provide another lens on local tangent geometry \citep{schneider2022frames}. 
We are the first to provide an \emph{interpretable tangent space in peptide sequence models}, where tangent vectors correspond directly to biologically meaningful mutations.

\paragraph{Geodesics and potentials.}
Geodesics are widely used for interpolation and counterfactual reasoning in latent space \citep{pegios2024rLST,deepgengeodesics2024}. 
Yet these are typically free geodesics. 
We extend the concept with \emph{potentials}, leveraging the Jacobi metric \citep{gibbons2015jacobi} so that peptide traversals account for both geometry and biochemical preferences.

\paragraph{Applications in molecules and proteins.}
Geometry-aware latent models have been used for chemical-space exploration \citep{zhong2022chemspace,winter2022latentflow}, and protein sequence modeling \citep{cao2022ancestral,detlefsen2022protein}. 
Our approach complements these by combining: (i) varying-dimension latent decomposition, (ii) second-order consistent Brownian walks, (iii) interpretable tangent spaces via mutations, and (iv) potential-augmented geodesics tailored to peptide design.

\section{$\kappa$-stable dimension of the PepCVAE \citep{Das2018PepCVAE} and HydrAMP \cite{Szymczak2023HydrAMP} models}\label{appendix:rank}

\begin{figure}[ht]
\begin{center}
\includegraphics[scale=0.6]{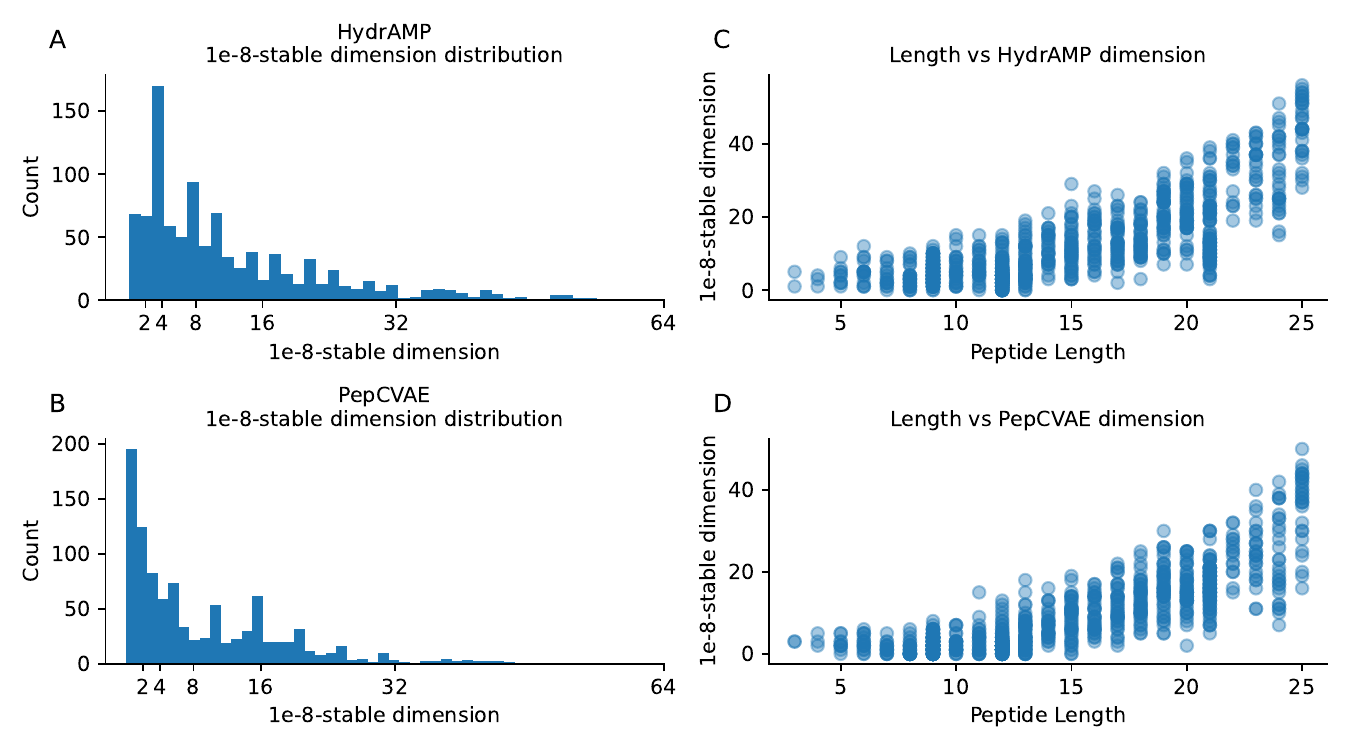}
\end{center}
\caption{\textbf{Stable rank and peptide statistics.} 
\textbf{A–B)} Distributions of the $\kappa$-stable dimension ($\kappa=10^{-8}$) 
for HydrAMP (A) and PepCVAE (B) across sampled peptides. 
\textbf{C–D)} Scatter plots of peptide length versus $\kappa$-stable dimension 
for HydrAMP (C) and PepCVAE (D), revealing a clear positive correlation: 
longer peptides tend to yield higher stable dimensions.
}

  \label{fig:stable-rank}
\end{figure}

To quantify how the $\kappa$-stable dimension varies across the latent space, 
we sampled $1{,}000$ points from the HydrAMP \citep{Szymczak2023HydrAMP} training set 
and computed their $\kappa$-stable dimensions under both the PepCVAE \citep{Das2018PepCVAE} 
and HydrAMP \citep{Szymczak2023HydrAMP} models. 
We set $\kappa=10^{-8}$, corresponding to the precision of the \texttt{float32} format 
commonly used in neural network computations. 
This choice ensures that no eigenvalue of the inverse metric tensor exceeds $10^{8}$, 
thereby avoiding numerical instabilities.  

As shown in Figure~\ref{fig:stable-rank}A, 
the $\kappa$-stable dimension was always strictly below the latent dimensionality ($64$) 
of both models. 
This indicates that the effective local dimensionalities of the peptide spaces 
are substantially smaller than the nominal latent dimension. 
Furthermore, when comparing peptide length (Figure~\ref{fig:stable-rank}B) 
to $\kappa$-stable dimension (Figure~\ref{fig:stable-rank}C), 
we observe a clear positive correlation: longer peptides systematically yield 
higher $\kappa$-stable dimensions. 
Intuitively, this suggests that longer sequences admit more locally meaningful perturbations, 
which naturally translate into a richer set of candidate substitutions. 
This observation further justifies our \textsc{MUTANG} 
strategy (\S\ref{sec:tangent-mutation}), 
as it allocates a larger and more diverse mutation pool precisely where 
biological sequence length provides greater combinatorial flexibility.

\section{Construction of $\kappa$-Stable Manifolds}
\label{ap:kappa-construction}

In this section we will introduce a construction of $\kappa$-stable Riemannian submanifolds, namely
\[
\mathbb{M}^{\kappa}=\{M_{z}^{\kappa} : z\in\mathcal{Z}\},\qquad 
M_{z}^{\kappa} = \big(W_{z}^{\kappa}, G_{\operatorname{Dec}}\big),
\]
where each $W_{z}^{\kappa}\ni z$ is an open affine submanifold of \emph{maximal dimension} (denoted $k_{z}^{\kappa}$) through $z$, such that the pullback metric $G_{\operatorname{Dec}}$ restricted to $W_z^\kappa$, denoted $G_{\operatorname{Dec}}|_{W_z^\kappa}$,  has full rank and satisfies the $\kappa$-stability condition
\[
\inf_{\substack{v \in T_{z}W_{z}^{\kappa} \\ \langle v, v\rangle_{\mathbb{R}^{d}} = 1}} \langle v, v\rangle_{z}^{\operatorname{Dec}} > \kappa^2.
\]

For this, we will use the truncated-SVD of a flattened decoder Jacobian $J_{\hat{Dec}}$. Let 
\[
  J_{\operatorname{\hat{Dec}}}(z) = U(z)\,\Sigma(z)\,V(z)^\top
\]
be the thin SVD of the decoder Jacobian, with singular values $\sigma_0(z)\ge\sigma_1(z)\ge\cdots\ge0$, and $\Sigma(z) = \operatorname{diag}\left((\sigma_0, \sigma_1, \dots, \sigma_{d-1})\right)$. Now let us note that:
\begin{equation}\label{eq:metric-svd-decomposed}
G_{\operatorname{Dec}}(z) = J_{\operatorname{\hat{Dec}}}(z)^\top J_{\operatorname{\hat{Dec}}}(z) = V(z)\Sigma^{2}(z)V(z)^{\top},
\end{equation}
and define the \textit{$\kappa$-stable dimension}:
\[
k^{\kappa}_{z} = \#\{\,i:\sigma_i(z)^2>\kappa\,\}.
\]

Truncating SVD-decomposition to first $k_{z}^{\kappa}$ singular values gives
\begin{equation}\label{eq:svd}
U^{\kappa}(z)\in \mathbb{R}^{(LA)\times k^{\kappa}_{z}},\quad
\Sigma^{\kappa}_{z}\in \mathbb{R}^{k^{\kappa}_{z}\times k^{\kappa}_{z}},\quad
V^{\kappa}(z)\in \mathbb{R}^{d\times k^{\kappa}_{z}}, 
\end{equation}

with truncated Jacobian: 
\[
J_{\hat{\operatorname{Dec}}}^{\kappa} = U(z)^{\kappa}\Sigma(z)^{\kappa}V(z)^{\kappa} \in \mathbb{R}^{LA \times d}.
\]
We then define the affine subspace (together with its parametrization)
\[
\mathcal{V}_{z}^{\kappa}=\{\phi^{\kappa}_z(x):x\in\mathbb{R}^{k^{\kappa}_{z}}\},\qquad
\phi^{\kappa}_z(x)=z+V^{\kappa}(z)x.
\]

Restricting the decoder to $\mathcal{V}_{z}^{\kappa}$ gives $\operatorname{Dec}^{\kappa}_z=\operatorname{\hat{Dec}}\circ\phi^{\kappa}_z$, with Jacobian
\begin{equation}\label{eq:jacobian-svd-eq}
J_{\operatorname{Dec}^{\kappa}_z}(0)=U^{\kappa}(z)\Sigma^{\kappa}(z),
\end{equation}
which has full column rank $k^{\kappa}_{z}$. By the inverse function theorem, there exists a neighborhood $\hat{W}^{\kappa}_{z}=B(0,r^{\kappa}_z)$ such that $\operatorname{Dec}^{\kappa}_z(\hat{W}^{\kappa}_z)$ is a smooth $k^{\kappa}_{z}$-dimensional manifold. Its pullback metric is
\begin{equation}\label{eq:kappa-restricted-decoder}
G_{\operatorname{Dec}^{\kappa}_z}(0)=\big(\Sigma^{\kappa}(z)\big)^2,
\end{equation}
with eigenvalues $\{\sigma_i(z)^2:\sigma_i(z)^2>\kappa\}$, all $\ge\kappa$.  

Finally, set $W_z^\kappa=\phi^\kappa_z(\hat{W}^\kappa_z)$. Then
\[
M_z^\kappa=(W_z^\kappa, G_{\operatorname{Dec}}),
\]
and $(\phi^\kappa_z)^{-1}$ is a diffeomorphic isometry between $(W_z^\kappa, G_{\operatorname{Dec}})$ and $(\hat{W}_z^\kappa, G_{\operatorname{Dec}^\kappa_z})$. From this and Eq.~\ref{eq:kappa-restricted-decoder} it follows that $M_{\kappa}^{z}$ satisfies the $\kappa$-stability condition. 

The maximality of $k_{z}^{\kappa}$ follows from the fact that if there existed 
an affine subspace $\bar{W}_{\kappa}^{z}$ with 
$\dim(\bar{W}_{\kappa}^{z}) > k_{z}^{\kappa}$ such that 
$G_{\operatorname{Dec}}|_{\bar{W}_z^\kappa}$ has full rank and satisfies the 
$\kappa$-stability condition, then we could define
\[
\bar{W}^{\kappa}_{z} \cap (W_{z}^{\kappa})^{\perp}
= \left\{w \in \bar{W}_{\kappa}^{z} : 
\forall v \in W_{z}^{\kappa},\ \langle v, w\rangle^{\text{Euc}} = 0\right\} 
\subset \bar{W}^{\kappa}_{z},
\]
as the subspace of $\bar{W}^{\kappa}_{z}$ orthogonal to $W^{\kappa}_{z}$.  
Since $\dim(\bar{W}_{\kappa}^{z}) > \dim(W_{\kappa}^{z})$, it follows that 
$\dim\!\left(\bar{W}^{\kappa}_{z} \cap (W_{z}^{\kappa})^{\perp}\right) > 0$.  
By construction 
\[
\bar{W}^{\kappa}_{z} \cap (W_{z}^{\kappa})^{\perp} 
\subset \operatorname{span}\{V(z)_{:,k_{z}^{\kappa}}, \dots, V(z)_{:,d}\}.
\]
what implies that for all $v \in \bar{W}^{\kappa}_{z} \cap (W_{z}^{\kappa})^{\perp}$ it holds that

\[
v = a_{k_{z}^{\kappa}} V(z)_{:, k_{z}^{\kappa}} + \dots + a_{d - 1} V(z)_{:, d - 1},
\]

for some $a_{k_{z}^{\kappa}}, \dots, a_{d-1}\in \mathbb{R}$. Now take $v \in \bar{W}^{\kappa}_{z} \cap (W_{z}^{\kappa})^{\perp}$ such that 
$\langle v, v\rangle^{\text{Euc}} = 1$.  
Equation~\ref{eq:metric-svd-decomposed} then implies
\begin{align}
\langle v, v\rangle^{\text{Dec}}_{z} 
&= a_{k_{z}^{\kappa}}^2\sigma_{k_{z}^{\kappa}}^2\|V(z)_{:, k_{z}^{\kappa}}\|^{2} 
+ \dots + a_{d-1}^{2}\sigma_{d-1}^{2}\|V(z)_{:, d-1}\|^{2} \\
&\leq \kappa^{2} \Big(
a_{k_{z}^{\kappa}}^2\|V(z)_{:, k_{z}^{\kappa}}\|^{2} 
+ \dots + a_{d-1}^{2}\|V(z)_{:, d-1}\|^{2}\Big) \\
&= \kappa^{2},
\end{align}
which contradicts the assumption that $\bar{V}_{z}^{\kappa}$ satisfies the 
$\kappa$-stability condition.

\textbf{Note.} This construction cannot be replaced by a direct Frobenius theorem argument, since $k^\kappa_{z}=\ell$ at a point does not imply that $k^\kappa$ is constant in a neighborhood (see Appendix~\ref{ap:rank-instability}).

\subsection{On the local instability of $\kappa$-stable dimension}
\label{ap:rank-instability}

Recall that $k^{\kappa}_{z} = \#\{\,i : \sigma_i(z) > \sqrt{\kappa}\,\}$ counts the number of singular values of $J_{\operatorname{Dec}}(z)$ exceeding the threshold $\sqrt{\kappa}$. While $k^{\kappa}_{z}$ is well defined at every point $z$, it need not be locally constant. In particular, singular values of $J_{\operatorname{Dec}}(z)$ depend continuously on $z$, but they can cross the threshold $\sqrt{\kappa}$ arbitrarily close to a given point. Hence, even if $k^{\kappa}_{z}=\ell$ at some $z$, there may exist nearby points $z'$ with $k^{\kappa}_{z'} > \ell$ (see Figure \ref{fig:non-stable-rank}). 

\begin{figure}[ht]
\begin{center}
\includegraphics[scale=1.]{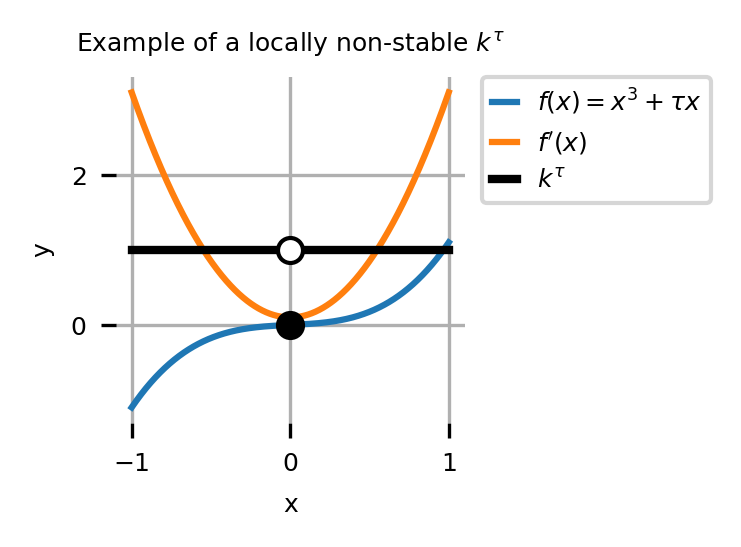}
\end{center}
\caption{\textbf{Example of a local non-stability of a $\kappa$-stable dimension.} A function $f(x) = x^{3} + \kappa x$ has a stable rank $k^{\kappa}$ equal to 1 everywhere except of $0$. So in every neighbourhood of $0$, a stable rank is different than $0$, thus preventing the application of a Frobenious theorem.
}

  \label{fig:non-stable-rank}
\end{figure}

This observation prevents a direct application of the Frobenius or constant rank theorem, which require a rank function that is constant in a neighborhood. Our construction in Section~\ref{sec:decoder-geometry} circumvents this issue by working with an open set $W^{\kappa}_z$ around $z$ on which the rank remains constant, thereby ensuring that both $M^{\kappa}_{z}$ and $\operatorname{Dec}(W^{\kappa}_z)$ are smooth $k^{\kappa}_{z}$-dimensional submanifolds.

\section{Mutational Interpretation of the Ambient Euclidean Metric}
\label{appendix:euclidean-rationale}

In this section we elaborate on the interpretation of the Euclidean metric used in the decoder
output space $\hat{X}\subset\mathbb{R}^{LA}$ and its implications for the geometry of peptide
modifications, as well as on possible metric extensions.

\subsection{Euclidean mutation geometry}

A discrete peptide sequence $\mathtt{p}$ of length $L$ over an alphabet $\mathcal{A}$ of size $A$
is represented as a vector
$\texttt{oh}(\mathtt{p})\in\{0,1\}^{LA}\subset\mathbb{R}^{LA}$ obtained by concatenating
$L$ one-hot vectors,
\[
\texttt{oh}(\mathtt{p})=(e_{a_1},\dots,e_{a_L}),
\]
where $e_{a_\ell}\in\mathbb{R}^A$ denotes the canonical basis vector corresponding to amino acid
$a_\ell$ at position $\ell$.

Let $\texttt{mut}_{a\rightarrow b,\ell}$ denote the substitution of amino acid $a$ by $b$ at
position $\ell$. For any peptide $\mathtt{p}$ with $a_\ell=a$,
\[
\texttt{oh}(\texttt{mut}_{a\rightarrow b,\ell}(\mathtt{p}))
-\texttt{oh}(\mathtt{p})
=\psi_{a\rightarrow b,\ell},
\]
where
\[
\psi_{a\rightarrow b,\ell}
=(0,\dots,e_b-e_a,\dots,0)\in\mathbb{R}^{LA}.
\]

For two substitutions $\texttt{mut}_{a\rightarrow b,\ell_1}$ and
$\texttt{mut}_{c\rightarrow d,\ell_2}$,
\[
\langle\psi_{a\rightarrow b,\ell_1},\psi_{c\rightarrow d,\ell_2}\rangle
=\mathbf{1}_{\{\ell_1=\ell_2\}}
\big(\mathbf{1}_{\{a=c\}}+\mathbf{1}_{\{b=d\}}
-\mathbf{1}_{\{a=d\}}-\mathbf{1}_{\{b=c\}}\big).
\]

Consequently,
\[
\|\psi_{a\rightarrow b,\ell}\|_2^2=2,
\]
substitutions acting on different positions are orthogonal, and substitutions at the same
position have constant pairwise cosine similarity $1/2$ (corresponding to an angle of $30^\circ$).
Thus, the Euclidean metric induces a uniform and isotropic geometry over substitutional
mutations.

This implies that using a Euclidean metric in the ambient amino-acid probability space can be
interpreted as imposing a non-informative prior over mutation directions.

\subsection{Euclidean insertion and deletion geometry}

Let $\texttt{ins}_{a,\ell}$ and $\texttt{del}_{\ell}$ denote insertion of amino acid $a$ at
position $\ell$ and deletion at position $\ell$, respectively. Terminal insertions and deletions
can be treated as substitutions involving a padding symbol $\texttt{pad}$ and therefore have
squared norm $2$, and are orthogonal to other substitutions.

For internal insertions and deletions, shifting of downstream residues induces a perturbation,
whose squared Euclidean norm satisfies
\[
\|\texttt{oh}(\texttt{ins}_{a,\ell}(\mathtt{p}))-\texttt{oh}(\mathtt{p})\|_2^2
=2\Big(\mathbf{1}_{\{\mathtt{p}_\ell=a\}}+
\sum_{i=\ell}^{L-1}\mathbf{1}_{\{\mathtt{p}_{i+1}\neq\mathtt{p}_i\}}\Big)
\le 2(L-\ell+1),
\]
and
\[
\|\texttt{oh}(\texttt{del}_{\ell}(\mathtt{p}))-\texttt{oh}(\mathtt{p})\|_2^2
=2\Big(1+
\sum_{i=\ell}^{L-1}\mathbf{1}_{\{\mathtt{p}_{i+1}\neq\mathtt{p}_i\}}\Big)
\le 2(L-\ell+1).
\]

The potentially large displacement reflects the non-local nature of internal insertions and
deletions, and appropriately penalizes mutations which are likely to induce global structural
rearrangements, such as register shifts or changes in secondary structure.

\subsection{Extension to biologically meaningful ambient metrics by mutation kernels}

Let
\[
\texttt{Mut}=\{\texttt{mut}_{a\rightarrow b,\ell}:a,b\in\mathcal{A},\ \ell\in\{1,\dots,L\}\}
\]
be the set of substitution operators, and consider a positive-definite mutation kernel
\[
\mathcal{K}_{\texttt{Mut}}:\texttt{Mut}\times\texttt{Mut}\rightarrow\mathbb{R}.
\]

Here, diagonal terms of $\mathcal{K}_{\texttt{Mut}}$ control mutation magnitudes, while off-diagonal
terms encode similarity between distinct mutations, enabling incorporation of biochemical
substitution preferences and positional coupling into the ambient geometry.

Define
\[
V_{\texttt{Mut}}=\operatorname{span}\{\psi_{a\rightarrow b,\ell}:\texttt{mut}_{a\rightarrow b,\ell}\in\texttt{Mut}\}.
\]
For
$v_1=\sum_i c_i\psi_i$ and $v_2=\sum_j d_j\psi_j$, define
\[
\mathcal{K}_{\mathcal{K}_{\texttt{Mut}}}^{V_{\texttt{Mut}}}(v_1,v_2)
=\sum_{i,j}c_i d_j\,
\mathcal{K}_{\texttt{Mut}}(\texttt{mut}_i,\texttt{mut}_j).
\]

Every $w\in\mathbb{R}^{LA}$ decomposes uniquely as
$w=w^{V_{\texttt{Mut}}}+w^{V_{\texttt{Mut}}^\perp}$ under the standard Euclidean inner product.
Define the ambient kernel
\[
\mathcal{K}_{\mathcal{K}_{\texttt{Mut}}}^{\mathbb{R}^{LA}}(w_1,w_2)
=\mathcal{K}_{\mathcal{K}_{\texttt{Mut}}}^{V_{\texttt{Mut}}}
(w_1^{V_{\texttt{Mut}}},w_2^{V_{\texttt{Mut}}})
+\langle w_1^{V_{\texttt{Mut}}^\perp},
w_2^{V_{\texttt{Mut}}^\perp}\rangle.
\]

This construction transfers biologically meaningful similarity between mutations to a
Riemannian metric on the ambient space. We plan to explore such metrics in future work.

\paragraph{Extension of $\kappa$-stable manifold construction to kernel-based ambient metrics}

Let $e_i\in\mathbb{R}^{LA}$, $i=1,\dots,LA$, denote the canonical basis vectors, and define the
Gram matrix
\[
G_{\mathcal{K}}[i,j]
=\mathcal{K}_{\mathcal{K}_{\texttt{Mut}}}^{\mathbb{R}^{LA}}(e_i,e_j).
\]
Let
\[
G_{\mathcal{K}}=U_{\mathcal{K}}\Sigma_{\mathcal{K}}U_{\mathcal{K}}^\top
\]
be its eigen decomposition. Define a modified decoder
\[
\hat{\operatorname{Dec}}_{\mathcal{K}}(z)
=\Sigma_{\mathcal{K}}^{1/2}U_{\mathcal{K}}^\top\hat{\operatorname{Dec}}(z).
\]

Then
\[
J_{\hat{\operatorname{Dec}}_{\mathcal{K}}}(z)
=\Sigma_{\mathcal{K}}^{1/2}U_{\mathcal{K}}^\top J_{\hat{\operatorname{Dec}}}(z),
\]
and the induced pullback metric satisfies
\[
J_{\hat{\operatorname{Dec}}_{\mathcal{K}}}(z)^\top
J_{\hat{\operatorname{Dec}}_{\mathcal{K}}}(z)
=J_{\hat{\operatorname{Dec}}}(z)^\top G_{\mathcal{K}} J_{\hat{\operatorname{Dec}}}(z).
\]

Thus, equipping the ambient space with the Mahalanobis inner product induced by $G_{\mathcal{K}}$
and pulling it back through $\hat{\operatorname{Dec}}$ is equivalent to equipping the ambient space
with the standard Euclidean inner product and pulling it back through
$\hat{\operatorname{Dec}}_{\mathcal{K}}$ \cite{doCarmo_RiemannianGeometry}. In other words, both models induce the same
Riemannian metric on latent space. Consequently, the $\kappa$-stable manifold construction from
Subsection~\ref{subsection-kappa-stable} can be applied for $\hat{\operatorname{Dec}}_{\mathcal{K}}$, and the same
argument extends to any fixed Mahalanobis-type ambient metric on $\mathbb{R}^{LA}$.

\section{Proof of Theorem~\ref{MainThm}}\label{ap:proof-of-main-theorem}

In this section we prove the main theorem of the paper. As preparation, we first recall the definition of Riemannian Brownian motion, starting with the compact case.

\paragraph{Riemannian Brownian Motion (compact case).}
Let \( (M,g) \) be a smooth, compact, connected, \( d \)-dimensional Riemannian manifold with Riemannian metric \( g \). A \emph{Riemannian Brownian motion} on \( M \) is a continuous stochastic process
\[
B = \{ B_t \}_{t \geq 0}
\]
defined on a filtered probability space \( (\Omega, \mathcal{F}, \{\mathcal{F}_t\}_{t \geq 0}, \mathbb{P}) \), satisfying:

\begin{enumerate}
    \item \( B_0 = x \in M'\) almost surely, for some fixed starting point \( x \in M \).
    \item The sample paths \( t \mapsto B_t \) are almost surely continuous and adapted to the filtration \( \{\mathcal{F}_t\} \).
    \item For every smooth function \( f \in C^\infty(M) \), the process
    \[
    f(B_t) - f(B_0) - \tfrac{1}{2} \int_0^t (\Delta_g f)(B_s)\,ds
    \]
    is a real-valued local martingale, where \( \Delta_g \) denotes the Laplace–Beltrami operator associated with \( g \).
    \item The generator of \( B_t \) is \( \tfrac{1}{2}\Delta_g \), i.e.
    \[
    \lim_{t \rightarrow 0} \frac{\mathbb{E}[f(B_t)] - f(x)}{t} = \tfrac{1}{2}(\Delta_g f)(x),
    \quad \forall f \in C^\infty(M).
    \]
\end{enumerate}

\paragraph{Extension to the non-compact case: smooth spherical-cap compactification.}
Our manifolds of interest, \( M_{z}^{\kappa}(\alpha), \ \alpha \in (0,1), \ z\in\mathcal{Z} \), are open subsets of \( M^\kappa_z \) and therefore non-compact. To define Brownian motion in this setting, one needs to control the behaviour of paths near the boundary. Classical approaches include: (i) compactification \citep{compactification}, (ii) stopping the process at the boundary \citep{Hsu2002}, or (iii) reflecting it \citep{du2021reflectingbrownianmotiongaussbonnetchern}. In our work we adopt a compactification strategy via a smooth spherical cap (see Figure~\ref{fig:spherecap}), followed by stopping on the boundary of a natural embedding of $M^{\kappa}_{z}(\alpha)$.  

Concretely, let \( k^\kappa_z \) be the $\kappa$-stable dimension and consider the unit sphere 
\[
S^{k^\kappa_z}_1 \subset \mathbb{R}^{k^\kappa_z+1}.
\] 
Take an atlas of this sphere consisting of two charts \( (U_1,\psi_1) \), \( (U_2,\psi_2) \) such that $U_{1}, U_{2}\subset \mathbb{R}^{k_{z}^{\kappa}}$ \(\psi_{1}(U_{1}) \cup \psi_{2}(U_{2}) = S^{k^\kappa_z}_1\), with $\alpha\bar{W}_{z}^{\kappa} \subset U_1 $ $(\alpha \bar{W}_{z}^{\kappa} = \{\alpha v: v\in \bar{W}_{z}^{\kappa}\}$) and 
\[
\psi_1 \circ \psi_2^{-1}(U_2) \cap \bar{W}_{z}^{\kappa} = \emptyset.
\] 
Let \( G_{\psi_i} \) be the pullback metric on \( U_i \) induced by \( \psi_i \).  

Choose a smooth bump function \( b \in C^\infty \) such that
\[
b \equiv 1 \quad \text{on }  (\tfrac{2}{3}\alpha + \tfrac{1}{3})\bar{W_{z}^{\kappa}},
\qquad 
b \equiv 0 \quad \text{on } (\tfrac{1}{3}\alpha + \tfrac{2}{3})\bar{W_{z}^{\kappa}}.
\]
We then define the compactified manifold \( \overline{M^\kappa_z}(\alpha)\) with charts \(\{ (U_1,\psi_1), (U_2,\psi_2)\}\) and Riemannian metric on \( U_1 \) given by
\[
G = b \, G_{\operatorname{Dec}^\kappa_z}\;+\; (1-b)\, G_{\psi_1}, 
\]
where
\[
\overline{\operatorname{Dec}^\kappa_z}(x) = 
\begin{cases}
    \operatorname{Dec}^\kappa_z(x), & x \in \alpha \bar{W}_{z}^{\kappa}, \\[6pt]
    0, & \text{otherwise}.
\end{cases}
\] 
By construction, there is a natural isometric embedding
\[
M^\kappa_z(\alpha) \hookrightarrow \overline{M^\kappa_z}(\alpha).
\]
This compactification allows us to invoke convergence results for Brownian motion on compact manifolds, while ensuring that in the region of interest the geometry coincides with that of the original $\kappa$-stable manifold.

\begin{center}
\begin{figure}[ht] 
    \centering
    \includegraphics[width=0.6\textwidth]{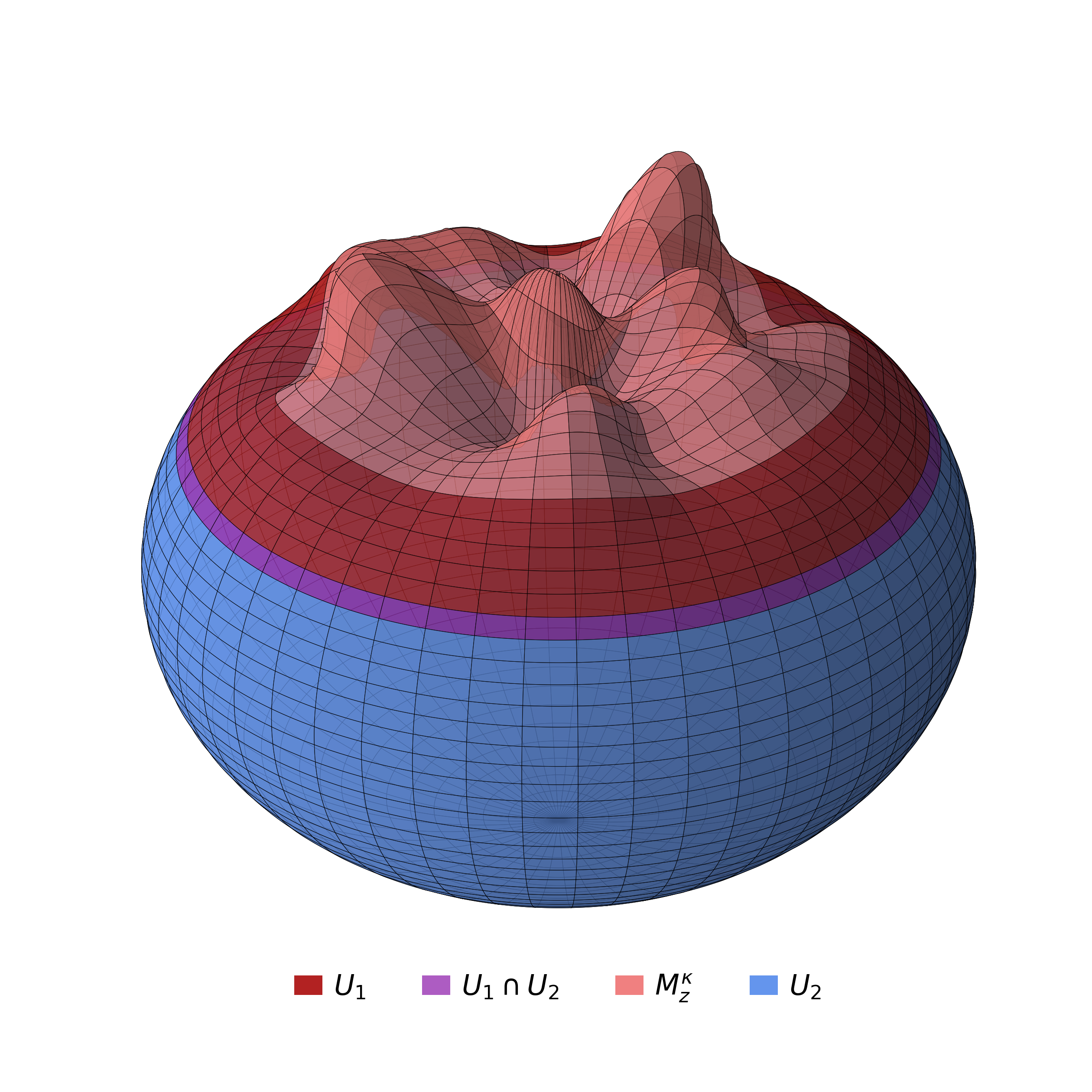}
    \caption{\textbf{Sphere-cap manifold compactification  $\overline{M^\kappa_z}$}. 
    Illustration of how the $M_{z}^{\kappa}$ manifold is compactified by capping off the spherical domain with the appropriate boundary conditions.}
    \label{fig:spherecap}
\end{figure}
\end{center}

\newtheorem*{recalled}{Theorem}
\begin{recalled}[Main Theorem]
Let $(Z_{i}^{\epsilon})_{i\geq 0}$ be the sequence produced by Algorithm~\ref{alg:SORBES}, for $M_{z}^{\kappa}(\alpha)$ with $\alpha \in (0,1)$ and diffusion horizon $T>0$, and define its continuous-time interpolation
\[
Z^{\epsilon}(t) := Z^{\epsilon}_{\lfloor \epsilon^{-2}t \rfloor}, \qquad t\geq 0.
\]
Let $R^{z}_{\kappa} = d_{M_{z}^{\kappa}}(z, \alpha W_{z}^{\kappa})^{c})$, and suppose $L \geq 1$ satisfies
\[
\sup_{x \in M^\kappa_z(\alpha)} \operatorname{Ric}_{M_{z}^{\kappa}}(x) \geq -L^{2}.
\]
Then for $T < \tfrac{(R_{z}^{\kappa})^{2}}{4k_{z}^{\kappa}L}$, as $\epsilon \to 0$, the process $X^{\epsilon}$ converges in distribution to Riemannian Brownian motion stopped at the boundary of $M_{z}^{\kappa}(\alpha)$, with respect to the Skorokhod topology, on a set $C^{T}_{\kappa, z} \subset \Omega$ such that
\[
\mathbb{P}\!\left(C^{T}_{\kappa, z}\right) \;\geq\; 1 - \exp\!\left(-\tfrac{(R_{z}^{\kappa})^{2}}{32T}\right).
\]
\end{recalled}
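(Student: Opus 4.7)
The plan is to reduce the claim to the convergence theorem of \citet{schwarz2022randomwalk} on compact manifolds, using the spherical-cap compactification $\overline{M^\kappa_z}(\alpha)$ defined in the excerpt, and then close the gap between the compactified process and the original, stopped process by a Gaussian exit-time estimate driven by the Ricci lower bound $-L^2$.

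First, I would lift the discrete process $(Z_i^\epsilon)$ to $\overline{M^\kappa_z}(\alpha)$ via the natural isometric embedding $M^\kappa_z(\alpha)\hookrightarrow \overline{M^\kappa_z}(\alpha)$, keeping the algorithmic update (first-order geodesic step plus Christoffel correction) unchanged inside the region $(\tfrac{2}{3}\alpha+\tfrac{1}{3})\bar W_z^\kappa$ where the compactified metric coincides with $G_{\operatorname{Dec}^\kappa_z}$. On the cap region we can freely extend the update rule using the compactified metric; since the event of interest will exclude paths that enter the cap, the extension is immaterial. Because $\overline{M^\kappa_z}(\alpha)$ is smooth, compact, and connected, Theorem~1 of \citet{schwarz2022randomwalk} applies: the continuous interpolation $\bar Z^\epsilon(t)$ converges in distribution (Skorokhod topology) to Riemannian Brownian motion $\bar B_t$ on $\overline{M^\kappa_z}(\alpha)$ started at $z$. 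Here the second-order Christoffel correction in Algorithm~\ref{alg:SORBES} is \emph{essential}: without it, the limit acquires a non-vanishing drift and is no longer Brownian motion, as shown in loc.~cit.

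Next, I would define the good event
\[
C^{T}_{\kappa,z} \;=\; \Bigl\{\sup_{0\leq t\leq T} d_{M_z^\kappa}\bigl(\bar B_t, z\bigr) < R_z^\kappa\Bigr\},
\]
so that on $C^T_{\kappa,z}$ the path $\bar B_{[0,T]}$ never leaves $W_z^\kappa$, and the compactified Brownian motion therefore coincides pathwise with Brownian motion on $M_z^\kappa$ stopped at $\partial M_z^\kappa(\alpha)$. Combined with the continuous mapping theorem and the invariance of the stopped law under the isometric embedding, this identifies the limit of $Z^\epsilon$ with the stopped Riemannian Brownian motion on $M_z^\kappa(\alpha)$, on the event $C^T_{\kappa,z}$.

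The quantitative bound on $\mathbb P(C^T_{\kappa,z})$ is the main technical obstacle and the reason for the hypothesis $T<(R_z^\kappa)^2/(4 k_z^\kappa L)$. Here I would invoke a standard Laplacian/radial-comparison estimate for Brownian motion on a Riemannian manifold of dimension $k_z^\kappa$ with $\operatorname{Ric}\geq -L^2$ on $M_z^\kappa(\alpha)$ (e.g.\ Hsu's radial comparison theorem \citep{hsu2002stochastic} or the equivalent heat-kernel gradient bound of Grigoryan): for $f(x)=d_{M_z^\kappa}(x,z)^2$ one obtains a supermartingale of the form $f(\bar B_t)-c_1 k_z^\kappa t - c_2 L t^2$ on the set $\{f<(R_z^\kappa)^2\}$, from which a Bernstein-type maximal inequality yields
\[
\mathbb P\Bigl(\sup_{0\leq t\leq T} d_{M_z^\kappa}(\bar B_t,z)\geq R_z^\kappa\Bigr)\;\leq\;\exp\!\Bigl(-\tfrac{(R_z^\kappa)^2}{32\,T}\Bigr),
\]
provided $T<(R_z^\kappa)^2/(4 k_z^\kappa L)$ so that the curvature correction is dominated by the diffusive term. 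This is precisely the claimed tail. Putting the three ingredients together (embedding, compact convergence theorem, radial exit estimate) establishes the theorem; the delicate step is matching the exact constants $4$ and $32$ in the Bernstein bound, which I expect to be the main bookkeeping obstacle of the actual proof.
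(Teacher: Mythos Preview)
Your plan matches the paper's proof in its skeleton: spherical-cap compactification, invocation of \citet{schwarz2022randomwalk} on the compact manifold, definition of a confinement event, and a radial exit-time estimate driven by the Ricci bound via Laplacian comparison. Two points deserve sharpening.

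First, the appeal to the continuous mapping theorem is the one genuine soft spot. Stopping a path at the first exit from a closed set is \emph{not} a Skorokhod-continuous functional, so you cannot pass convergence through it directly. The paper makes this explicit (with a deterministic counterexample: $X^n_t=(1-\tfrac1n)\sin t$ converges uniformly to $\sin t$, yet the hitting time of level $1$ jumps from $+\infty$ to $\pi/2$). The fix, which the paper uses, is that on $C^T_{\kappa,z}$ the limiting Brownian motion never reaches $\partial(\alpha W_z^\kappa)$, so for all sufficiently small $\epsilon$ the discrete process also stays inside and both stopping times are trivially equal to $T$; hence $Z^\epsilon=\bar Z^\epsilon$ on $[0,T]$ and convergence carries over without any continuity of the stopping map. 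You should replace the continuous-mapping step by this argument.

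Second, your exit-time estimate is in the right family but packaged differently from the paper. Rather than a Bernstein maximal inequality for a supermartingale with a quadratic-in-$t$ drift, the paper uses the semimartingale decomposition $r_t^2\le 2\int_0^t r_s\,d\beta_s+\int_0^t r_s\Delta r(B_s)\,ds+t$, bounds $r_s\Delta r(B_s)\le 2k_z^\kappa L$ via Laplacian comparison and $r_s\le R_z^\kappa$ before exit (so the drift is linear in $t$, not quadratic), then time-changes the martingale part by Dambis--Dubins--Schwarz and applies a Gaussian tail to the resulting Brownian motion with quadratic variation $\le (R_z^\kappa)^2 T$. This is where the constants $4$ and $32$ fall out cleanly; your Bernstein route would work but getting those exact constants would require essentially reproducing the same computation.
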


\begin{proof}
Let $\overline{M^{\kappa}_{z}(\alpha)}$ be the spherical-cap compactification of $M^{\kappa}_{z}(\alpha)$. \citet{schwarz2022randomwalk} showed that the non-stopped version of Algorithm~\ref{alg:SORBES}, extended to $\overline{M^{\kappa}_{z}(\alpha)}$, produces a process $\overline{Z^{\epsilon}}$ that converges to $B$ in the Skorokhod topology on $\overline{M^{\kappa}_{z}(\alpha)}$.  

For a closed set $A$ define the exit time
\[
T^{X}_{A} = \inf\{t \in [0,T]: X(t) \notin A\} \land T
\]
Both $T_{\alpha W_{z}^{\kappa}}^{\overline{X^{\epsilon}}}$ and $T_{\alpha W_{z}^{\kappa}}^{B}$ are valid stopping times (by right-continuity and because $(W_{z}^{\kappa})^{c}$ is closed), and (from the construction of the Algorithm~\ref{alg:SORBES}:
\[
Z^{\epsilon} = \overline{Z^{\epsilon}}_{\cdot \wedge \alpha W_{z}^{\kappa}}^{\overline{X^{\epsilon}}}
\]
Convergence in Skorokhod topology does not automatically imply convergence of stopping times (see Appendix~\ref{ap:stopping-skoro} for a counterexample). However, for the high-probability event
\[
C^{T}_{\kappa, z} = \left\{\omega \in \Omega: \forall\,t \in [0, T],\ B_{t}(\omega) \in \alpha W_{z}^{\kappa}\right\},
\]
we have that $\exists \epsilon_0$ such that $\forall \epsilon > \epsilon_0$,
\[
T_{\alpha W_{z}^{\kappa}}^{\overline{X^{\epsilon}}} \equiv T, \quad T_{\alpha W_{z}^{\kappa}}^{B} \equiv T,
\]
and thus convergence holds on $C^{T}_{\kappa, z}$.

It remains to lower-bound $\mathbb{P}(C^{T}_{\kappa, z})$. Observe that
\[
\Omega \setminus C^{T}_{\kappa, z} \subset D^{T}_{\kappa, z} 
= \Bigl\{\omega \in \Omega : \exists t \in [0, T],\ d_{M_{z}^{\kappa}}(B_{t}(\omega),z) \geq R_{z}^{\kappa}\Bigr\}.
\]
Hence
\[
\mathbb{P}(C^{T}_{\kappa, z}) \;\geq\; 1 - \mathbb{P}(D^{T}_{\kappa, z}).
\]

\newtheorem{ball-lemma}{Lemma}
\begin{ball-lemma}[Exit-time bound]
Suppose $L \geq 1$ satisfies
\[
\sup_{x \in M^\kappa_z(\alpha)} \operatorname{Ric}_{M_{z}^{\kappa}}(x) \geq -L^{2}.
\]
Let $T_{R^{\kappa}_{z}}$ be the first exit time of Riemannian Brownian motion from
\[
B_{M^{\kappa}_{z}}(B_{0}, R_{z}^{\kappa}) = \{x \in M^{\kappa}_{z}(\alpha): d_{M_{z}^{\kappa}}(B_{0}, x) < R_{z}^{\kappa}\}.
\]
Then
\[
\mathbb{P}(T_{R^{\kappa}_{z}} \leq T) 
\;\leq\; \exp\!\left(-\tfrac{(R_{z}^{\kappa})^{2}}{8T}\Bigl(1 - \tfrac{2Tk_{z}^{\kappa}L}{(R_{z}^{\kappa})^{2}}\Bigr)^{2}\right).
\]
\end{ball-lemma}

\begin{proof}
Let $r(x)=d_{M_z^\kappa}(B_0,x)$ and write $r_t:=r(B_t)$. On $M_z^\kappa$ the function $r$ is smooth. The semimartingale decomposition of $r_t$ (see, e.g., Eq.~3.6.1 in \cite{Hsu2002}) gives
\[
r_t^2 \;\le\; 2\int_0^t r_s\,d\beta_s \;+\; \int_0^t r_s\,\Delta r(B_s)\,ds \;+\; t,
\]
where $\beta$ is a real Brownian motion adapted to $B$, and we have used that the local time term is nonnegative and can be dropped to obtain an inequality.

By the Laplacian comparison theorem (Ricci$(\cdot)\ge -(k_{z}^{\kappa}-1)L^2$), for $r>0$,
\[
\Delta r \;\le\; (k_{z}^{\kappa}-1)\,L\,\coth(Lr)
\;\le\; (k_{z}^{\kappa}-1)\Bigl(L+\tfrac{1}{r}\Bigr),
\]
hence
\[
r\,\Delta r \;\le\; (k_{z}^{\kappa}-1)\bigl( Lr + 1 \bigr).
\]
Up to the first exit time $T_{R_z^\kappa}:=\inf\{t\ge0:\,r_t\ge R_z^\kappa\}$ we have $r_s\le R_z^\kappa$, so
\[
r_s\,\Delta r(B_s)\;\le\;(k_{z}^{\kappa}-1)\bigl( L R_z^\kappa + 1 \bigr)
\;\le\; k_{z}^{\kappa} L \,+\, k_{z}^{\kappa}
\;\le\; 2\,k_{z}^{\kappa} L,
\]
using $L\ge1$. Therefore, for $t=T_{R_z^\kappa}\wedge T$,
\begin{equation}\label{eq:radial-ineq}
r_t^2 \;\le\; 2\!\int_0^t r_s\,d\beta_s \;+\; 2\,k_{z}^{\kappa} L\,t \;+\; t
\;\le\; 2\!\int_0^t r_s\,d\beta_s \;+\; 2\,k_{z}^{\kappa} L\,t \;+\; t.
\end{equation}
On the event $\{T_{R_z^\kappa}\le T\}$ we have $t=T_{R_z^\kappa}$ and $r_t\ge R_z^\kappa$, hence from \eqref{eq:radial-ineq}
\[
(R_z^\kappa)^2 \;\le\; 2\!\int_0^{T_{R_z^\kappa}} r_s\,d\beta_s \;+\; 2\,k_{z}^{\kappa} L\,T.
\]
Rearranging,
\[
\int_0^{T_{R_z^\kappa}} r_s\,d\beta_s \;\ge\; \frac{(R_z^\kappa)^2 - 2\,k_{z}^{\kappa} L\,T}{2}.
\]
Set $M_t:=\int_0^t r_s\,d\beta_s$, a continuous martingale with quadratic variation
$\langle M\rangle_t=\int_0^t r_s^2\,ds \le (R_z^\kappa)^2 t$ up to time $T_{R_z^\kappa}$. By the Dambis--Dubins--Schwarz theorem there exists a standard Brownian motion $W$ such that
$M_{T_{R_z^\kappa}}=W_{\eta}$ with $\eta=\langle M\rangle_{T_{R_z^\kappa}}\le (R_z^\kappa)^2 T_{R_z^\kappa}\le (R_z^\kappa)^2 T$ on $\{T_{R_z^\kappa}\le T\}$. Thus,
\[
\{T_{R_z^\kappa}\le T\} \;\subset\; \Bigl\{ W_{\eta}\;\ge\; \frac{(R_z^\kappa)^2 - 2\,k{\kappa}(z) L\,T}{2}\Bigr\}.
\]
Using the Gaussian tail bound together with $\eta\le (R_z^\kappa)^2 T$ (and the reflection principle),
\[
\mathbb{P}\bigl(T_{R_z^\kappa}\le T\bigr)
\;\le\;
\exp\!\left(-\frac{\bigl((R_z^\kappa)^2 - 2\,k_{z}^{\kappa} L\,T\bigr)^2}{8\,(R_z^\kappa)^2\,T}\right)
\;=\;
\exp\!\left(-\frac{(R_z^\kappa)^2}{8T}\Bigl(1 - \frac{2\,k_{z}^{\kappa} L\,T}{(R_z^\kappa)^2}\Bigr)^{\!2}\right),
\]
which is the claimed bound.
\end{proof}

Applying the lemma, if $T < \tfrac{(R_{z}^{\kappa})^{2}}{4k_{z}^{\kappa}L}$, then
\[
\mathbb{P}(D^{T}_{\kappa, z}) = \mathbb{P}(T_{R_{z}^{\kappa}} \leq T) 
   \;\leq\; \exp\!\left(-\tfrac{(R_{z}^{\kappa})^{2}}{32T}\right),
\]
which yields
\[
\mathbb{P}(C^{T}_{\kappa, z}) \;\geq\; 1 - \exp\!\left(-\tfrac{(R_{z}^{\kappa})^{2}}{32T}\right).
\]
\end{proof}

\paragraph{Remark.} 
The lemma shows that the probability of exiting the ball of radius $R^\kappa_z$ before time $T$ decays exponentially in $\tfrac{(R^\kappa_z)^2}{T}$, up to curvature- and rank-dependent constants. 
Intuitively, this means that with overwhelming probability the Riemannian Brownian motion (and hence our random walk in the $\epsilon \to 0$ limit) remains confined inside $B_{{M^{\kappa}_{z}}(}(z,R^\kappa_z)$ for all $t \le T$. 
This high-probability control is what allows us to restrict attention to the event $C^T_{\kappa,z}$ in the proof of Theorem~\ref{MainThm}.

\subsection{On stopping times and Skorohod convergence}
\label{ap:stopping-skoro}

An important subtlety in the proof of Theorem~\ref{MainThm} is that convergence of processes in the Skorohod topology does not, in general, imply convergence of associated stopping times. Figure~\ref{fig:skorohod-stopping} illustrates this phenomenon with a simple deterministic example.

Let $X_t = \sin(t)$, and consider the approximating sequence of processes
\[
X^n_t = \Bigl(1-\tfrac{1}{n}\Bigr)\sin(t).
\]
Clearly $X^n \to X$ uniformly on compact time intervals, hence also in the Skorohod topology. However, the stopping time defined as
\[
T = \inf\{t \geq 0 : X_t = 1\}
\]
does not converge along this sequence. Indeed, $T = \pi/2$ for $X$, but for every finite $n$, the process $X^n$ never reaches $1$ and therefore $T^n = \infty$. Thus, despite $X^n \to X$ in Skorohod topology, we have $T^n \not\to T$.

\begin{figure}[ht]
\centering
\includegraphics[width=0.8\linewidth]{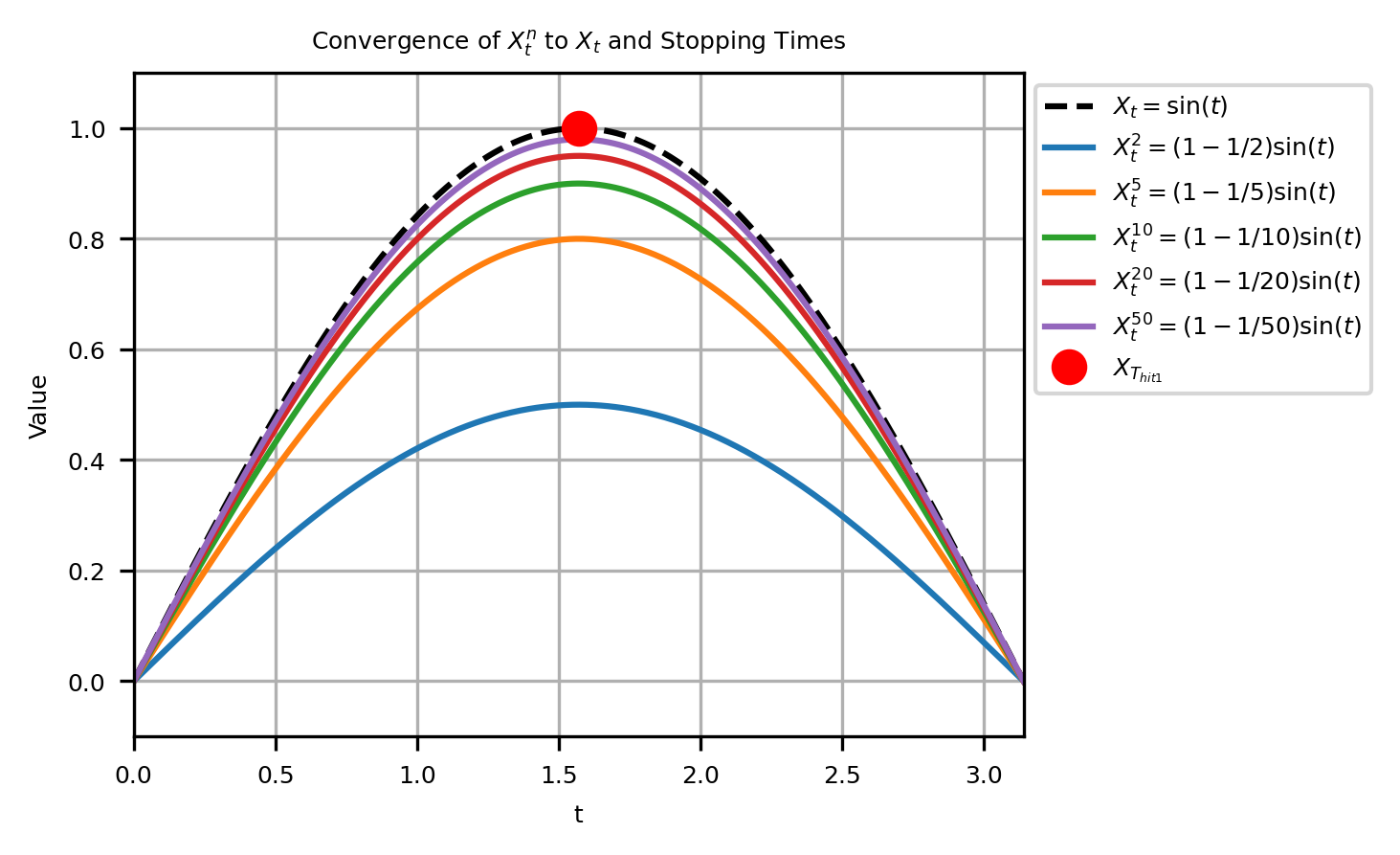}
\caption{\textbf{Skorohod convergence does not imply convergence of stopping times.} 
The black dashed curve shows $X_t=\sin(t)$, which reaches $1$ at $t=\pi/2$ (red dot). The colored curves show approximations $X^n_t=(1-\tfrac{1}{n})\sin(t)$, converging uniformly to $X_t$. However, none of the $X^n_t$ ever reach $1$, so their stopping times for hitting level $1$ are infinite. This illustrates that Skorohod convergence of processes does not guarantee convergence of stopping times defined by hitting closed sets.}
\label{fig:skorohod-stopping}
\end{figure}

In the proof of Theorem~\ref{MainThm} we avoid this issue by restricting to the high-probability set $C^T_{\kappa,z}$ where the Brownian path remains in the interior of the ball. On this event the stopping times agree with $T$, ensuring consistency with the limiting process.

\section{SORBES implementation details}\label{app:sorbs}

We now provide practical details for the implementation of the \textsc{SORBES} algorithm.

\paragraph{Approximating Jacobian.} 

The decoder Jacobian  $J_{\hat{\operatorname{Dec}}}$ can be obtained exactly using one decoder forward pass and \(LA\) backward passes, where \(LA\) is a dimensionality of ambient space. To reduce computational cost, we approximate it via finite differences. Specifically, the \(i\)-th column of the Jacobian is approximated as  
\begin{equation}
\label{eq:jacobian_approx}
\frac{\partial \hat{\operatorname{Dec}}}{\partial z_i}(z) \;\approx\; \frac{\hat{\operatorname{Dec}}(z + \varepsilon e_i) - \hat{\operatorname{Dec}}(z)}{\varepsilon},
\end{equation}
where \(e_i\) denotes the \(i\)-th standard basis vector in the latent space $\mathcal{Z}$ and \(\varepsilon > 0\) is a~small perturbation parameter. This approximation requires only \(d+1\) decoder forward passes, where $d$ is a dimensionality of latent space, providing a~substantial reduction in computational overhead.

\paragraph{Sampling a unit tangent direction.}  
We adopt the efficient implementation of \citet{schwarz2022randomwalk}, which exploits a thin SVD of the decoder Jacobian to orthogonalize tangent directions.

\paragraph{Approximating $\Gamma(z)[v, v]$.}  
Computing Christoffel symbols directly requires evaluating first derivatives of the metric, which is computationally expensive and numerically unstable. Instead, we use an extrinsic approach: the covariant derivative of a curve can be obtained from its Euclidean acceleration in the ambient space, projected back onto the tangent space~\citep{doCarmo_RiemannianGeometry}. Concretely, for a point $z\in\mathcal{Z}$, $z'\in W^{\kappa}_{z}$ and a probe radius $\rho>0$, we approximate the extrinsic acceleration of a decoded curve along $v\in T_{z'}M_{z}^{\kappa}$ by a second-order central difference:
\begin{equation}
\label{eq:acc-ex}
a^{z, \kappa}_{\mathrm{ex}}(v;z',\rho)
\;\approx\;
\frac{\hat{\operatorname{Dec}}(z'+\rho v) - 2\,\hat{\operatorname{Dec}}(z') + \hat{\operatorname{Dec}}(z'-\rho v)}{\rho^2}.
\end{equation}
Projecting back to the latent tangent space using the Moore–Penrose pseudoinverse of the truncated Jacobian yields an efficient approximation of the Christoffel correction:
\begin{equation}
\label{eq:curv-latent}
\Gamma(z')[v, v] \;\approx\; c(v;z',\rho)
\;=\;
J_{\hat{\operatorname{Dec}}}^{\kappa}(z')^{+}\,a_{\mathrm{ex}}(v;z',\rho),
\qquad c(v;z',\rho)\in \mathbb{R}^{k^{\kappa}_{z}}.
\end{equation}

\paragraph{Adaptive step size \(\epsilon\).}
Since the computation of Christoffel symbols involves the (pseudo)inverse of the Jacobian, small values of \(\kappa\) may amplify numerical noise. In this case, the update
\[
z^{\epsilon}_{i} \;=\; z^{\epsilon}_{i-1} + \epsilon v - \epsilon^{2}\Gamma[v,v]
\;=\; z^{\epsilon}_{i-1} + \Delta(\epsilon),
\]
may become unreasonably large in the ambient Euclidean metric on \(U_{z}^{\kappa}\), making the algorithm unstable.  

To control this, we adapt the step size \(\epsilon\) so that the update norm never exceeds a predefined threshold \(\Delta_{\max}\):
\[
\epsilon_{\Delta_{\max}}
\;=\;\min\Bigl\{\,\epsilon'>0 \,:\, \|\Delta(\epsilon')\| \geq \Delta_{\max}\,\Bigr\}\,\vee\,\epsilon,
\]
where \(a\vee b=\max\{a,b\}\).  
This guarantees that the step size is never greater than the nominal \(\epsilon\), but shrinks adaptively whenever the second-order correction is large.

\paragraph{SORBES (Stable/Efficient).}
We refer to the resulting algorithm with adaptive step size control and extrinsic approximation of Christoffel symbols (Section~\ref{sec:riem-rw}) as \textsc{SORBES-SE}. This variant is numerically stable in ill-conditioned regions of the decoder geometry while preserving the efficiency of the original scheme.

\begin{algorithm}[ht]
\caption{\textsc{SORBES-SE}}
\label{alg:SORBES-SE}
\begin{algorithmic}[1]
\REQUIRE $z \in \mathcal{Z}$, $\kappa \geq 0$, step size $\epsilon$, diffusion time $T$, maximum number of steps $\texttt{STEP}_{\texttt{max}}$, $\alpha=0.99$, stability threshold $\Delta_{\max}=0.5$
\STATE $W_{z}^{\kappa}, G_{\operatorname{Dec}}\gets M_{z}^{\kappa}$
\STATE $z^{\epsilon}_{0} \gets z$, \STATE \texttt{stopped} $\gets$ \texttt{False},
\STATE $\sigma \gets 0$, \hfill{($\sigma$ tracks diffusion time)}
\STATE $\texttt{step} = 0$,
\WHILE{$\sigma < T$ and $\texttt{step} < \texttt{STEP}_{\texttt{max}}$}
   \STATE Sample a unit tangent direction $\overline{v}\in S_{z}^{\kappa} = \{u\in T_{z}M_{z}^{\kappa}: \langle u,u\rangle^{\operatorname{Dec}}_{z}=1\}$
  \STATE Set $v \gets \sqrt{k^{\kappa}_{z}}\;\overline{v}$
  \IF{not \texttt{stopped}}
    \STATE Compute trial update $\Delta(\epsilon) \gets \epsilon v - \epsilon^{2}\Gamma(z)[v,v]$
    \STATE \textbf{Adaptive adjustment:} If $\|\Delta(\epsilon)\| > \Delta_{\max}$, shrink step size:
    \[
      \epsilon \;\gets\; \min\{\,\epsilon'>0 : \|\Delta(\epsilon')\| \le \Delta_{\max}\,\}
    \]
    and recompute $\Delta(\epsilon)$.
    \STATE Update latent coordinate:
    \[
      z^{\epsilon}_{i} \;\gets\; z^{\epsilon}_{i-1} + \Delta(\epsilon)
    \]
    \STATE $\sigma \gets \sigma + \epsilon^2$ \hfill{(update of diffusion time)}
    \IF{$z^{\epsilon}_{i}\notin W_{z}^{\kappa}(\alpha)$}
      \STATE \texttt{stopped} $\gets$ \texttt{True}
    \ENDIF
  \ELSE
    \STATE $z^{\epsilon}_{i} \gets z^{\epsilon}_{i-1}$ \hfill{(absorbing state)}
  \ENDIF
\STATE $\texttt{step} \gets \texttt{step} + 1$ \hfill{(step update)}
\ENDWHILE
\\ \textbf{return} $(z^{\epsilon}_{i})_{0\leq i\leq \texttt{step}}, \ \sigma$
\end{algorithmic}
\end{algorithm}

\section{MUTANG - Mutation Enumeration in Tangent Space }
\label{ap:tsme}

The detailed description of \textsc{MUTANG} algorithm is presented in Algorithm~\ref{alg:ts-mutation-min}.

\section{Computational Scalability of Local Enumeration}\label{appendix:le-computational-stability}
Because \textsc{SORBES-SE} uses a finite-difference approximation, we can estimate the inference-time cost of Local Enumeration as a function of peptide length and latent dimensionality. Let $t_{\text{decoder}}$ denote the decoder forward-pass time, $k$ the latent dimension, $n$ the maximum peptide length, and $a$ the alphabet size. The pessimistic computational cost of the Local Enumeration (LE) step can then be approximated as follows:
\begin{itemize}
\item Finite-difference evaluations for the Jacobian ($k$ evaluations) and the Christoffel symbols (2 evaluations): $(k + 2), t_{\text{decoder}}$.
\item Singular value decomposition (SVD): $\mathcal{O}(k n a)$.
\item Projection of the extrinsic acceleration (second-order correction): $\mathcal{O}(k n a)$.
\item \textsc{MUTANG} computation: $\mathcal{O}(k n)$.
\end{itemize}
Overall, a Local Enumeration step scales linearly with the latent dimension $k$, peptide length $n$, and decoder evaluation time $t_{\text{decoder}}$.

\begin{algorithm}[ht]
\caption{\textsc{MUTANG}}
\label{alg:ts-mutation-min}
\begin{algorithmic}[1]
\REQUIRE latent $z\in \mathcal{Z}$; $\kappa \geq 0$; token threshold $\theta_{\text{tok}}$.
\STATE Set $U_{z}^{\kappa}, k_{z}^{\kappa}$ as in Equation \ref{eq:svd}
\STATE $\mathtt{p} \gets \mathtt{p}(z)$
\STATE $\mathcal{P}\gets\emptyset$
\FOR{$j=1$ to $k^{\kappa}_{z}$}
  \STATE $\Delta \operatorname{Dec}^{(j)} \gets \texttt{reshape}(U^{\kappa}(z)_{:, j},\,(L,A))$
  \FOR{$\ell=1$ to $L$}
    \FOR{each $a\in\mathcal{A}$}
      \IF{$\big|\Delta \operatorname{Dec}^{(j)}_{\ell,a}\big|\ge \theta_{\text{tok}}$}
        \STATE $\mathcal{P}\gets \mathcal{P}\cup\{(\ell,a)\}$
      \ENDIF
    \ENDFOR
  \ENDFOR
\ENDFOR
\FOR{$\ell=1$ to $L$}
  \STATE $S_\ell \gets \{a:(\ell,a)\in\mathcal{P}\}\ \cup\ \{\mathtt{p}_\ell\}$ \hfill (identity included)
\ENDFOR
\STATE $\mathcal{C}(\mathtt{p}(z)) \gets \prod_{\ell=1}^{L} S_\ell$
\\ \textbf{return} $\mathcal{C}(\mathtt{p}(z))$
\end{algorithmic}
\end{algorithm}

\section{\textsc{PoGS}}\label{app:pogs}

Below we present the details of PoGS training and evaluation metrics:

PoGS hyperparameters:
\begin{itemize}
    \item PoGS without potential: $\lambda = 0$ and $\mu = 0.1$,
    \item Full PoGS: $\lambda = 0.01$ and $\mu = 0.1$.
    \item All: $\theta_{\text{pot}}=5$.
\end{itemize}

PoGS metrics:
\begin{itemize}
    \item chord ambient length:
    $$\sum_{k=0}^{N-1}\|X_{k+1}-X_k\|_2$$
    \item chord latent length:
    $$\sum_{k=0}^{N-1}\|z_{k+1}-z_k\|_2$$
\end{itemize}

For computation of seeds and wells, we excluded first and last 20\% of a peptide path  were excluded to avoid trivial rediscovery. 

For each pair, the chord length $N$ was determined dynamically as  
\[
N = \left\lfloor \rho \cdot \|z_{a} - z_{b}\|_{2} \right\rfloor,
\]  
where $\rho$ is the point density hyperparameter (set to $\rho = 90$ in our experiments).  
This construction guarantees that longer trajectories in the latent space are sampled more densely than shorter ones, preserving a uniform resolution across geodesics of varying length. 
The geodesic points $\{z_i\}_{i=1}^n$ were optimized using the Adam optimizer with learning rate $\eta = 10^{-3}$ and weight decay $10^{-5}$. We applied a \texttt{ReduceLROnPlateau} scheduler, which decreased the learning rate by a factor of $0.8$ whenever no improvement in the loss was observed for a number of iterations equal to the patience hyperparameter.  
The endpoints $z_a$ and $z_b$ were kept fixed throughout the optimization by zeroing their gradients at every step.

\begin{algorithm}[ht]
\caption{\textsc{PoGS}}
\label{alg:pogs}
\begin{algorithmic}[1]
\REQUIRE seeds $z_a,z_b$, potential function $\Phi$, nb of segments $N$, weights $\lambda,\mu$, steps $T$,
\STATE Initialize $z_0\gets z_a$, $z_N\gets z_b$, $z_{1:N-1}$ by linear interpolation in latent space
\FOR{$t=1$ to $T$}
  \STATE $X_k\gets \log(\operatorname{Dec})(z_k)$ for $k=0..N$
  \STATE Compute energy $\mathcal{E}_\lambda(Z)$ as in \eqref{eq:biseed-energy}
  \STATE Take a gradient step on $z_{1:N-1}$ to minimize $\mathcal{E}_{\lambda}(Z)$
\ENDFOR
\\ \textbf{return} $\{\mathtt{p}(z_k)\}_{k=0}^N$ 
\end{algorithmic}
\end{algorithm}

\section{APEX-potential for PoGS}
\label{app:pogs-property-predictor}

The APEX predictor~\citep{Wan2024DeepLearningEnabled} estimates minimum inhibitory concentration (MIC) values against $11$ bacterial strains, but it operates on concrete peptide \emph{sequences}. In Potential-Minimizing Geodesic Search (PoGS), optimization proceeds over \emph{latent-space chords}, i.e., intermediate points $z$ that decode to \emph{position-factorized distributions} over peptides rather than single sequences:
\[
\operatorname{Dec}(z)\in\mathbb{R}^{L\times A},
\]
where $L$ is the maximum peptide length and $A=21$ is the amino-acid alphabet augmented with padding. To enable PoGS, we first \emph{distill} the sequence-level APEX potential into a surrogate that accepts peptide \emph{distributions}.

\paragraph{Dataset construction.}
Peptides from the HydrAMP training set ~\citep{Szymczak2023HydrAMP} were encoded into latent codes $z$. In order to obtain multiple distributions from a single peptide, we then created four clones $z'$ of latent codes $z$. We applied a 2×2 perturbation scheme: two clones were injected with Gaussian noise $N(0,0.05)$, and two were left unchanged. Finally, these four latent codes were decoded to $\operatorname{Dec}(z')$, using a softmax scaling with temperature of 1.0 for one pair (noisy and non-noisy) and a temperature of 1.5 to the other pair, resulting in four distributions per peptide. This yielded 1,060,000 peptide distributions in total. For each $\operatorname{Dec}(z')$, we enumerated the $N=20$ most-probable sequences
\[
\big(P_0(z'),\dots,P_{N-1}(z')\big)\quad\text{with probabilities}\quad \big(p_0(z'),\dots,p_{N-1}(z')\big),
\]
applied APEX to each $P_i(z')$ to obtain MIC \emph{vectors} $\mathrm{MIC}_{P_i(z')}\in\mathbb{R}^{11}$, and defined the distribution’s \emph{expected} MIC vector via the probability-weighted average
\[
\Phi_{\mathrm{MIC}}^{\mathrm{true}}\big(\operatorname{Dec}(z')\big)
= \sum_{i=0}^{N-1}\mathrm{MIC}_{P_i(z')}\cdot
\frac{p_i(z')}{\sum_{j=0}^{N-1}p_j(z')}\ \in\ \mathbb{R}^{11}.
\]

\paragraph{Training protocol and standardization.}
We split the dataset into $80\%$ train, $10\%$ validation, and $10\%$ test in such a way that no two sets contain distributions originating from the same peptide. Let $\mu,\sigma\in\mathbb{R}^{11}$ be the per-strain mean and standard deviation computed \emph{on the training set}. Targets were z-scored componentwise:
\[
y^{\mathrm{z}}=\frac{y-\mu}{\sigma}\,.
\]
We trained an encoder-only transformer that \emph{operates on distributions} $\operatorname{Dec}(z)\in\mathbb{R}^{L\times A}$ and predicts z-scored MIC vectors in $\mathbb{R}^{11}$:
\[
\Phi_{\mathrm{MIC}}^{\mathrm{model}}:\ \operatorname{Dec}(z)\ \mapsto\ \mathbb{R}^{11}.
\]
Architecture: three transformer encoder layers (four heads), embedding dimension $128$, feed-forward dimension $256$, dropout $0.05$. Optimization used Adam (learning rate $10^{-4}$) for $15$ epochs with mean-squared error (MSE) loss on z-scored targets:
\[
\mathcal{L}_{\mathrm{MSE}}=\frac{1}{11}\,\big\|\Phi_{\mathrm{MIC}}^{\mathrm{model}}(\operatorname{Dec}(z)) - y^{\mathrm{z}}\big\|_2^2.
\]

\paragraph{Final potential used by PoGS.}
PoGS operates on flattened \emph{log-probabilities}. Let $X\in\mathbb{R}^{L\cdot A}$ be the flattened log-probability vector. We reconstruct a valid distribution using PyTorch-style operations:
\[
P(X)=\operatorname{softmax}\!\big(\operatorname{X.reshape}(L, A),\ \texttt{dim}=1\big)\ \in\ [0,1]^{L\times A},
\]
where \texttt{dim=1} is the amino-acid dimension. The surrogate outputs a z-scored MIC vector
\[
\widehat{m}^{\mathrm{z}}(X)=\Phi_{\mathrm{MIC}}^{\mathrm{model}}\!\big(P(X)\big)\in\mathbb{R}^{11}.
\]
Restricting to the three target \textit{E.~coli} strains (index set $\mathcal{I}_{\textit{E.~coli}}$), the scalar property potential used by PoGS is
\[
\Phi(X)=\mathbf{1}^\top\big[\widehat{m}^{\mathrm{z}}(X)\big]_{\mathcal{I}_{\textit{E.~coli}}}\,.
\]

\section{PoGS ablation study}\label{appendix:pogs-ablation}
\subsection{Analysis of results for different choice of metric and prototypes set}
To assess sensitivity of PoGS to the choice of the metric as well as to the choice of prototypes, we compared our implementation with two alternatives: using amino-acid probabilities instead of logits, and straight (Euclidean) interpolation, on a new set of 60 prototypes (Table \ref{tab:pogs-metrics}). Logits yielded better performance in terms of potential, counts of seeds and wells, supporting our choice. Importantly, both metrics outperformed straight interpolation, and PoGS achieved even better results on these new prototypes than those reported in Table \ref{tab:biseed-results}. These results show that PoGS improves results regardless of the metric used or the prototype set. 

\begin{table}[h]

\centering
\caption{PoGS with the original metric compared to PoGS with a metric on decoded probabilities, as well as to straight interpolation, for a different set of prototypes than in Table \ref{tab:biseed-results}. }
\resizebox{\columnwidth}{!}{
\begin{tabular}{lccccccc}
\hline
\textbf{Method} & 
\textbf{Latent Length} & 
\textbf{Ambient Length (orig.)} & 
\textbf{Ambient Length (probs)} & 
\textbf{Peptide Path Length} & 
\textbf{Potential} & 
\textbf{Seeds} & 
\textbf{Wells} \\
\hline
Straight interpolation &
$6.50 \pm 1.34$ &
$3448.7 \pm 403.8$ &
$4.76 \pm 4.94$ &
$47.0 \pm 20.9$ &
$-1732.8 \pm 272.7$ &
$117 \pm 42$ &
$41 \pm 14$ \\
PoGS (with probs metric) &
$11.48 \pm 3.36$ &
$17227.93 \pm 147.8$ &
$1.67 \pm 1.16$ &
$42.7 \pm 18.1$ &
$-1729.9 \pm 271.4$ &
$176 \pm 43$ &
$67 \pm 14$ \\
PoGS (original) &
$14.12 \pm 3.07$ &
$5001.4 \pm 236.7$ &
$9.97 \pm 4.96$ &
$63.3 \pm 29.3$ &
$-2329.0 \pm 420.0$ &
$188 \pm 29$ &
$80 \pm 20$ \\
\hline
\end{tabular}\label{tab:pogs-metrics}}

\end{table}

\subsection{Analysis of results for different choice of potential}

To evaluate PoGS’s sensitivity to the choice of potential and its applicability to multi-objective settings, we applied it to jointly maximize two physicochemical properties by assigning weight $
alpha$ to the hydrophobicity and 1 – $\alpha$ to charge (Table ~\ref{tab:pogs-potential}). Across all values of $\alpha$, PoGS outperformed straight-line (Euclidean) interpolation. For this experiment, we selected a new set of 60 prototypes from the GRAMPA and DRAMP datasets, demonstrating that PoGS achieves superior performance regardless of both the potential used and the prototype set. 

\begin{table}[h!]
\centering
\resizebox{\columnwidth}{!}{
\begin{tabular}{lcccccc}
\hline
\textbf{Method} &
\textbf{Latent Length} &
\textbf{Ambient Length} &
\textbf{Peptide Path Length} &
\textbf{Max Hydrophobicity} &
\textbf{Max Charge} &
\textbf{Maximized Multiobjective Potential} \\
\hline
PoGS w.\ $\alpha = 0.9$ &
$12.07 \pm 2.57$ &
$4573.57 \pm 19$ &
$66.86 \pm 24.14$ &
$8.40 \pm 0.72$ &
$15.81 \pm 2.31$ &
$24.21 \pm 2.42$ \\
PoGS w.\ $\alpha = 0.5$ &
$10.04 \pm 2.05$ &
$3624.69 \pm 185.09$ &
$57.66 \pm 27.08$ &
$7.64 \pm 0.63$ &
$15.75 \pm 2.15$ &
$23.39 \pm 2.24$ \\
PoGS w.\ $\alpha = 0.1$ &
$10.61 \pm 2.13$ &
$3651.19 \pm 190.39$ &
$55.93 \pm 25.80$ &
$7.91 \pm 0.64$ &
$14.76 \pm 2.26$ &
$22.67 \pm 2.35$ \\
Straight line (Euclidean) &
$6.51 \pm 1.34$ &
$3448.70 \pm 403.80$ &
$47.00 \pm 20.90$ &
$7.90 \pm 0.64$ &
$13.76 \pm 2.22$ &
$21.66 \pm 2.35$ \\
\hline
\end{tabular}}
\caption{Comparison of PoGS multiobjective optimization across different $\alpha$ values and Euclidean straight-line baselines.}
\end{table}\label{tab:pogs-potential}

\section{LE-BO Hyperparameters}\label{appendix:hyperparameters}

We enumerate all hyperparameter values of our optimization algorithm LE-BO and all its sub-algorithms.
\begin{itemize}
    \item Algorithm \ref{alg:le-bo} LE-BO - Local Enumeration Bayesian Optimization
    \begin{itemize}
        \item Trust region distance $d_{\text{trust}}=2$.
        \item Number of ROBOT evaluations per iteration $k_{\text{ROBOT}}=3$.
        \item Diversity threshold $d_{\text{ROBOT}}=2$.
        \item  Following the approach of \cite{eberhardt2024combining}, as a~surrogate model, we use a~Gaussian Process  $\operatorname{GP}$ with the Tanimoto similarity kernel~\cite{szedmak2025generalizationtanimototypekernelsreal}, applied to the MAP4 fingerprints of peptides~\cite{Capecchi2020}.
        \item Aquisition function $\operatorname{GP.acquistion}$ was chosen to be Log Expected Improvement.
    \end{itemize}
    \item Algorithm \ref{alg:enumerate-multi} \textsc{LocalEnumeration}
    \begin{itemize}
        \item $\kappa_{SORBES} = 0.01$.
        \item $\kappa_{\textsc{MUTANG}} = 10^{-6}$.
        \item Number of trajectories $M = 10$.
        \item Walk time budget $T_{\text{walk}} = 0.1$.
        \item Nominal step size $\epsilon = 0.1$.
        \item Mutation threshold $\theta_{\text{mut}}= 10^{-6}$.
    \end{itemize}
    \item A probe radius $\rho>0$ in the second-order central difference approximation of the extrinsic acceleration (Equation \ref{eq:acc-ex}) $\rho = 0.05$.
    \item Step of the finite-difference approximation of the decoder Jacobian (Equation \ref{eq:jacobian_approx}) $\varepsilon = 0.05$.
\end{itemize}

\section{LE-BO time and memory profiling}\label{appendix:lebo-time-memory}

To quantify the computational gains of our method, we measured the average runtime of LE-BO over 10 iterations and 6 seeds, and compared it to SAASBO (\cite{eriksson2021high}) under the same conditions (Table~\ref{tab:lebo-time-mem}). LE-BO required 8× less time per iteration and used 1.5× less memory. Moreover, Local Enumeration accounted for only 35\% of the total runtime, underscoring its efficiency. Average execution time of a single optimization run of LE-Bo with 1400 iterations was 1h20m (±30m). All computations were performed for the HydrAMP model with dimension 64 and measured on a Mac Mini M4Pro machine with 24GB of RAM.

\begin{table}[h!]
\centering
\resizebox{\columnwidth}{!}{
\begin{tabular}{lccc}
\hline
\textbf{Method} &
\textbf{Local Enumeration Time / Iteration (s)} &
\textbf{Total Iteration Time (s)} &
\textbf{Memory (MB)} \\
\hline
LE-BO &
$1.03 \pm 0.31$ &
$2.89 \pm 2.21$ &
$1490 \pm 210$ \\
SAASBO (\cite{eriksson2021high})&
N/A &
$23.31 \pm 9.94$ &
$2248 \pm 12$ \\
\hline
\end{tabular}}
\caption{Runtime and memory comparison between LE-BO and SAASBO (\cite{eriksson2021high}).}
\end{table}\label{tab:lebo-time-mem}

\section{LE-BO Ablation study}\label{appendix:lebo-ablation}
\subsection{Analysis of results for different $\kappa_{SORBES}$ and $\kappa_{MUTANG}$}

Additional ablation study show that alternative $\kappa$ values can even outperform those originally selected, demonstrating that LE-BO’s performance can be further enhanced through targeted hyperparameter tuning $~\ref{tab:lebo-other-kappas}$. An ablation with respect to $\alpha$ is unnecessary because, with only a single SORBES-SE step, $\alpha$ has no effect on the search process.

\begin{table}[ht]
\centering
\resizebox{\columnwidth}{!}{
\begin{tabular}{ccccccccc}
\hline
\(\boldsymbol{\kappa_{\text{mutang}}}\) &
\(\boldsymbol{\kappa_{\text{sorbes}}}\) &
\textbf{FL14} &
\textbf{KY14} &
\textbf{KF16} &
\textbf{KK16} &
\textbf{mammuthusin-3} &
\textbf{hydrodamin-2} &
\textbf{Avg. Diff. from LE-BO} \\
\hline
\(10^{-8}\) & \(10^{-3}\) &
$0.74 \pm 0.28$ &
$0.83 \pm 0.44$ &
$0.56 \pm 0.29$ &
$0.58 \pm 0.30$&
\textbf{0.44 $\pm$ 0.29} &
\textbf{0.38 $\pm$ 0.13} &
$0.04 \pm 0.18$ \\

\(10^{-4}\) & \(10^{-3}\) &
$0.53 \pm 0.26$ &
$0.65 \pm 0.39$ &
$0.66 \pm 0.11$ &
\textbf{0.36 $\pm$ 0.09} &
$0.60 \pm 0.69$ &
$0.45 \pm 0.14$ &
\textbf{-0.01 $\pm$ 0.12} \\

\(10^{-8}\) & \(10^{-1}\) &
\textbf{0.41 $\pm$ 0.11} &
$0.65 \pm 0.44$ &
\textbf{0.38 $\pm$ 0.14} &
$0.73 \pm 0.34$ &
$0.58 \pm 0.21$ &
$0.45 \pm 0.11$ &
$-0.01 \pm 0.19$ \\

\(10^{-4}\) & \(10^{-1}\) &
$0.83 \pm 0.20$ &
$0.54 \pm 0.31$ &
$0.72 \pm 0.29$ &
$0.66 \pm 0.31$ &
$0.60 \pm 0.45$ &
$0.63 \pm 0.40$ &
$0.12 \pm 0.07$ \\
\hline
\(10^{-6}\) (orig.) & \(10^{-2}\) (orig.) &
$0.604 \pm 0.22$ &
\textbf{0.502 $\pm$ 0.24} &
$0.600 \pm 0.29$ &
$0.498 \pm 0.14$ &
$0.498 \pm 0.38$ &
$0.581 \pm 0.34$ &
$0.0 \pm 0.0$ \\
\hline
\end{tabular}}
\caption{LE-BO performance for different hyperparameter values. Best values of minimized MIC as predicted by APEX are bolded. The last column reports the average difference between the alternative hyperparameter setting and the original used in the manuscript.}
\end{table}\label{tab:lebo-other-kappas}

\subsection{Analysis of results of LE-BO without ROBOT (\cite{Maus2023ROBOT})}

Ablation analysis of the model without ROBOT \cite{Maus2023ROBOT} confirms that ROBOT improves LE-BO for 4 out of 6 seeds (with average difference in log MIC of -0.03).

\begin{table}[h!]
\centering
\resizebox{\columnwidth}{!}{
\begin{tabular}{lccccccc}
\hline
\textbf{Method} &
\textbf{FL14} &
\textbf{KY14} &
\textbf{KF16} &
\textbf{KK16} &
\textbf{mammuthusin-3} &
\textbf{hydrodamin-2} &
\textbf{Avg. diff. from LE-BO} \\
\hline
LE-BO w/o ROBOT &
$0.75 \pm 0.41$ &
$0.52 \pm 0.17$ &
\textbf{0.56 $\pm$ 0.11} &
$0.63 \pm 0.28$ &
$0.57 \pm 0.61$ &
\textbf{0.42 $\pm$ 0.20} &
$0.03 \pm 0.12$ \\
LE-BO &
\textbf{0.604 $\pm$ 0.22} &
\textbf{0.502 $\pm$ 0.24} &
0.600 $\pm$ 0.29 &
\textbf{0.498 $\pm$ 0.14} &
\textbf{0.498 $\pm$ 0.38} &
0.581 $\pm$ 0.34 &
\textbf{0.000 $\pm$ 0.000} \\
\hline
\end{tabular}}
\caption{Comparison of LE-BO with and without ROBOT across six peptides and mean deviation from LE-BO baseline.}
\end{table}

\subsection{Analysis of results for other seeds}

To show consistent gains compared to ablations across different seeds, we additionally performed an LE-BO ablation study on three further seeds (LL13, RC16, and KI21) from the PoGS optimization (Table~\ref{tab:lebo-other-seeds}). The results confirm that clear advantage of enabling mutations and of using SORBES-SE as the random walk method can be observed regardless of the chosen seeds. Taken together, all three LE-BO components - SORBES-SE, enabling mutation (MUTANG), and ROBOT - contribute meaningful improvements. However, enabling mutation with MUTANG provides the most substantial benefit: across all ablations, the LE-BO variants incorporating MUTANG consistently outperform their counterparts without it.

\begin{table}[h!]
\centering
\resizebox{\columnwidth}{!}{
\begin{tabular}{clccccc}
\hline
&
\textbf{Walk} &
\textbf{Mutation} &
\textbf{LL13} &
\textbf{RC16} &
\textbf{KI21} &
\textbf{Difference from LE-BO} \\
\hline
&
Euclidean & $\times$ &
$0.787 \pm 0.408$ &
$1.141 \pm 0.298$ &
$1.170 \pm 0.283$ &
$1.033 \pm 0.174$ \\

&
SORBES-SE & $\times$ &
$1.009 \pm 0.408$ &
$1.128 \pm 0.298$ &
$1.120 \pm 0.283$ &
$0.709 \pm 0.204$ \\

&
-- & $\checkmark$ &
$1.343 \pm 0.356$ &
$1.225 \pm 0.275$ &
$0.803 \pm 0.238$ &
$0.747 \pm 0.125$ \\

&
Euclidean & $\checkmark$ &
$0.630 \pm 0.190$ &
$0.751 \pm 0.291$ &
$0.505 \pm 0.199$ &
$0.253 \pm 0.091$ \\ 
\hline 
LE-BO &
SORBES-SE & $\checkmark$ &
\textbf{0.482 $\pm$ 0.226} &
\textbf{0.458 $\pm$ 0.243} &
\textbf{0.190 $\pm$ 0.191} &
\textbf{0.000 $\pm$ 0.000} \\
\hline
\end{tabular}}
\caption{Confirmation of improved performance of LE-BO compared to ablations for different seeds than in Table ~\ref{tab:LE-BO-results}}
\end{table}\label{tab:lebo-other-seeds}

\subsection{Analysis of results for different oracles}

To evaluate robustness across different oracles and to verify that additional peptide properties benefit from geometry-aware exploration, we successfully applied LE-BO and show that geometry-aware exploration improves prototype peptides for three tasks:
\begin{enumerate}
    \item minimize MIC using DEEP-AMP \cite{Pandi2023CellFreeDL} regressor other than APEX as oracle,
    \item minimize toxicity using ToxiPrep \cite{Guan2025ToxiPep} classification probabilities as oracle,
    \item maximize hydrophobicity computed in the Eisenberg scale \cite{Eisenberg1982Hydrophobic} as oracle.
\end{enumerate}

\begin{table}[h!]
\centering
\resizebox{\columnwidth}{!}{
\begin{tabular}{lcccccc}
\hline
 & \textbf{KY14} &
\textbf{KF16} &
\textbf{KK16} &
\textbf{FL14} &
\textbf{mammuthusin-3} &
\textbf{hydrodamin-2} \\
\hline
log\textsubscript{2}(MIC) (seed) &
$2.00$ &
$5.06$ &
$1.85$ &
$1.82$ &
$4.02$ &
$5.09$ \\

Euclidean LE-BO &
-0.66 $\pm$ 0.50 &
0.04 $\pm$ 0.19 &
-0.56 $\pm$ 0.65 &
-0.55 $\pm$ 0.46 &
-0.48 $\pm$ 0.72 &
-0.15 $\pm$ 0.56 \\
\hline

LE-BO &
\textbf{-0.80 $\pm$ 1.06} &
\textbf{-0.47 $\pm$ 0.38} &
\textbf{-1.20 $\pm$ 1.23} &
\textbf{-2.35 $\pm$ 0.41} &
\textbf{-0.77 $\pm$ 0.75} &
\textbf{-0.93 $\pm$ 0.89} \\
\hline
\end{tabular}}
\caption{Minimization of $\text{log}_2$(MIC) values using LE-BO with DEEP-AMP \cite{Pandi2023CellFreeDL} as oracle.}
\end{table}\label{tab:deep-amp}

\begin{table}[h!]
\centering
\resizebox{\columnwidth}{!}{
\begin{tabular}{lcccccc}
\hline
 & \textbf{KY14} &
\textbf{KF16} &
\textbf{KK16} &
\textbf{FL14} &
\textbf{mammuthusin-3} &
\textbf{hydrodamin-2} \\
\hline
Toxicity (seed) &
$0.8789$ &
$0.9283$ &
$0.9044$ &
$0.9932$ &
$0.5193$ &
$0.0522$ \\

Euclidean LE-BO &
0.014 $\pm$ 0.002 &
0.016 $\pm$ 0.003 &	
0.016 $\pm$ 0.005 &	
0.015 $\pm$ 0.001 &
0.014 $\pm$ 0.002 &
0.016 $\pm$ 0.002 \\
\hline

LE-BO &
\textbf{0.0115 $\pm$ 0.0007} &
\textbf{0.0117 $\pm$ 0.0012} &
\textbf{0.0120 $\pm$ 0.0022} &
\textbf{0.0122 $\pm$ 0.0015} &
\textbf{0.0126 $\pm$ 0.0013} &
\textbf{0.0135 $\pm$ 0.0012} \\
\hline
\end{tabular}}
\caption{Minimization of toxicity values using LE-BO with toxicity probabilities returned by ToxiPep \cite{Guan2025ToxiPep} as oracle.}
\end{table}\label{tab:toxi-pep}

\begin{table}[h!]
\centering
\resizebox{\columnwidth}{!}{
\begin{tabular}{lcccccc}
\hline
 & \textbf{KY14} &
\textbf{KF16} &
\textbf{KK16} &
\textbf{FL14} &
\textbf{mammuthusin-3} &
\textbf{hydrodamin-2} \\
\hline
Hydrophobicity (seed) &
$-0.34$ &
$-0.60$ &
$0.03$ &
$0.12$ &
$0.17$ &
$-0.33$ \\

Euclidean LE-BO &
1.114 $\pm$ 0.091 &
1.208 $\pm$ 0.048 &
1.162 $\pm$ 0.112 &
1.193 $\pm$ 0.029 &
1.323 $\pm$ 0.031 &
\textbf{0.963 $\pm$ 0.084} \\
\hline
LE-BO &
\textbf{1.255 $\pm$ 0.013} &
\textbf{1.270 $\pm$ 0.048} &
\textbf{1.242 $\pm$ 0.067} &
\textbf{1.265 $\pm$ 0.026} &
\textbf{1.325 $\pm$ 0.022} &
0.953 $\pm$ 0.095 \\
\hline
\end{tabular}}
\caption{ Maximization of hydrophobicity using LE-BO with hydrophobicity values computed in Eisenberg scale \cite{Eisenberg1982Hydrophobic} oracle.}
\end{table}\label{tab:eisenberg}

\section{Geometrical Analysis of LE-BO}\label{appendix:latent-interpretation}

Across both HydrAMP and PepCVAE, the effective rank of the latent space is consistently much lower than the nominal latent dimension (Figure~\ref{fig:stable-rank}A,B). Furthermore, the effective rank exhibits a strong positive correlation with peptide length (Figure~\ref{fig:stable-rank}C,D; Appendix~\ref{appendix:rank}), indicating increasing geometric complexity for longer peptides.

Figure~\ref{fig:dimension-vs-neighborhoods} analyzes how this latent dimensionality affects local candidate enumeration. \textsc{LE-BO} with \textsc{SORBES-SE} and mutation enabled (Figure~\ref{fig:dimension-vs-neighborhoods}A,E) identifies substantially more candidates in local enumeration as the $\kappa$-stable dimension increases, corresponding to longer peptides. In contrast, Euclidean-based search exhibits the opposite trend, producing fewer candidates in higher-dimensional regions (Figure~\ref{fig:dimension-vs-neighborhoods}B,C,F,G). Since the proposed optimization procedure relies on dense local peptide neighborhoods, this behavior explains the weaker empirical performance of Euclidean approaches.

Quantitatively, \textsc{LE-BO} with \textsc{SORBES-SE} and mutation generates significantly more candidates on average (16,730 $\pm$ 14,786; Figure~\ref{fig:dimension-vs-neighborhoods}A,E) than Euclidean search with mutation (442 $\pm$ 151; Figure~\ref{fig:dimension-vs-neighborhoods}B,F). Even without mutation, \textsc{SORBES-SE} yields higher candidate counts (17 $\pm$ 5; Figure~\ref{fig:dimension-vs-neighborhoods}D,H) than its Euclidean counterpart (13 $\pm$ 5; Figure~\ref{fig:dimension-vs-neighborhoods}C,G).
Finally, the $\kappa$-stable latent dimension is strongly negatively correlated with peptide charge (Spearman $r=-0.55$) and aromaticity ($r=-0.57$; Figure~\ref{fig:dimension-vs-properites}). This suggests that peptides associated with higher latent dimensionality tend to exhibit higher solubility but reduced membrane affinity and antimicrobial activity. By enumerating substantially more candidates in these high-dimensional regions, \textsc{LE-BO} increases the likelihood that such unfavorable properties can be corrected during optimization.

\begin{figure}[ht]
\begin{center}
\includegraphics[scale=0.9]{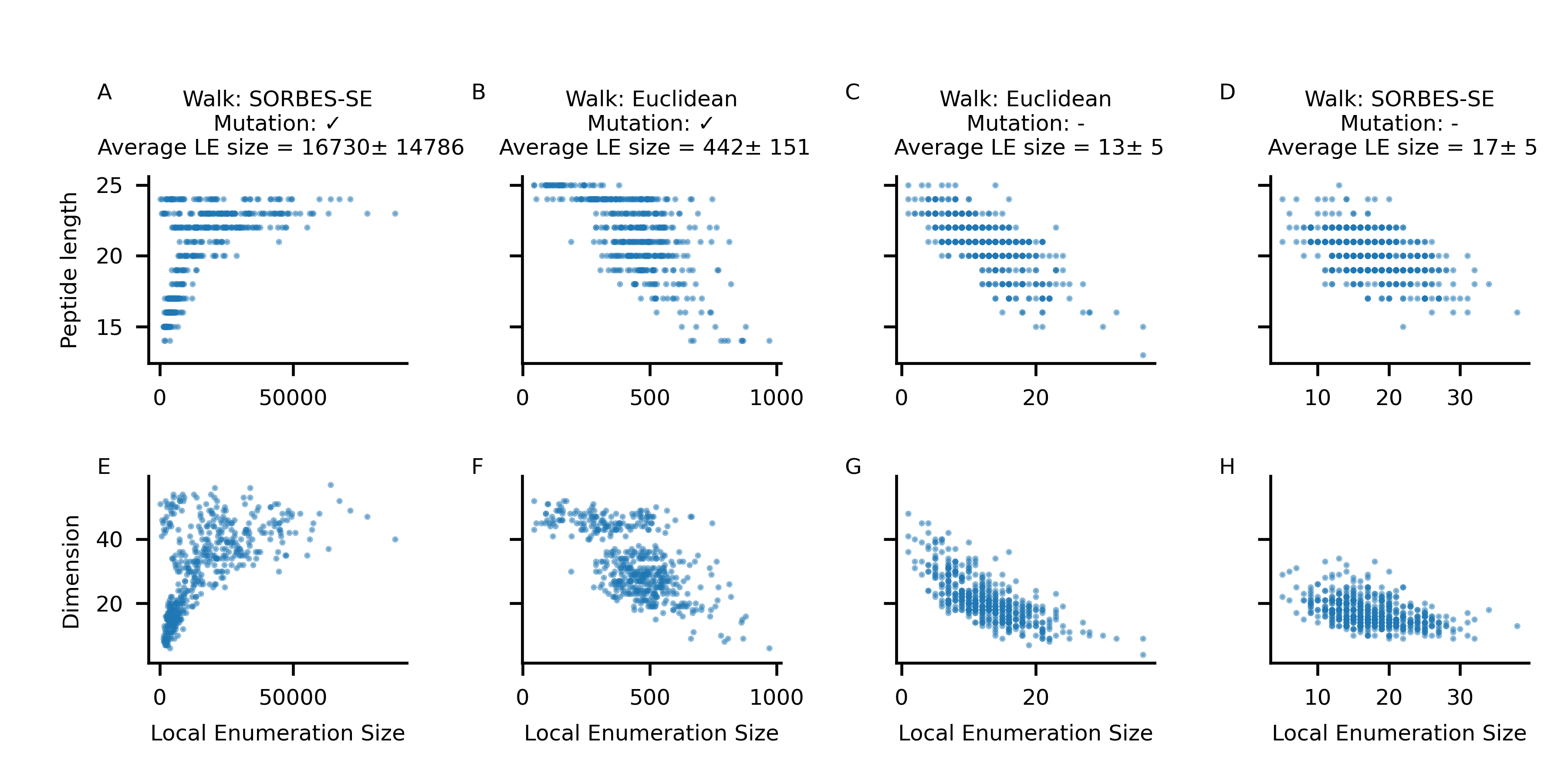}
\end{center}
\caption{\textbf{Relationships between Local Enumeration (LE) result size and sequence properties under different ablations of LE-BO walk and mutation strategies}. Each panel (\textbf{A–H}) shows scatter plots comparing LE size with either peptide length (top row) or $\kappa$-stable dimension (bottom row) for four experimental LE-BO conditions: SORBES-SE walk with mutation (default LE-BO implementation) (\textbf{A, E}), Euclidean walk with mutation (\textbf{B, F}), Euclidean walk without mutation (\textbf{C, G}), and SORBES-SE walk without mutation (\textbf{D, H}). Titles report the mean ± standard deviation of the LE size for each condition. Together, these comparisons highlight how walk geometry and mutation choice influence the size and structure of local neighborhoods explored during Local Enumeration.}

  \label{fig:dimension-vs-neighborhoods}
\end{figure}

\begin{figure}[ht]
\begin{center}
\includegraphics[scale=1.0]{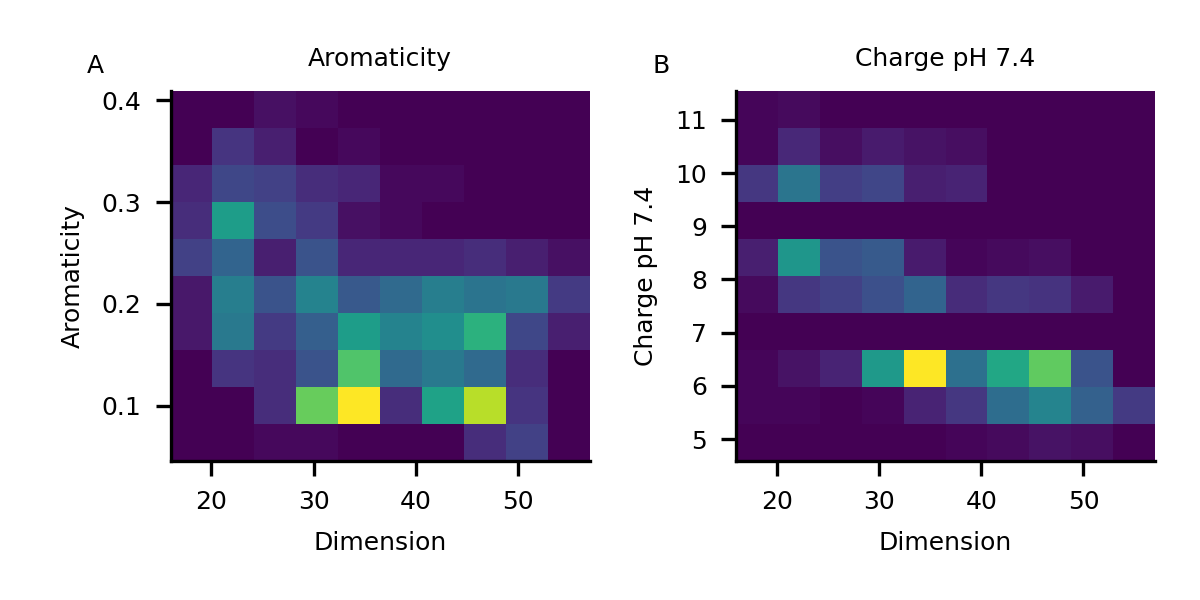}
\end{center}
\caption{\textbf{Relationship between peptide physicochemical properties and latent dimension}. 
(\textbf{A}) Aromaticity versus embedding dimension, showing that peptides with lower aromatic residue content tend to occupy higher-dimensional regions of the embedding space. 
(\textbf{B}) Net charge at pH~7.4 versus embedding dimension, highlighting a similar trend in which peptides with lower cationic charge are more common at higher dimensions. 
}

  \label{fig:dimension-vs-properites}
\end{figure}

\section{Wet-lab validation}
\label{wet-lab-methods}

\subsection{Experimental setup}

\subsection{Peptide synthesis and characterization}
Peptides were synthesized on an automated peptide synthesizer (Symphony X, Gyros Protein Technologies) by standard Fmoc-based solid-phase peptide synthesis (SPPS) on Fmoc-protected amino acid--Wang resins (100--200 mesh). The following preloaded resins were employed with their respective loading capacities (100~$\mu$mol scale): Fmoc-Asn(Trt)-Wang Resin (0.510~mmol~g$^{-1}$), Fmoc-His(Trt)-Wang Resin (0.480~mmol~g$^{-1}$), Fmoc-Leu-Wang Resin (0.538~mmol~g$^{-1}$), Fmoc-Lys(Boc)-Wang Resin (0.564~mmol~g$^{-1}$), Fmoc-Phe-Wang Resin (0.643~mmol~g$^{-1}$), Fmoc-Thr(tBu)-Wang Resin (0.697~mmol~g$^{-1}$), Fmoc-Trp(Boc)-Wang Resin (0.460~mmol~g$^{-1}$), Fmoc-Tyr(tBu)-Wang Resin (0.520~mmol~g$^{-1}$). In addition to preloaded resins, standard Fmoc-protected amino acids were employed for chain elongation, including: Fmoc-Ala-OH, Fmoc-Cys(Trt)-OH, Fmoc-Glu(OtBu)-OH, Fmoc-Phe-OH, Fmoc-Gly-OH, Fmoc-His(Trt)-OH, Fmoc-Ile-OH, Fmoc-Lys(Boc)-OH, Fmoc-Leu-OH, Fmoc-Met-OH, Fmoc-Asn(Trt)-OH, Fmoc-Arg(Pbf)-OH, Fmoc-Ser(tBu)-OH, Fmoc-Thr(tBu)-OH, Fmoc-Val-OH, Fmoc-Trp(Boc)-OH, and Fmoc-Tyr(tBu)-OH. N,N-Dimethylformamide (DMF) was used as the primary solvent throughout synthesis. Stock solutions included: 500~mmol~L$^{-1}$ Fmoc-protected amino acids in DMF, a coupling mixture of HBTU (450~mmol~L$^{-1}$) and N-methylmorpholine (NMM, 900~mmol~L$^{-1}$) in DMF, and 20\% (v/v) piperidine in DMF for Fmoc deprotection. After synthesis, peptides were deprotected and cleaved from the resin using a cleavage cocktail of trifluoroacetic acid (TFA)/triisopropylsilane (TIS)/dithiothreitol (DTT)/water (92.8\% v/v, 1.1\% v/v, 0.9\% w/v, 4.8\% w/w) for 2.5~hours with stirring at room temperature. The resin was removed by vacuum filtration, and the peptide-containing solution was collected. Crude peptides were precipitated with cold diethyl ether and incubated for 20~min at $-20$~°C, pelleted by centrifugation, and washed once more with cold diethyl ether. The resulting pellets were dissolved in 0.1\% (v/v) aqueous formic acid and incubated overnight at $-20$~°C, followed by lyophilization to obtain dried peptides. For characterization, peptides were dried, reconstituted in 0.1\% formic acid, and quantified spectrophotometrically. Peptide separations were performed on a Waters XBridge C$_{18}$ column (4.6~$\times$~50~mm, 3.5~$\mu$m, 120~Å) at room temperature using a conventional high-performance liquid chromatography (HPLC) system. Mobile phases were water with 0.1\% formic acid (solvent A) and acetonitrile with 0.1\% formic acid (solvent B). A linear gradient of 1--95\% B over 7~min was applied at 1.5~mL~min$^{-1}$. UV detection was monitored at 220~nm. Eluates were analyzed on Waters SQ Detector 2 with electrospray ionization in positive mode. Full scan spectra were collected over m/z 100--2,000. Selected Ion Recording (SIR) was used for targeted peptides. Source conditions were capillary voltage 3.0~kV, cone voltage 25--40~V, source temperature 120~°C, and desolvation temperature 350~°C. Mass spectra were processed with MassLynx software. Observed peptide masses were compared with theoretical values, and quantitative analysis was based on integrated SIR peak areas.

\subsection{Bacterial Strains and Growth Conditions}
The bacterial panel utilized in this study consisted of the following pathogenic strains (see Table~\ref{tab:bacterial_strains}: \textit{Acinetobacter baumannii} ATCC 19606; \textit{A. baumannii} ATCC BAA-1605 (resistant to ceftazidime, gentamicin, ticarcillin, piperacillin, aztreonam, cefepime, ciprofloxacin, imipenem, and meropenem); \textit{Escherichia coli} ATCC 11775; \textit{E. coli} AIC221 [MG1655 phnE\_2::FRT, polymyxin-sensitive control]; \textit{E. coli} AIC222 [MG1655 pmrA53 phnE\_2::FRT, polymyxin-resistant]; \textit{E. coli} ATCC BAA-3170 (resistant to colistin and polymyxin B); \textit{Enterobacter cloacae} ATCC 13047; \textit{Klebsiella pneumoniae} ATCC 13883; \textit{K. pneumoniae} ATCC BAA-2342 (resistant to ertapenem and imipenem); \textit{Pseudomonas aeruginosa} PAO1; \textit{P. aeruginosa} PA14; \textit{P. aeruginosa} ATCC BAA-3197 (resistant to fluoroquinolones, $\beta$-lactams, and carbapenems); \textit{Salmonella enterica} ATCC 9150; \textit{S. enterica} subsp. \textit{enterica} Typhimurium ATCC 700720; \textit{Bacillus subtilis} ATCC 23857; \textit{Staphylococcus aureus} ATCC 12600; \textit{S. aureus} ATCC BAA-1556 (methicillin-resistant); \textit{Enterococcus faecalis} ATCC 700802 (vancomycin-resistant); and \textit{Enterococcus faecium} ATCC 700221 (vancomycin-resistant). \textit{P. aeruginosa} strains were propagated on Pseudomonas Isolation Agar, whereas all other species were maintained on Luria-Bertani (LB) agar and broth. For each assay, cultures were initiated from single colonies, incubated overnight at 37~°C, and subsequently diluted 1:100 into fresh medium to obtain cells in mid-logarithmic phase.

\begin{table}
\centering
\caption{Bacterial strains used for experimental validation of antimicrobial peptide libraries. Strains marked with MDR are multidrug-resistant clinical isolates.}
\label{tab:bacterial_strains}
\resizebox{0.6\linewidth}{!}{%
    \begin{tabular}{@{}ll@{}}
        \toprule
        \textbf{ID} & \textbf{Bacterial Strain} \\ \midrule
        AB1 & \textit{A. baumannii} ATCC 19606 \\
        AB2$_{\text{MDR}}$ & \textit{A. baumannii} ATCC BAA-1605 \\
        EC1 & \textit{E. cloacae} ATCC 13047 \\
        EC2 & \textit{E. coli} ATCC 11775 \\
        EC3 & \textit{E. coli} AIC221 \\
        EC4$_{\text{MDR}}$ & \textit{E. coli} AIC222 \\
        EC5$_{\text{MDR}}$ & \textit{E. coli} ATCC BAA-3170 \\
        KP1 & \textit{K. pneumoniae} ATCC 13883 \\
        KP2$_{\text{MDR}}$ & \textit{K. pneumoniae} ATCC BAA-2342 \\
        PA1 & \textit{P. aeruginosa} PAO1 \\
        PA2 & \textit{P. aeruginosa} PA14 \\
        PA3$_{\text{MDR}}$ & \textit{P. aeruginosa} ATCC BAA-3197 \\
        SE1 & \textit{S. enterica }ATCC 9150 \\
        SE2 & \textit{S. enterica Typhimurium} ATCC 700720 \\
        BS1 & \textit{B. subtilis} ATCC 23857 \\
        SA1 & \textit{S. aureus} ATCC 12600 \\
        SA2$_{\text{MDR}}$ & \textit{S. aureus} ATCC BAA-1556 \\
        EFS1$_{\text{MDR}}$ & \textit{E. faecalis} ATCC 700802 \\
        EFU1$_{\text{MDR}}$ & \textit{E. faecium} ATCC 700221 \\
        \bottomrule
    \end{tabular}%
}
\end{table}

\subsection{Minimal Inhibitory Concentration (MIC) determination}
MIC values were established using the standard broth microdilution method in untreated 96-well plates. Test peptides were dissolved in sterile water and prepared as twofold serial dilutions ranging from 1 to 64~$\mu$mol~L$^{-1}$. Each dilution was combined at a 1:1 ratio with LB broth containing $4 \times 10^6$~CFU~mL$^{-1}$ of the target bacterial strain. Plates were incubated at 37~°C for 24~h, and the MIC was defined as the lowest peptide concentration that completely inhibited visible bacterial growth. All experiments were conducted independently in triplicate.

\subsection{Detailed wet-lab validation results}

We experimentally validated 3 types of peptide sets:
\begin{itemize}
\itemsep-0.35em
\item \textbf{Prototypes (P1, P2)}: Known AMPs that serve as anchors for geodesic interpolation. These are not novel peptides and therefore not counted among the 29 experimentally evaluated peptides.
\item \textbf{Seeds (S)}: Novel peptides discovered via PoGS through geodesic interpolation between prototype pairs. We identified 4 such seeds for experimental validation.
\item \textbf{Analogs ($A_{1} - A_{n}$)}: Novel peptides derived from each seed through LE-BO optimization. We generated a total of 25 analogs across the 4 seed families.
\end{itemize}

The 25 analogs generated for the experimental validation were distributed as follows:
\begin{itemize}
\itemsep-0.35em
\item Seed-1 → 7 analogs (LE-BO-1-1 -LE-BO-1-7)
\item Seed-2 → 6 analogs (LE-BO-2-1 -LE-BO-2-6)
\item Seed-3 → 6 analogs (LE-BO-3-1 -LE-BO-3-6)
\item Seed-4 → 6 analogs (LE-BO-4-1 -LE-BO-4-6)
\end{itemize}

\begin{table}[h]
\centering
\caption{Minimum inhibitory concentration (MIC, in $\mu$mol~L$^{-1}$) values and peptide sequences for PepCompass-generated peptides tested against bacterial pathogen panel. `-` indicates MIC $>$64~$\mu$mol~L$^{-1}$. Strain IDs correspond to Table~\ref{tab:bacterial_strains}.}
\label{tab:mic_values}
\resizebox{\linewidth}{!}{%
\begin{tabular}{llccccccccccccccccccc}
\toprule
\textbf{ID} & \textbf{Sequence} & \textbf{AB1} & \textbf{AB2} & \textbf{EC1} & \textbf{EC2} & \textbf{EC3} & \textbf{EC4} & \textbf{EC5} & \textbf{KP1} & \textbf{KP2} & \textbf{PA1} & \textbf{PA2} & \textbf{PA3} & \textbf{SE1} & \textbf{SE2} & \textbf{BS1} & \textbf{SA1} & \textbf{SA2} & \textbf{EFS1} & \textbf{EFU1} \\
\midrule
Prototype-1-a & ILRWKKRKLVWKR & 64 & - & - & 16 & 64 & 32 & - & - & - & 8 & 32 & - & - & - & - & 32 & 32 & - & - \\
Prototype-1-b & FLILRWSRFARVLL & 8 & - & - & 64 & 8 & 16 & - & - & - & - & - & - & - & - & - & 16 & 8 & 32 & 4 \\
Seed-1 & FLYKWWIRIGRLKL & 1 & 1 & - & 32 & 4 & 8 & 16 & - & - & - & - & - & 32 & 8 & 16 & - & - & 64 & 64 \\
LE-BO-1-1 & RYAKINLRTAWRKLKWLIKKVMKKW & 4 & 4 & 64 & 8 & 4 & 4 & 4 & 8 & 32 & 8 & 4 & 4 & 2 & 2 & - & - & 32 & - & - \\
LE-BO-1-2 & RYAKINLRTAWRKLKWLIKKVMKWW & 8 & 8 & 64 & 8 & 4 & 16 & 8 & 8 & 8 & 8 & 8 & 8 & 4 & 4 & 64 & 16 & 32 & - & - \\
LE-BO-1-3 & RKANLKSRYAWLKLRKLIKALIAWK & 2 & 4 & - & 8 & 4 & 4 & 4 & 4 & 8 & 8 & 4 & 4 & 2 & 2 & - & 32 & 64 & - & - \\
LE-BO-1-4 & RKANLKSRYAWLKLRKLIKALVAWK & 1 & 16 & 32 & 4 & 2 & 4 & 4 & 4 & 8 & 8 & 4 & 4 & 2 & 2 & - & 32 & - & - & - \\
LE-BO-1-5 & RKANLKSRYAWLKLRKLIKAVILWK & 4 & 8 & - & 16 & 8 & 8 & 4 & 16 & 8 & 8 & 4 & 8 & 4 & 2 & - & - & - & - & - \\
LE-BO-1-6 & RKANLKTRYAWLKLRKLIKAVVNWK & 1 & 2 & - & 16 & 4 & 8 & 4 & 4 & 8 & 8 & 4 & 2 & 2 & 2 & - & 64 & - & - & - \\
LE-BO-1-7 & RKANLKIRYAWLKLRNLIKAAINWK & 1 & 1 & - & 4 & 4 & 4 & 2 & 2 & 2 & 8 & 8 & 16 & 2 & 4 & - & - & 16 & - & - \\
Prototype-2-a & KFWARGRKPWKLAIQILK & 4 & - & - & 8 & 4 & 2 & - & - & - & - & 16 & - & - & - & - & - & - & - & 4 \\
Prototype-2-b & ILRWKKRWKVWLR & 8 & - & - & 2 & 2 & 1 & - & - & - & 2 & 8 & - & - & - & - & 16 & 16 & - & - \\
Seed-2 & KFRNRHRWKFKLIFRN & 4 & 8 & - & 16 & 16 & 32 & 8 & - & - & 8 & 8 & 16 & 8 & 4 & - & - & - & - & 16 \\
LE-BO-2-1 & NRRKYLRYWLKKLLRKILKAAINAW & 8 & 8 & - & 16 & 4 & 16 & 16 & 4 & 8 & 8 & 16 & 8 & 4 & 4 & - & - & - & - & - \\
LE-BO-2-2 & KARIKLYYRWKLKLKWLLKAMIKAW & 8 & 8 & - & 16 & 4 & 16 & 8 & 8 & 4 & 8 & 16 & 64 & 8 & 32 & 32 & - & - & - & - \\
LE-BO-2-3 & KARIKLRYRWKLKLKWLLKMAAMAW & 1 & 1 & 64 & 2 & 1 & 2 & 2 & 1 & 1 & 2 & 1 & 4 & 1 & 1 & - & - & - & - & - \\
LE-BO-2-4 & KARIKLRYRWKLKLKWLLKAMMAAW & 2 & 1 & - & 2 & 2 & 2 & 4 & 4 & 8 & 4 & 2 & 4 & 4 & 4 & - & - & - & - & - \\
LE-BO-2-5 & KARIKLRYRWKLKLKWLLKMAWAAW & 32 & 64 & - & 64 & 32 & 64 & 16 & 16 & 64 & 16 & 64 & - & - & - & - & - & - & - & - \\
LE-BO-2-6 & KARIKLRYRWRLKLKWLLKAMFAW & 4 & 16 & - & 16 & 8 & 16 & 16 & 16 & - & - & 16 & 16 & 16 & 64 & - & - & - & - & - \\
Prototype-3-a & ILRWKFRKWVWLR & 4 & - & - & 2 & 8 & 2 & - & - & - & 8 & 8 & - & - & - & - & 16 & 32 & - & - \\
Prototype-3-b & WRHKSLWIRKYLKNLALLA & 0.78 & - & - & 3.12 & 0.78 & 1.56 & - & 25 & - & 6.25 & - & - & - & - & - & 50 & 25 & - & 3.12 \\
Seed-3 & KKYWLIRKWIRLWFLT & 16 & 32 & - & - & 16 & 32 & 8 & 64 & 64 & - & 32 & 64 & 16 & 32 & 16 & 64 & - & - & - \\
LE-BO-3-1 & KKARNLRKWAYLKYRLKLKILAINW & 32 & 64 & - & - & 8 & - & 64 & 64 & - & 64 & - & 64 & - & - & - & - & - & - & - \\
LE-BO-3-2 & KKARNLRWKAYLKYRLKLKILAWNK & 8 & 8 & - & 64 & 64 & - & - & - & - & - & - & 32 & 64 & - & - & - & - & - & - \\
LE-BO-3-3 & KKRRKLTLKLKLKKLLRLL & 2 & 1 & 64 & 2 & 2 & 4 & 4 & 64 & 8 & 4 & 1 & 4 & 4 & 8 & - & - & - & - & - \\
LE-BO-3-4 & KKARNLRKWAYLKYRLKLKILAANW & 32 & 32 & - & - & 4 & - & 16 & 64 & - & - & 16 & 32 & 64 & - & - & - & - & - & - \\
LE-BO-3-5 & KLRISLKARWRLWKMYVLKWKAAIW & - & - & - & 32 & 2 & 16 & 64 & 16 & 16 & 64 & - & - & 64 & - & - & - & - & - & - \\
LE-BO-3-6 & KKRRILRKWTRLWKKLLELMAAWFH & 8 & - & - & 32 & 4 & 4 & 8 & - & 8 & 32 & 8 & 8 & 8 & 16 & - & - & - & - & - \\
Prototype-4-a & LIRWKVRWLAFRRL & 4 & - & - & 4 & 8 & 8 & - & 16 & - & 16 & 16 & - & - & - & - & - & 64 & 32 & 1 \\
Prototype-4-b & KYCWRWFKLLFKKL & 4 & - & - & 4 & 2 & 2 & - & 8 & - & - & 32 & - & - & - & - & 16 & 16 & - & 4 \\
Seed-4 & KYCRRFRWLTFRWL & 64 & 32 & - & 16 & 8 & 8 & 16 & 16 & 32 & 64 & 64 & 64 & 16 & 16 & 64 & - & - & - & - \\
LE-BO-4-1 & KRARNYYRWKLWKKLKILLKAAMAW & 2 & 2 & - & 8 & 4 & 4 & 8 & 2 & 4 & 8 & 8 & 8 & 8 & 8 & - & 32 & - & - & - \\
LE-BO-4-2 & KRIRKLRILRTWKWWKLEMAAAFH & 16 & 4 & - & 64 & 8 & 16 & 32 & 32 & - & 64 & 16 & 32 & 4 & 32 & 64 & - & - & - & 32 \\
LE-BO-4-3 & KRLRKLRILRTWKWWKLEMAAAFH & 16 & 16 & - & 32 & 16 & 16 & 16 & 16 & - & 64 & 32 & 16 & 2 & 8 & - & - & - & - & 64 \\
LE-BO-4-4 & KRIRKLRILRTWKWWKLEMAMAFH & 16 & 8 & - & 64 & 16 & 4 & 8 & 16 & - & 64 & 16 & 64 & 4 & 8 & 64 & - & - & - & - \\
LE-BO-4-5 & KRLRKLRILRTWKWWKLEMAAAFHY & 32 & 16 & - & - & 8 & 16 & 16 & 64 & - & - & 16 & 16 & 4 & 16 & - & - & - & - & - \\
LE-BO-4-6 & KRLRKLRILRTWKWWKLEMAAAFHF & 4 & 4 & - & 16 & 8 & 4 & 16 & 16 & - & - & - & 16 & 4 & - & - & - & - & - & 64 \\
\bottomrule
\end{tabular}%
}
\end{table}

\begin{figure}[h]
    \centering
    \includegraphics[scale=0.8125]{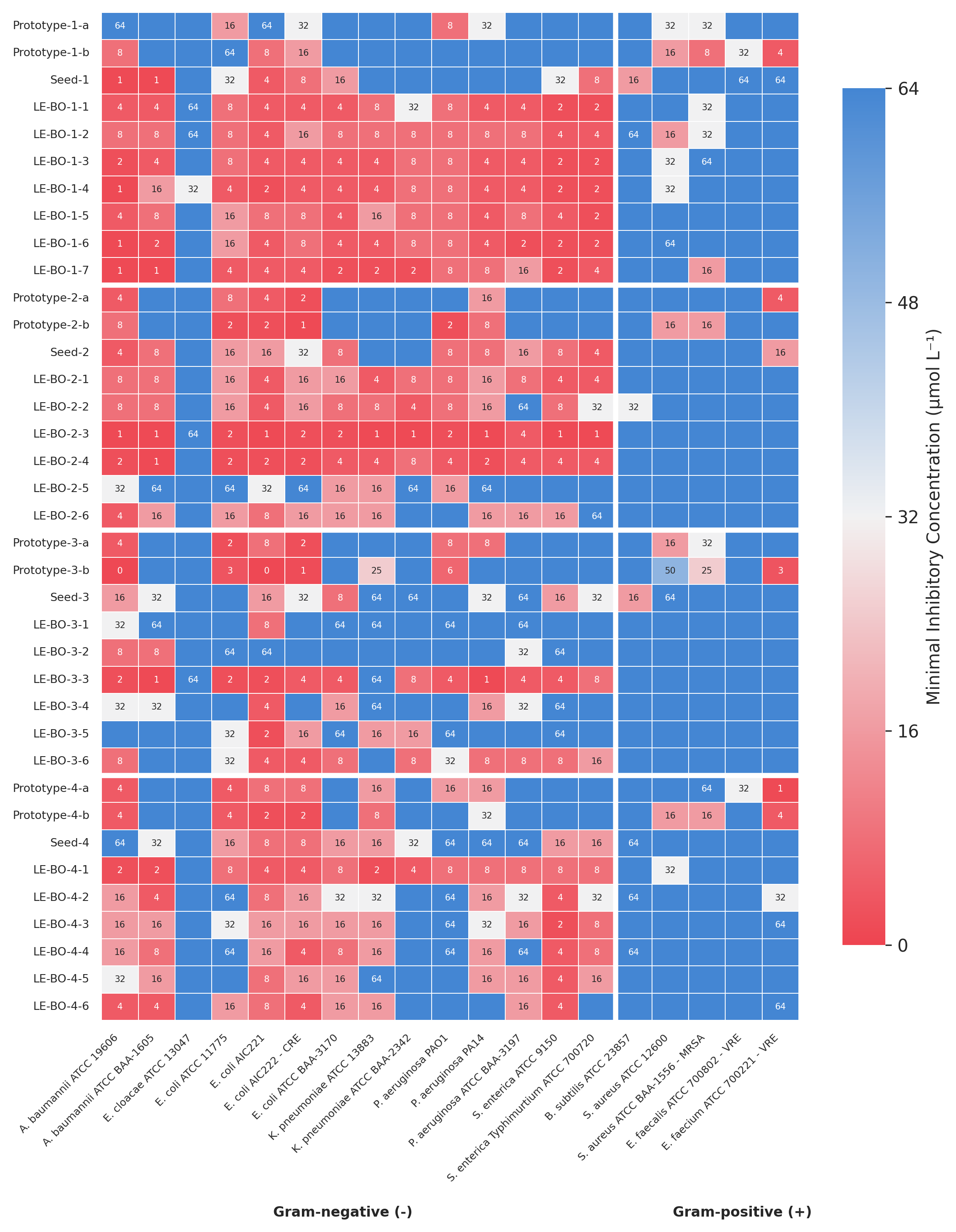}
    \caption{\textbf{Minimum inhibitory concentration profiles of antimicrobial peptide libraries against Gram-negative and Gram-positive bacterial pathogens}. MIC values (in $\mu$mol~L$^{-1}$) for 37 peptide sequences evaluated against 19 bacterial strains. Peptides are stratified by seed family (seed-1 through seed-4), comprising parental prototype sequences and corresponding analogs generated via LE-BO. The bacterial panel encompasses 14 Gram-negative strains including carbapenem-resistant \textit{Enterobacteriaceae} (CRE: \textit{E.~coli} AIC222 and ATCC BAA-3170), extended-spectrum $\beta$-lactamase-producing \textit{K.~pneumoniae} (ATCC BAA-2342), and fluoroquinolone-resistant \textit{P.~aeruginosa} (ATCC BAA-3197), alongside 5 Gram-positive strains including methicillin-resistant \textit{S.~aureus} (MRSA: ATCC BAA-1556) and vancomycin-resistant \textit{Enterococcus} species (VRE: \textit{E.~faecalis} ATCC 700802, \textit{E.~faecium} ATCC 700221).}
    
    \label{fig:mic-heatmap}
\end{figure}

\begin{figure}[h]
    \centering
    \includegraphics[scale=1.0]{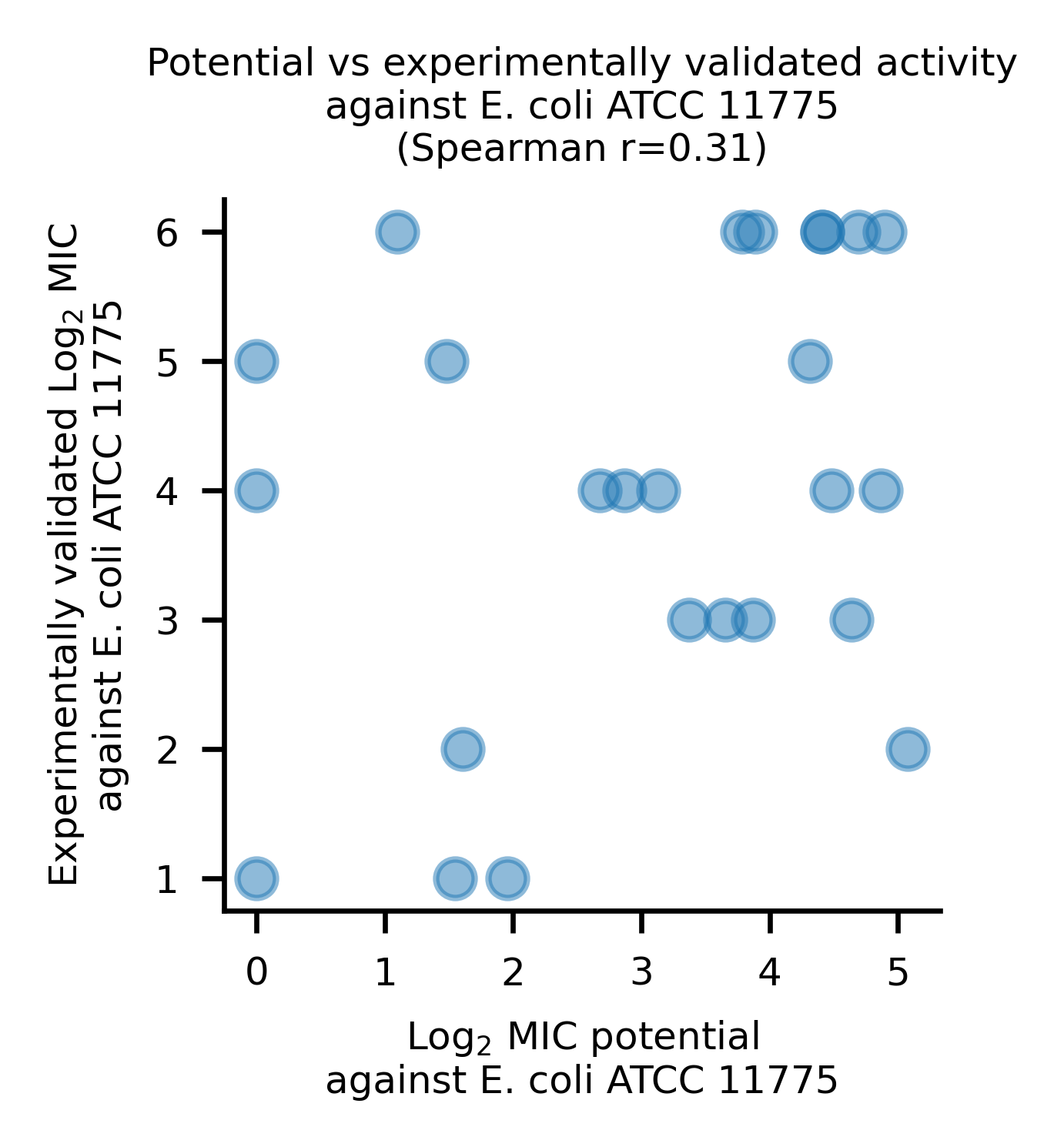}
    \caption{\textbf{Correlation between APEX potential and experimental antimicrobial activity for 29 validated peptides.} LE-BO–predicted Log$_2$ MIC potential shows a meaningful positive association with experimentally measured Log$_2$ MIC against \textit{E.~coli} ATCC~11775 (Spearman $r = 0.31$).}
    \label{fig:mic-heatmap-antex}
\end{figure}

\begin{figure}[ht]
    \centering
    \includegraphics[width=\linewidth]{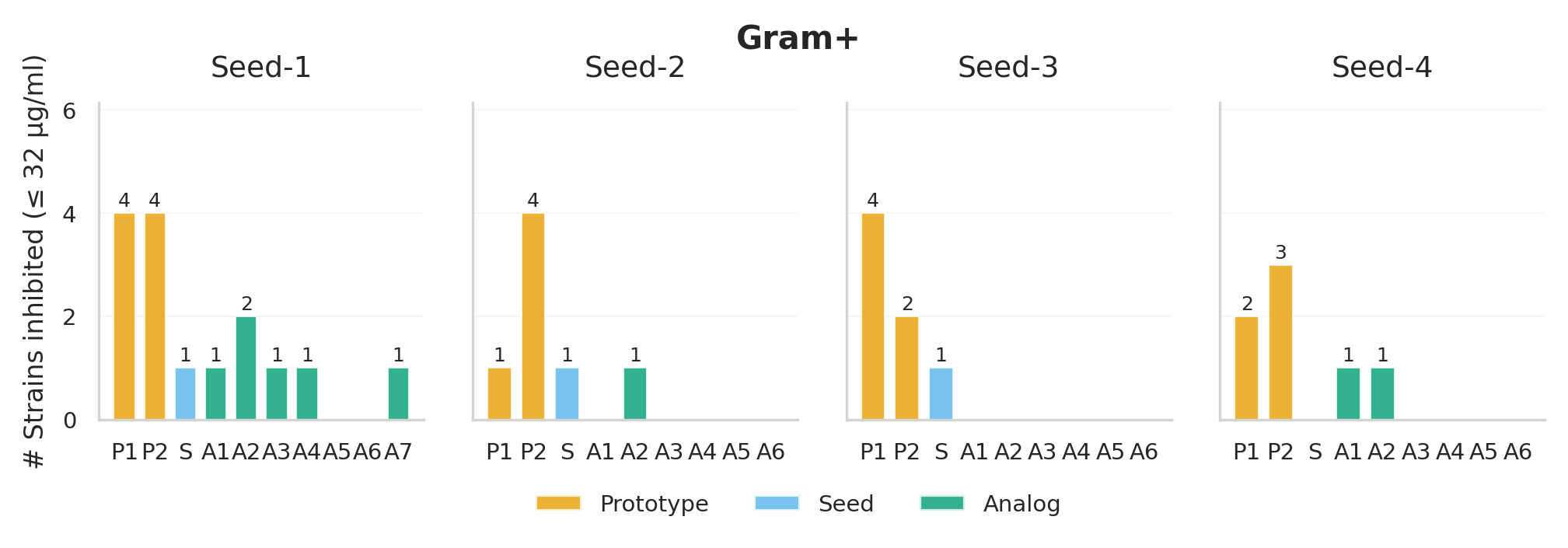}
\caption{\textbf{Antimicrobial activity against Gram-positive bacterial strains by seed family.} Bar chart shows the number of Gram-positive strains (out of 5 total) against which each peptide achieved MIC $\leq$ 32 $\mu$g/ml, organized by seed family (Seed-1 through Seed-4). Within each family, results are shown for prototypes (P1, P2; orange), seeds (S; blue), and analogs (A1-A7; green). Numbers above bars indicate the count of active strains for each peptide.}    \label{fig:gramplus}
\end{figure}

\begin{figure}[ht]
    \centering
    \includegraphics[width=\linewidth]{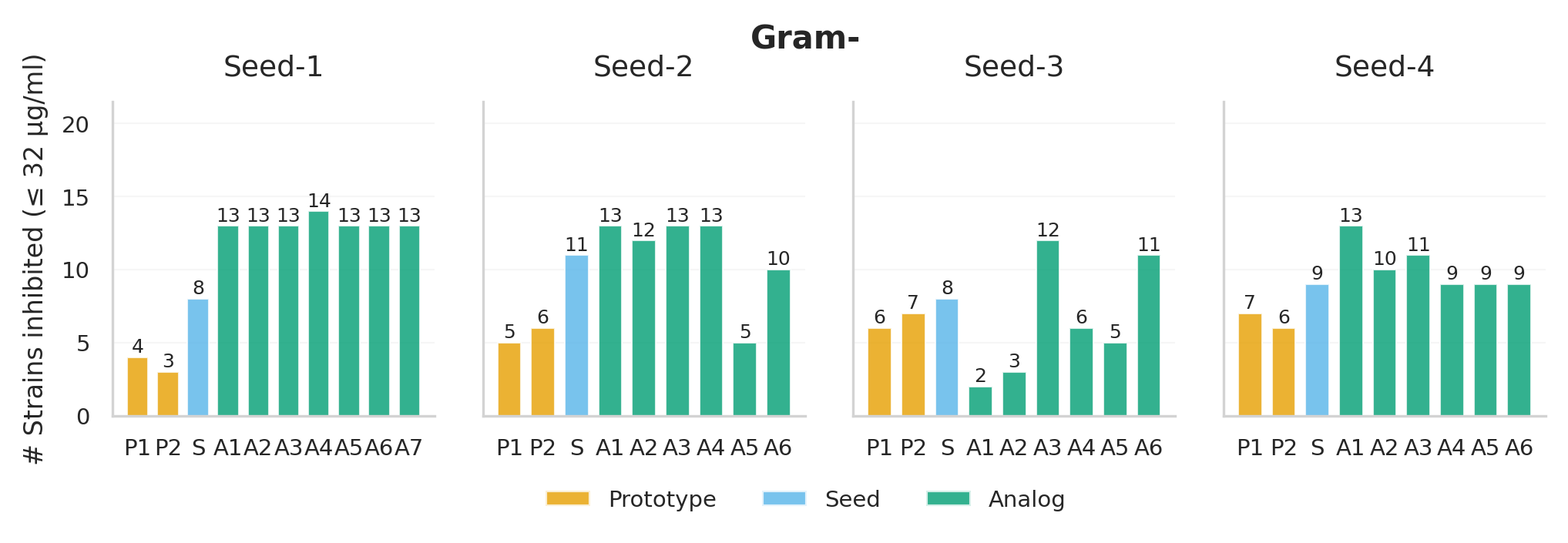}
    \caption{\textbf{Antimicrobial activity against Gram-negative bacterial strains by seed family.} Bar chart shows the number of Gram-negative strains (out of 14 total) against which each peptide achieved MIC $\leq$ 32 $\mu$g/ml, organized by seed family (Seed-1 through Seed-4). Within each family, results are shown for prototypes (P1, P2; orange), seeds (S; blue), and analogs (A1-A7; green). Numbers above bars indicate the count of active strains for each peptide.}
    \label{fig:gramminus}
\end{figure}

\end{document}